\def\R{\mathbb{R}} % Reals
\def\Id{\mathbb{I}} % Indicator
\DeclareMathOperator*{\col}{col}
\DeclareMathOperator*{\rank}{rank}
\DeclareMathOperator*{\spann}{span}
\DeclareMathOperator*{\Tr}{tr}
\newcommand{\Acal}{\mathcal{A}}
\newcommand{\Ncal}{\mathcal{N}}
\newcommand{\Ocal}{\mathcal{O}}
\newcommand{\Scal}{\mathcal{S}}
\newcommand{\Pcal}{\mathcal{P}}
\newcommand{\Jcal}{\mathcal{J}}
\newcommand{\Ical}{\mathcal{I}}
\newtheorem{theorem}{Theorem}[section]
\newtheorem{lemma}[theorem]{Lemma}
\newtheorem{assumption}{Assumption}
\title{A spectral method for multi-view subspace learning using the product of projections}
\author{Renat Sergazinov}
\address{Department of Statistics, Texas A\&M University, College Station, TX}
\author{Armeen Taeb}
\address{Department of Statistics, University of Washington, Seattle, WA}
\author{Irina Gaynanova$^{*}$}
\address{Department of Biostatistics, University of Michigan, Ann Arbor, MI}
\thanks{$^*$Corresponding author: irinagn@umich.edu.}
\begin{document}

\maketitle

\begin{abstract}

Multi-view data provides complementary information on the same set of observations, with multi-omics and multimodal sensor data being common examples. Analyzing such data typically requires distinguishing between shared (joint) and unique (individual) signal subspaces from noisy, high-dimensional measurements. Despite many proposed methods, the conditions for reliably identifying joint and individual subspaces remain unclear. We rigorously quantify these conditions, which depend on the ratio of the signal rank to the ambient dimension, principal angles between true subspaces, and noise levels. Our approach characterizes how spectrum perturbations of the product of projection matrices, derived from each view’s estimated subspaces, affect subspace separation. Using these insights, we provide an easy-to-use and scalable estimation algorithm. In particular, we employ rotational bootstrap and random matrix theory to partition the observed spectrum into joint, individual, and noise subspaces. Diagnostic plots visualize this partitioning, providing practical and interpretable insights into the estimation performance. In simulations, our method estimates joint and individual subspaces more accurately than existing approaches. Applications to multi-omics data from colorectal cancer patients and nutrigenomic study of mice demonstrate improved performance in downstream predictive tasks.
\end{abstract}

% \begin{keywords}
% data integration, random matrix theory, rotational bootstrap, spectral analysis, subspace estimation 
% \end{keywords}

\section{Introduction}

Multi-view data provides complementary information on the same set of observations, with multi-omics and multimodal sensor data being common examples. Since each data source or view is potentially high-dimensional, it is of interest to identify a low-dimensional representation of the signal and further distinguish whether the signal is shared (joint) across views or view-specific (individual). For example, in \S~\ref{sec:colorectal}, we consider colorectal cancer data with RNAseq and miRNA views. The goal is to identify joint signals from these two views and validate whether these signals predict cancer subtypes \citep{guinney2015consensus}.

Canonical correlation analysis \citep{hotelling1992relations} is a time-tested approach for finding associated signals across views, but it cannot distinguish individual signals. To address this limitation, many methods have been developed to explicitly model joint and individual signals \citep{van2009structured, lock2013joint, zhou2015group, yang2016non, feng2018angle, shu2019d, gaynanova2019structural, park2020integrative, murden2022interpretive, prothero2024data, xiaoSparseIntegrativePrincipal2024}.

Despite significant methodological advances, a crucial theoretical question remains: when can joint and individual signals be reliably identified from multi-view data? To answer this question, we start with the two-view case and adopt the definition of joint and individual signal subspaces following \citet{lock2013joint, zhou2015group, feng2018angle}. In this framework, the joint signals correspond to the same subspace and are orthogonal to individual signals, whereas the individual signals do not intersect (albeit may not be orthogonal).
Conceptually, it is clear that (1) separation of joint and individual signals becomes difficult when the angle between individual subspaces gets small; (2) the total rank of the signal must be small to avoid noisy directions being mistaken for the joint due to random overlaps. However, the precise quantification of these concepts has been lacking. 
\subsection{Our contributions}
Our main contribution is an explicit characterization of conditions under which the joint and individual signals are identifiable, which depend on the signal rank, noise level, and principal angles between individual subspaces. Our primary insight is that the spectrum perturbations of the \emph{product of projection matrices}, derived from each view’s estimated subspaces, affect subspace separation. Figure~\ref{fig:sing-val-clustering} provides a visual illustration of this theoretical result under varying subspace alignment and noise levels. The observed spectrum of the product of projections, which corresponds to the cosine of principal angles between the two subspaces, exhibits a clustering structure that corresponds exactly to joint and individual subspaces. Specifically, the largest singular values close to 1 correspond to joint subspace, the middle singular values correspond to non-orthogonal individual subspaces, and the smallest singular values correspond to orthogonal individual and noise directions. The bars indicate the observed spectrum, whereas the colored areas highlight the theoretically derived predictions on the spectrum perturbation in alignment with observed clustering. A lack of interval overlap supports the idea that joint subspaces are distinguishable from individual subspaces and noise.

\begin{figure}[!t]
    \centering
    \includegraphics[width=0.9\linewidth]{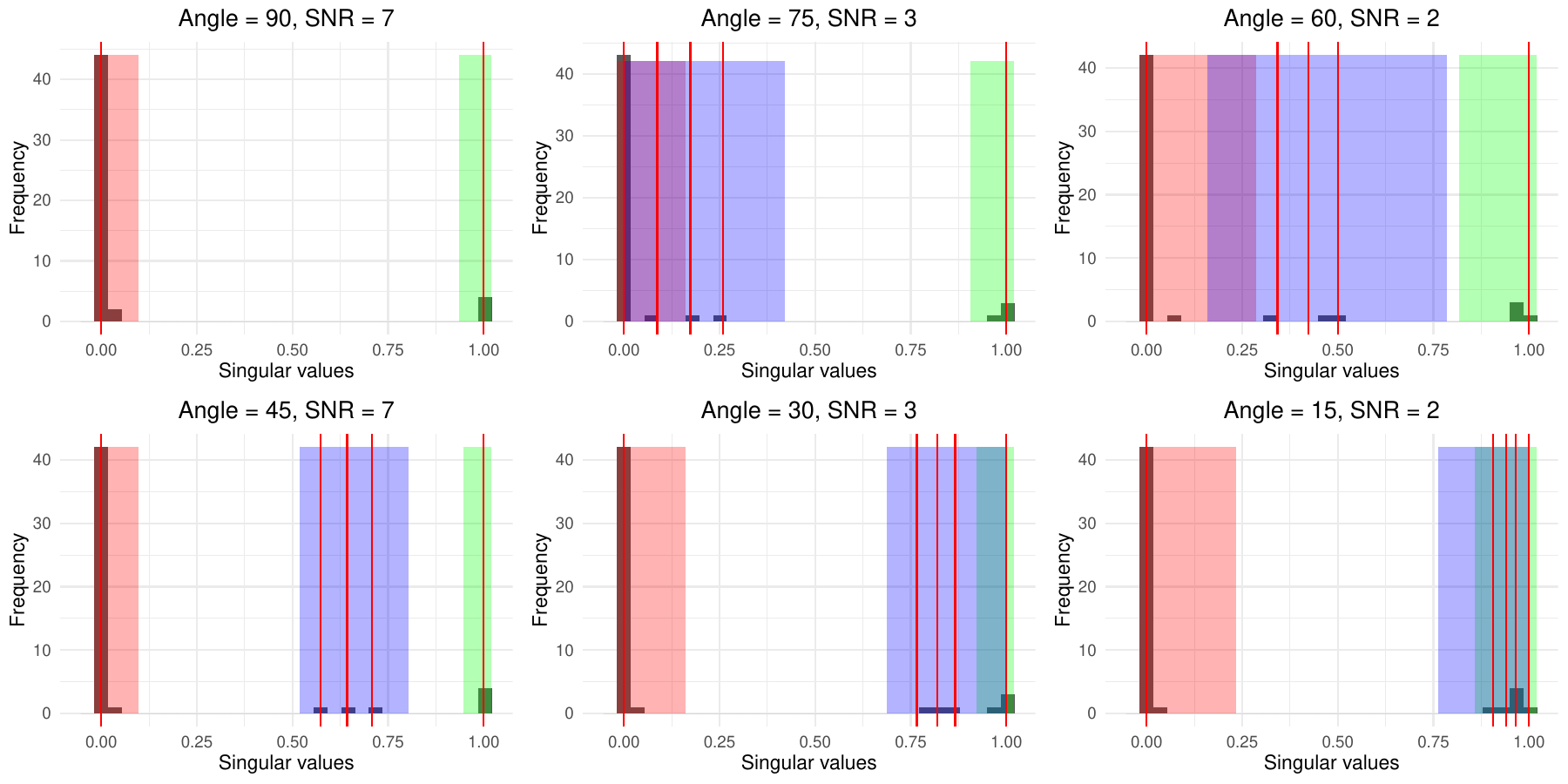}
    \caption{The alignment between the observed spectrum of the product of projections (grey histogram) and theoretical predictions from Theorem~\ref{theorem:main} (highlighted blocks) under varying angles between individual subspaces and signal-to-noise ratios (SNR). The theoretical intervals correspond to the \textcolor{green}{joint}, \textcolor{blue}{non-orthogonal individual}, and \textcolor{red}{orthogonal individual and noise} components. The vertical \textcolor{red}{red} lines show the singular values in the noiseless setting. %See \S~\ref{sec:simulation} for details of the setup.
    }
    \label{fig:sing-val-clustering}
\end{figure}

Motivated by new theoretical insights on the perturbation of the spectrum of the product of projections, we propose an estimation approach that quantifies these perturbations in practice with an easy-to-use and scalable algorithm. First, we develop a novel rotational bootstrap technique to bound the perturbations of the spectrum corresponding to joint and non-orthogonal individual subspaces (top two clusters). Second, we utilize results from random matrix theory to quantify the maximal alignment possible by chance, providing a bound on the bottom cluster (orthogonal individual and noisy subspace directions). We use these two bounds to obtain estimates for the joint and individual subspaces. Further, we generate diagnostic plots that visualize cluster partitioning with the two bounds overlaid (like the one in Figure~\ref{fig:sing-val-clustering}), providing practical and interpretable insights regarding the identifiability of joint and individual subspaces. 

In simulations, our method is competitive compared to those of \citet{lock2013joint,feng2018angle,gaynanova2019structural,shu2019d,park2020integrative}, particularly in terms of true and false discovery rates in estimating joint and individual subspaces. Additionally, we test our approach on colorectal cancer data and nutrigenomic data. The colorectal cancer data includes tumor samples with RNASeq and miRNA views, where we aim to identify joint components. We assess the quality of these components through their predictive power for cancer subtype classification. The nutrigenomic data consists of mice with distinct genotypes and diets, with RNASeq and liver lipid content as the two views. Here, we seek to identify joint and individual components and link them to genotype and diet labels, respectively. In both datasets, our approach achieves the best balance between parsimony and predictive accuracy.

% %Our method is implemented in the R package \at{XX}
% and all numerical results and figures can be reproduced using the code on \at{XX}.

%In summary, our main contributions are (1) novel theory; (2) a simple-to-use and interpretable method; (3) empirical results on synthetic and real data showing

\subsection{Related works}
 \citet{lock2013joint, zhou2015group, gaynanova2019structural, park2020integrative, yi2023hierarchical, prothero2024data, xiaoSparseIntegrativePrincipal2024} consider optimization-based approaches for estimating joint and individual subspaces. While effective, these methods can be computationally slow, particularly with high-dimensional data, and often rely on ad-hoc techniques for rank selection (such as permutation or bi-cross-validation). {\citet{xiaoSparseIntegrativePrincipal2024} is an exception in that it provides a rigorous theoretical analysis of estimation consistency for its optimization-based approach. However, its methodological framework is based on sparse principal component analysis and on estimating eigenvectors corresponding to features. In contrast, our work focuses on joint and individual subspaces, corresponding to samples, based on spectral methods. }\citet{feng2018angle}, \citet{shu2019d} { and \citet{chenTwostageLinkedComponent2022}} are spectral methods, like ours, and utilize singular value decomposition to derive joint and individual subspaces. \citet{shu2019d} extends canonical correlation analysis to account for individual components. However, the individual components are restricted to be orthogonal, leading to a different model from ours. \citet{feng2018angle} uses the spectrum of the average of projection matrices, which contains information about the principle angles between the two subspace views, to estimate joint and individual components. The method allows for non-orthogonal individual signals and is most similar to ours; however, it has certain limitations in comparison, as we discuss below. { \citet{chenTwostageLinkedComponent2022} consider a different data integration setting where data sets are linked by features rather than matched by samples, using a 2 $\times$ 2 factorial design. Similar to \citet{feng2018angle}, their method relies on the spectrum of the average of projection matrices for estimation.}

Conceptually, \citet{feng2018angle} { and \citet{chenTwostageLinkedComponent2022} consider} the average of projection matrices, while our method is the first to focus on the product of projection matrices. A key disadvantage of using the average of projection matrices is that its spectrum exhibits a provably smaller gap (by a factor of less than one-half in noiseless settings) between individual and joint components than the product. This results in a substantially worse clustering of singular values in practice, especially in moderate to low signal-to-noise regimes (see e.g, Figure~\ref{fig:ppd-diagnostic-simulations}
 in Appendix~\ref{sec:comparison_ajive}). The lack of clustering in the spectrum considered by \citet{feng2018angle} makes separating the joint directions from the remaining signals more challenging. See \S~\ref{sec:simulation} for more numerical comparison and Appendix~\ref{sec:comparison_ajive} for more technical details.
 
\section{Model}

\subsection{Notation}\label{sec:notation}

For $a, b \in \mathbb{R}$, let $a \wedge b = \min(a, b)$, $a \vee b = \max(a, b)$. For a matrix $A \in \mathbb{R}^{n \times p}$, write the compact singular value decomposition (SVD) as $A = U \Sigma V^T$, where $\Sigma = \text{diag}(\sigma_1(A), \sigma_2(A), \dots)$ with the singular values $\sigma_1(A) \geq \sigma_2(A) \geq \dots > 0$ in descending order. We denote $\sigma_{\min}(A) = \sigma_{\rank(A)}(A)$, $\sigma_{\max}(A) = \sigma_1(A)$ as the smallest and largest non-trivial singular values. We denote $\sigma(A)$ to be the collection $\{\sigma_1(A),\sigma_2(A), \dots,\sigma_{\rank(A)}(A)\}$. We use $\|A\|_2 = \sigma_1(A)$ to denote the spectral norm, $\|A\|_F$ to denote the Frobenius norm and $\col(A)$ to denote the column space of $A$. We write the subspaces in calligraphic, e.g., we say $\Acal \subset \R^{n}$ to denote a subspace of $\R^n$. For the projection onto a subspace $\Acal$, we write $P_{\Acal} \in \mathbb{R}^{n \times n}$. For the projection onto the column space of $A$, we use $P_{\col(A)}$, which can be defined as $P_A = U U^\top$. We denote the direct sum of two subspaces $\mathcal{A}$ and $\mathcal{B}$ by $\mathcal{A} \oplus \mathcal{B}$. We adopt the convention to denote the submatrix: $A_{[a:b,c:d]}$ represents the $a$-to-$b$-th row, $c$-to-$d$-th column of matrix $A$; we also use $A_{[a:b,:]}$ and $A_{[:,c:d]}$ to represent $a$-to-$b$-th full rows of $A$ and $c$-to-$d$-th full columns of $A$, respectively. { We use the
Bachman-Landau symbols $\mathcal{O}$ to describe the limiting behavior of a function (in terms of sample size and number of features). Further, we use $z \asymp 1$ to express $z = \mathcal{O}(1)$ and $1/z = \mathcal{O}(1)$. Finally, we use $z_1 \gtrsim z_2$ to mean that $z_1 \geq {c}z_2$ for some constant $c>0$.} 

\subsection{Setup}
Let $Y_1\in \R^{n \times p_1}, Y_2\in \R^{n \times p_2}$ be observed data matrices corresponding to two views. The rows of both matrices represent common $n$ subjects, and the columns correspond to the different feature sets collected for each view. For example, in the COAD data (see \S~\ref{sec:colorectal}), the first view corresponds to the RNASeq features and the second to the miRNA features. In the nutrigenomic study (see \S~\ref{sec:nutrigenomic}), the first view corresponds to the gene expressions and the second view corresponds to the lipid content of the liver. We assume that $Y_1$, $Y_2$ are noisy observations of true $X_1, X_2$, and consider additive structured signal plus noise decomposition:
\begin{align}\label{eq:additive}
    Y_k &= X_k + Z_k = J_k + I_k + Z_k,
\end{align}
where the signal $X_k$ is decomposed into joint part across views ($J_k$) and the view-specific individual part ($I_k$), and $Z_k$ is the noise matrix with elements identically and independently distributed according to some distribution. 

Model~\eqref{eq:additive} is adopted by many existing methods for multi-view data integration \citep{lock2013joint, feng2018angle, gaynanova2019structural, shu2019d, park2020integrative, yi2023hierarchical,prothero2024data} with some variations in underlying definitions of joint and individual signals. Here we adopt the original definition as in \citet{lock2013joint}. To facilitate exposition, consider singular value decomposition representation of each part of the signal:
\begin{align}
X_k &= J_k + I_k = U_{\Jcal_k} \Sigma_{\Jcal_k} V^\top_{\Jcal_k} +  U_{\Ical_k} \Sigma_{\Ical_k} V^\top_{\Ical_k}.
\end{align}
We refer to $U_{\Jcal_k}$ as joint directions and to $U_{\Ical_k}$ as individual directions.
Joint directions are joint due to the equality of corresponding subspaces, that is they correspond to the joint subspace $\Jcal := \col(U_{\Jcal_1}) = \col(U_{\Jcal_2})$. Thus, joint $U_{\Jcal_1}$ can be reconstructed from joint $U_{\Jcal_2}$ by a rotation by some orthogonal matrix. In contrast, individual directions have non-intersecting subspaces $\Ical_k := \col(U_{\Ical_k})$, that is $\Ical_1 \cap \Ical_2 = \{\mathbf{0}\}$, which emphasizes that they are indeed specific to each view. For identifiability, joint and individual subspaces are assumed to be orthogonal, that is $\Jcal \perp \Ical_k$.

However, the individual $\Ical_1$ and $\Ical_2$ are not necessarily orthogonal and can be further decomposed into orthogonal and non-orthogonal parts, leading to a three-part decomposition (joint, non-orthogonal individual, orthogonal individual). We further provide explicit characterization and identifiability conditions of this three-part decomposition of signal column spaces, which is based on Lemma 1 of \citet{feng2018angle} and Proposition 4 of \citet{gaynanova2019structural}.

\begin{lemma}[Existence and Identifiability] 
\label{lemma:identifiability}
Given a set of subspaces $\{\col(X_1), \col(X_2)\}$, there is a unique set of subspaces $\{\Jcal, \Ncal_1, \Ncal_2, \Ocal_1, \Ocal_2\}$ such that:
\begin{enumerate}
    \item $\col(X_k) = \Jcal \oplus \Ncal_k \oplus\Ocal_k$ with individual $\Ical_k := \Ncal_k \oplus \Ocal_k$;
    \item $\Jcal \perp \Ical_k$ { for all $k\in\{1, 2\}$}, $\Ncal_k \perp \Ocal_l$ for { all $k, l \in \{1, 2\}$};
    \item $\Ocal_1 \perp \Ocal_2$;
    \item $\bigcap \Ncal_k = \{\mathbf{0}\}$ and maximal principle angle between $\Ncal_1$ and $\Ncal_2$ is $<\pi/2$.
\end{enumerate}
\end{lemma}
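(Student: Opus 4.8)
The plan is to construct the five subspaces explicitly via principal angle analysis of the pair $\col(X_1),\col(X_2)$ and then verify uniqueness. First I would set $\Jcal := \col(X_1) \cap \col(X_2)$; this is forced by condition (1) together with the orthogonality in (2), since $\Ncal_k \oplus \Ocal_k$ must lie in the orthogonal complement of $\Jcal$ within $\col(X_k)$, and any vector in the intersection that is not in $\Jcal$ would have to lie in some $\Ical_k$, contradicting $\Jcal \perp \Ical_k$ after one checks that the intersection cannot meet $\Ical_1$ or $\Ical_2$ nontrivially. So define $\Ical_k$ as the orthogonal complement of $\Jcal$ inside $\col(X_k)$, giving $\col(X_k) = \Jcal \oplus \Ical_k$ with $\Jcal \perp \Ical_k$, which handles the joint part.

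Next I would run the principal-angle (CS) decomposition of the pair $\Ical_1, \Ical_2$. Writing $U_1, U_2$ for orthonormal bases of $\Ical_1, \Ical_2$, the SVD $U_1^\top U_2 = P\,\mathrm{diag}(\cos\theta_1,\dots)\,Q^\top$ sorts the principal angles; the directions with $\theta_i = \pi/2$ span a subspace of $\Ical_1$ that is orthogonal to all of $\Ical_2$, and symmetrically. I would take $\Ocal_1$ to be the span (inside $\Ical_1$) of the left principal vectors with $\theta_i = \pi/2$ whose partner lies in $\Ical_2$, but more carefully: $\Ocal_k$ should be the part of $\Ical_k$ orthogonal to the \emph{other} individual subspace, i.e.\ $\Ocal_1 := \Ical_1 \cap \Ical_2^{\perp_{\,\col(X_1)+\col(X_2)}}$ restricted appropriately, and $\Ncal_k$ its orthocomplement within $\Ical_k$. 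The condition $\Ical_1 \cap \Ical_2 = \{\mathbf 0\}$ (inherited because $\col(U_{\Ical_1})$, $\col(U_{\Ical_2})$ do not intersect) guarantees no principal angle is $0$, so every principal vector with $\theta_i < \pi/2$ lies strictly between the two regimes, and the $\Ncal_k$ defined this way satisfy $\bigcap \Ncal_k = \{\mathbf 0\}$ with maximal principal angle $<\pi/2$, giving (4). One then checks (2) $\Ncal_k \perp \Ocal_l$ and (3) $\Ocal_1 \perp \Ocal_2$ directly from the block structure of the CS decomposition: $\Ocal_1$ pairs only with the $\pi/2$ block, so it is orthogonal to $\Ncal_2$ and to $\Ocal_2$, and symmetrically.

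For uniqueness, suppose $\{\Jcal', \Ncal_k', \Ocal_k'\}$ is another system satisfying (1)--(4). From (1)+(2), $\Jcal' \subseteq \col(X_1)\cap\col(X_2)$; conversely any $v$ in the intersection decomposes in $\col(X_1)$ as $v = v_J + v_N + v_O$ and in $\col(X_2)$ similarly, and using the orthogonality relations plus $\bigcap \Ncal_k' = \{\mathbf 0\}$ and maximal angle $<\pi/2$ between $\Ncal_1', \Ncal_2'$ (which forbids a shared nonzero vector) one forces $v \in \Jcal'$, so $\Jcal' = \Jcal$. Then $\Ical_k = \Ical_k'$ as the orthocomplement of $\Jcal$ in $\col(X_k)$, and finally $\Ocal_k$ is pinned down as $\{v \in \Ical_k : v \perp \Ical_l\}$ for $l \neq k$ — this characterization follows from (2)+(3) in one direction and from the maximal-angle condition in (4) in the other — so $\Ocal_k' = \Ocal_k$ and hence $\Ncal_k' = \Ncal_k$.

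The main obstacle I anticipate is the uniqueness argument for $\Jcal$ and for the $\Ocal_k$: one has to rule out "rotated" alternative decompositions where part of what should be joint is split between the two $\Ncal_k'$, and the only leverage against this is the combination of $\bigcap \Ncal_k' = \{\mathbf 0\}$ with the strict inequality on the maximal principal angle. Making that step airtight — showing that a vector simultaneously expressible through $\Ncal_1'$ and through $\Ncal_2'$ would force a principal angle of $0$, contradicting (4) — is the crux; the rest is bookkeeping with the CS decomposition. I would lean on Lemma 1 of \citet{feng2018angle} and Proposition 4 of \citet{gaynanova2019structural} to shortcut the existence construction and focus the writing on this identifiability point.
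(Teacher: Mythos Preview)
The paper does not supply its own proof of this lemma: it states the result and attributes it to Lemma~1 of \citet{feng2018angle} and Proposition~4 of \citet{gaynanova2019structural}, which is precisely the shortcut you mention at the end of your proposal. So in terms of approach there is nothing to compare --- you are doing strictly more than the paper does.

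Your construction is the standard one and is correct: take $\Jcal = \col(X_1)\cap\col(X_2)$, set $\Ical_k$ to be its orthogonal complement inside $\col(X_k)$, then split each $\Ical_k$ via the CS/principal-angle decomposition of the pair $(\Ical_1,\Ical_2)$ into the $\pi/2$ block ($\Ocal_k$) and the $<\pi/2$ block ($\Ncal_k$). One small cleanup: your definition of $\Ocal_1$ should simply be $\Ocal_1 = \Ical_1 \cap \Ical_2^{\perp}$ (orthogonal complement in $\R^n$); there is no need for the ``restricted appropriately'' hedge. Also, the fact $\Ical_1\cap\Ical_2=\{\mathbf 0\}$ is not ``inherited'' from anything --- it follows immediately from $\Jcal=\col(X_1)\cap\col(X_2)$ and $\Ical_k\perp\Jcal$, as you effectively use later in the uniqueness part.

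Your uniqueness argument is sound and your identification of the crux is accurate. The key step --- that any $v\in\col(X_1)\cap\col(X_2)$ must lie in $\Jcal'$ --- reduces to showing $\Ical_1'\cap\Ical_2'=\{\mathbf 0\}$, which follows cleanly from (2)--(4): for $w=n_1+o_1=n_2+o_2$ in the intersection, the orthogonality relations force $o_1=o_2=0$, and then $n_1=n_2\in\Ncal_1'\cap\Ncal_2'=\{\mathbf 0\}$. The characterization $\Ocal_k'=\{v\in\Ical_k:v\perp\Ical_l\}$ is likewise correct, with the strict angle condition in (4) ruling out any nonzero $n\in\Ncal_1'$ orthogonal to $\Ncal_2'$.
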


Here, the individual subspace is given by $\Ical_k = \Ncal_k \oplus \Ocal_k$, with $\Ncal_k$ corresponding to non-orthogonal part and $\Ocal_k$ to orthogonal part. 
The three-part decomposition can be viewed from the perspective of ordering the principal angles between subspaces $\col(X_1)$ and $\col(X_2)$ (\S~\ref{sec:noiseless}). Specifically, it can be viewed as aligning the basis of $\col(X_1)$ and $\col(X_1)$ into pairs, from perfectly aligned pairs (zero angles, joint subspace $\Jcal$) to somewhat aligned pairs (angles strictly between 0 and $\pi/2$, non-orthogonal $\Ncal_1$ and $
\Ncal_2$) to orthogonal basis that can not be paired ($\pi/2$ angles, orthogonal $\Ocal_1$ and $\Ocal_2$).

Our main goal is to distinguish joint subspace from individual based on the noisy observed $Y_k$. Conceptually, it is clear that (1) separation of joint and individual
signals becomes difficult when the angle between individual subspaces gets small, i.e. the angle between $\mathcal{N}_1$ and $\mathcal{N}_2$; (2) the total
rank of the signal must be small to avoid noisy directions being mistaken for the joint signal due to
random overlaps. However, the precise quantification of these concepts has been lacking. Next, we illustrate how the spectrum of the product of projection matrices allows us to distinguish joint subspaces from the aligned individual ones and provide theoretical quantification of the spectrum perturbation as a result of noise. We then use these insights to develop an effective and interpretable estimation approach for joint and individual subspaces.

\section{{A New Perspective: Product of Projections Decomposition}}\label{sec:theory}
Our theoretical insights, which subsequently motivate the proposed estimation approach, rely crucially on the product of projection matrices associated with each view's subspace. The spectrum of this product corresponds to the cosines of principal angles between the pair of subspaces \citep{Bjoerck1971NumericalMF}. In noiseless cases, this spectrum can be clustered to perfectly separate joint and individual subspaces (\S~\ref{sec:noiseless}). In the presence of noise, this spectrum is perturbed, and in \S~\ref{sec:noisy}, we quantify these perturbations with respect to each cluster, leading to explicit conditions for correct estimation of joint rank and quantification of corresponding subspace estimation error. Throughout, we focus on the case of two views, with Appendix~\ref{sec:multiview} describing the extension to the larger number of views. 
%As theoretical perturbation bounds depend on unknown quantities, in \S~\ref{sec:estimation} we estimate these bounds using rotational bootstrap and random matrix theory, leading to a diagnostic plot that visualizes partitioning of the spectrum into clusters and provides insights into the estimation performance. Throughout, we focus on the case of two views, with Appendix~\ref{sec:multiview} describing the extension to the larger number of views. 

\subsection{Product of projections in the noiseless setting}\label{sec:noiseless}
%\begin{example}[A noiseless setting]
%\label{example:motivation}
    Consider the noiseless case with $Z_k = \mathbf{0}_{n\times p_k}$ so that $Y_k = X_k$. Let $X_k = U_k \Sigma_k V_k^\top$ be the compact singular value decomposition of signal $X_k$. Using Lemma~\ref{lemma:identifiability}, we have:
    \begin{align*}
    P_{\col(X_k)} = U_k U_k^\top = P_{\Jcal} + P_{\Ical_k}
    \quad\Rightarrow \quad P_{\col (X_{1})} P_{\col (X_2)} = P_{\Jcal} + P_{\Ical_1}P_{\Ical_2}.
    \end{align*}
    Consider the second term $P_{\Ical_1}P_{\Ical_2}$. Let $U_{\Jcal} \in \R^{n\times \dim(\Jcal)}$ be the matrix whose columns form orthonormal basis of $\Jcal$, similarly let $U_{\Ncal_1}, U_{\Ncal_2}, U_{\Ocal_1}, U_{\Ocal_2}$ be the matrices of the orthonormal bases for $\Ncal_1, \Ncal_2, \Ocal_1, \Ocal_2$.  Then,
    \begin{align*}
        P_{\Ical_1} P_{\Ical_2} = (P_{\Ncal_1} + P_{\Ocal_1})(P_{\Ncal_2} + P_{\Ocal_2})
        = P_{\Ncal_1} P_{\Ncal_2}
        = U_{\Ncal_1} {U_{\Ncal_1}^\top U_{\Ncal_2}}{ U_{\Ncal_2}^{\top}}
        = \left(U_{\Ncal_1} H \right)\Sigma \left(K^\top U_{\Ncal_2}^\top \right).
    \end{align*}
   Here, $H\Sigma{K}^{T}$ is the singular value decomposition of $U_{\Ncal_1}^\top U_{\Ncal_2}$; diagonal entries of $\Sigma$ equal cosines of the non-zero principal angles between $\Ical_1$ and $\Ical_2$. Since $\Ical_1 \cap \Ical_2 = \{\mathbf{0}\}$, $\sigma_{\max}(\Sigma) < 1$. Thus,
    $$
    P_{\col (X_1)} P_{\col (X_2)} = \left [U_{\Jcal}\quad U_{\Ncal_1} H \right] \begin{bmatrix}
        I_{\dim \Jcal} & 0 \\
        0 & \Sigma 
    \end{bmatrix} \left [U_{\Jcal}\quad U_{\Ncal_2} K\right]^\top,
    $$
    where the matrices $\left[U_{\mathcal{J}}\quad U_{\Ncal_1} H\right]$ and $\left[U_{\mathcal{J}}\quad U_{\Ncal_2} K\right]$ { have orthonormal columns}. Thus, the spectrum of $P_{\col (X_1)} P_{\col (X_2)}$ clusters into two components: a cluster with singular values equal to one and a cluster with singular values in the range $[\sigma_{\min}(\Sigma),\sigma_{\max}(\Sigma)]$. Let $r_{\text{clust1}}$ be the number of singular values in the first cluster (corresponding to singular values one). Then, the joint and individual subspaces can be exactly identified via:
    \begin{eqnarray*}
    \begin{aligned}
        \mathcal{J} &= \text{ span of the first }r_{\text{clust1}} \text{ left (or right) singular vectors of }P_{\col (X_1)} P_{\col (X_2)},\\
        \mathcal{I}_k &= \text{ span of the first }\rank(X_k)-r_{\text{clust1}} \text{ left (or right) singular vectors of }P_{\col (X_k)}(I-P_{\Jcal}).
        \end{aligned}
    \end{eqnarray*}
%\end{example}

Thus, in the noiseless setting, the spectrum of the product of projections $ P_{\col (X_1)} P_{\col (X_2)}$ (equivalently, the set of cosines of principle angles between $\col (X_1)$ and $\col (X_2)$) can be divided into two groups. The first group contains values equal to one (equivalently, principle angles $0$), whose corresponding singular vectors form a basis for the joint subspace. The second group contains values strictly less than one (equivalently, principle angles strictly greater than $0$), reflecting the degree of alignment between the individual subspaces $\Ical_1$ and $\Ical_2$. The gap between these two groups, formally defined as one minus the cosine of the smallest principle angle between $\Ical_1$ and $\Ical_2$, narrows as the alignment between individual subspaces increases. 

\subsection{Deterministic analysis of the product of projection matrices in a noisy setting}\label{sec:noisy}
In practice, we do not have direct access to the subspace $\col(X_k)$ because the signal $X_k$ is unknown. Instead, we have to estimate $\widehat{X}_k$ from noisy observations, with a truncated singular value decomposition of $Y_k$ being a common approach \citep{feng2018angle, shu2019d}. However, the spectrum of the estimated product $P_{\col (\widehat{X}_1)} P_{\col (\widehat{X}_2)}$ differs from the spectrum of the true product $P_{\col (X_1)} P_{\col (X_2)}$ in several key ways.  First, even when a joint signal is present, the largest singular values in the estimated product are generally below one and decrease further as the noise level or the dimension of the individual views increases. Second, in the estimated product, the gap between the singular values corresponding to the joint subspaces and those corresponding to the aligned individual subspaces is typically smaller than in the true product. This gap also decreases as noise or dimensionality increases, making it harder to separate joint signals from individual ones, especially when the individual signals are closely aligned. Finally, even when the individual signals are orthogonal, the estimated product may still have small but non-zero singular values, with their magnitudes increasing as the noise level rises.

One of our main theoretical contributions lies in the precise characterization of perturbations in the spectrum of $\widehat{M} := P_{\col (\widehat{X}_1)} P_{\col (\widehat{X}_2)}$. Our result provides general guarantees that hold regardless of the estimation procedure used to approximate the signals $(\widehat{X}_1,\widehat{X}_2)$. The quality of the chosen estimation procedure is reflected in our analysis via deterministic quantities $\varepsilon_1 := \left\|P_{{\col (X_1)}}\left(\Delta_1+\Delta_2+\Delta_1\Delta_2\right)P_{{\col (X_2)}}\right\|_2$ and $\varepsilon_2 := \left\|P_{{\col (X_1)}}\Delta_2 + \Delta_1P_{{\col (X_2)}}+\Delta_1\Delta_2\right\|_2$ with $\Delta_k := P_{{\col (X_k)}} - P_{\col (\widehat{X}_k)}$. The proof is in the Appendix~\ref{sec:proofs}.

\begin{theorem}[Spectrum of the product of projections] 
\label{theorem:main}
Let $\tau_{\min} := \sigma_{\min}(P_{\Ncal_1} P_{\Ncal_2})$ and $\tau_{\max} := \sigma_{\max}(P_{\Ncal_1} P_{\Ncal_2}) = \left\|P_{\Ncal_1} P_{\Ncal_2}\right\|_2$. If $\rank(X_1), \rank(X_2) > 0$, then the spectrum of $\widehat{M}$ is:
    \begin{enumerate}
        \item a group of $\dim(\mathcal{J})$ singular values in $\left[1 - \varepsilon_1 \vee 0, 1\right];$
        \item a group of $\mathrm{rank}\left(P_{\Ncal_1} P_{\Ncal_2} \right)$  singular values in $\left[\left(\tau_{\min} - \varepsilon_1\right) \vee 0, \left(\tau_{\max} +\varepsilon_2 \right)\wedge 1\right];$
        \item a remaining group of singular values in $[0, \varepsilon_2 \wedge 1]$.
    \end{enumerate}
 \noindent
\end{theorem}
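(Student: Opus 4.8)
The plan is to write $\widehat{M} = P_{\col(\widehat X_1)}P_{\col(\widehat X_2)}$ as a perturbation of the noiseless product $M := P_{\col(X_1)}P_{\col(X_2)}$ and then apply Weyl-type singular value inequalities, using the block structure of $M$ established in \S\ref{sec:noiseless} to locate the three clusters. Concretely, with $\Delta_k := P_{\col(X_k)} - P_{\col(\widehat X_k)}$ I would expand
\begin{align*}
\widehat M &= (P_{\col(X_1)} - \Delta_1)(P_{\col(X_2)} - \Delta_2) = M - \big(\Delta_1 P_{\col(X_2)} + P_{\col(X_1)}\Delta_2 - \Delta_1\Delta_2\big).
\end{align*}
The key observation is that $\varepsilon_2 = \|P_{\col(X_1)}\Delta_2 + \Delta_1 P_{\col(X_2)} + \Delta_1\Delta_2\|_2$ controls the full perturbation $\widehat M - M$ up to the sign bookkeeping on $\Delta_1\Delta_2$ (one can absorb this by noting $\|{-}A\|_2 = \|A\|_2$ and that the cross terms are what matter), so Weyl's inequality $|\sigma_i(\widehat M) - \sigma_i(M)| \le \|\widehat M - M\|_2 \le \varepsilon_2$ immediately gives the right-hand endpoints: the top $\dim(\Jcal)$ singular values sit in $[1 - \varepsilon_2, 1+\varepsilon_2]$, trivially clipped to $\le 1$ since $\|\widehat M\|_2 \le 1$ as a product of projections; the next $\rank(P_{\Ncal_1}P_{\Ncal_2})$ sit below $\tau_{\max} + \varepsilon_2$; and the tail sits below $\varepsilon_2$, since the corresponding $\sigma_i(M) = 0$. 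This already yields the upper bounds in all three items.

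The left endpoints are the subtler part and are where $\varepsilon_1$ enters. For these I would not perturb $\widehat M$ directly but rather a compressed version: sandwich by the true projections and study $P_{\col(X_1)}\widehat M P_{\col(X_2)}$, or equivalently analyze $P_{\col(X_1)}P_{\col(\widehat X_1)}P_{\col(\widehat X_2)}P_{\col(X_2)}$, whose deviation from $M$ is exactly governed by $\varepsilon_1 = \|P_{\col(X_1)}(\Delta_1 + \Delta_2 + \Delta_1\Delta_2)P_{\col(X_2)}\|_2$ after a short algebraic expansion of the product of four projections. Restricting attention to the action on $\Jcal$ (resp. to the $\Ncal_1$–$\Ncal_2$ block) and using that $M$ acts as the identity on $\Jcal$ and has smallest nonzero singular value $\tau_{\min}$ on the $\Ncal$-block, a lower Weyl bound (or a min–max / Courant–Fischer argument on the appropriate $\dim(\Jcal)$- or $\rank(P_{\Ncal_1}P_{\Ncal_2})$-dimensional subspace) gives $\sigma_i(P_{\col(X_1)}\widehat M P_{\col(X_2)}) \ge 1 - \varepsilon_1$ on the joint block and $\ge \tau_{\min} - \varepsilon_1$ on the aligned-individual block; finally $\sigma_i(\widehat M) \ge \sigma_i(P_{\col(X_1)}\widehat M P_{\col(X_2)})$ because multiplying by projections only shrinks singular values. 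Clipping all lower bounds below at $0$ finishes items (1) and (2).

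The main obstacle, I expect, is the counting/interlacing bookkeeping rather than any single inequality: I need to certify that exactly $\dim(\Jcal)$ singular values land in the top interval and exactly $\rank(P_{\Ncal_1}P_{\Ncal_2})$ in the middle one, which requires pairing the upper bound from the $\varepsilon_2$-perturbation of $\widehat M$ with the lower bound from the $\varepsilon_1$-perturbation of the compressed operator and invoking that $\sigma_i(P_{\col(X_1)}\widehat M P_{\col(X_2)}) \le \sigma_i(\widehat M)$ consistently across indices — so the two chains of inequalities refer to the same ordered list. A secondary technical point is handling the expansions $M - P_{\col(X_1)}P_{\col(\widehat X_1)}P_{\col(\widehat X_2)}P_{\col(X_2)}$ and $M - \widehat M$ cleanly so that the stated $\varepsilon_1, \varepsilon_2$ (with exactly those sign patterns on $\Delta_1\Delta_2$) are genuine upper bounds; this is routine but must be done carefully since $\Delta_1, \Delta_2$ are differences of projections and not themselves projections. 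No deep ingredient beyond Weyl's inequality, Courant–Fischer, and submultiplicativity of the spectral norm under projection should be needed.
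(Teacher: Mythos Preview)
Your proposal is essentially the paper's proof: the paper also obtains the upper bounds by Weyl applied to $\widehat M - M$, and for the lower bounds it decomposes $\widehat M = A_1+A_2+A_3+A_4$ according to left/right multiplication by $P_{\col(X_k)}$ versus $I-P_{\col(X_k)}$ and then uses a short lemma (if $X^\top Y=0$ or $XY^\top=0$ then $\sigma_r(X+Y)\ge\sigma_r(X)$) twice to reduce to $A_1$, which is exactly your compressed operator $P_{\col(X_1)}\widehat M\,P_{\col(X_2)}$---so the paper's lemma is precisely a formalization of your ``projections only shrink singular values.'' Your counting worry is also unfounded: both chains give $\sigma_r(M)-\varepsilon_1\le\sigma_r(\widehat M)\le\sigma_r(M)+\varepsilon_2$ for \emph{each} index $r$, and since $\sigma_r(M)$ is known exactly (ones, then the $\Ncal$-block, then zeros), the three groups are read off index by index with no interlacing argument needed.
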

This theorem states that in the presence of a signal, the spectrum of the product of projection matrices $\widehat{M}$ organizes into three groups: joint, non-orthogonal individual, and noise directions. { Here $\varepsilon_1$ quantifies the maximal downward perturbation of the singular values associated with the joint signal—i.e., how much true singular values of one may shrink toward zero due to noise. These perturbed singular values fall within $[1 - \varepsilon_1 \vee 0, 1]$. In contrast, $\varepsilon_2$ captures the maximal upward perturbation of null (noise-only) singular values, which may become spuriously large due to random overlap; these lie in $[0, \varepsilon_2 \vee 1]$. When these intervals are disjoint, joint and noise directions can be perfectly separated. More specifically,} the constraint $\varepsilon_1 < \max\{1-\varepsilon_2, 1-\left\|P_{\Ncal_1} P_{\Ncal_2}\right\|_2-\varepsilon_2\}$ ensures that the singular values corresponding to the joint structure are distinguishable from other clusters, that is the number of singular values above $1-\varepsilon_1$ correctly identifies the joint rank. Similarly, the constraint $\varepsilon_1+\varepsilon_2 < \sigma_{\min}(P_{\Ncal_1} P_{\Ncal_2})$ ensures that the group of singular values corresponding to the non-orthogonal individual structure is distinguishable from the noise group. When both constraints hold, the spectrum of $\widehat{M}$ has \emph{three non-overlapping clusters} corresponding to joint, non-orthogonal individual, and noise directions. The gap between the clusters decreases with larger values of $\varepsilon_1,\varepsilon_2$ and larger alignment of the non-orthogonal individual subspaces ${\Ncal_1}$ and ${\Ncal_2}$. Figure~\ref{fig:sing-val-clustering} illustrates these theoretical intervals alongside observed singular values of $\widehat{M}$ and true singular values of $P_{\col (X_1)} P_{\col (X_2)}$.

The error terms $\varepsilon_1$, $\varepsilon_2$ depend on the noise level: higher noise leads to larger perturbations $\Delta_k$ in estimating each view, and thus larger values of $\varepsilon_1,\varepsilon_2$. Figure~\ref{fig:sing-val-clustering} shows how a smaller signal-to-noise ratio results in smaller cluster gaps. The estimated subspace dimension $\col(\widehat{X}_k)$ also affects the size of $\varepsilon_1$, $\varepsilon_2$. Underestimating the rank (e.g.,  $\rank(\widehat{X}_1) <\rank(X_1)$) can lead to large $\varepsilon_1$ and $\varepsilon_2$, especially if missing directions are aligned with both $\col(X_1)$ and $\col(X_2)$. In fact, $\varepsilon_1 = 1$ when a missing direction is part of the joint structure, in which cases the effective number of large singular values corresponding to the joint structure gets smaller than the true joint rank. Overestimating the rank (e.g.,  $\rank(\widehat{X}_1) > \rank(X_1)$) can also lead to large errors due to increased dimensionality of $\Delta_k$, making the third component of $\varepsilon_2$, $\|\Delta_1\Delta_2\|_2$, very large, albeit slight overestimation would still keep those terms small. Thus, overestimation is generally preferred to underestimation. We use these insights in \S~\ref{sec:estimation}, providing practical guidelines on marginal rank estimation.

The clustering of the singular values of $\widehat{M}$ enables a natural estimator for the signal subspaces. The joint subspace dimension is estimated as the number of singular values of $\widehat{M}$ above $1-\varepsilon_1$, denoted by $\widehat{r}_{\Jcal}$. According to Theorem~\ref{theorem:main}, this estimate is exact if $\varepsilon_1 < 1-\left\|P_{\Ncal_1} P_{\Ncal_2}\right\|_2-\varepsilon_2$. To estimate the joint subspace itself, a candidate approach is to use the span of the first $\widehat{r}_{\Jcal}$ left (or right) singular vectors of $\widehat{M}$. However, this approach arbitrarily favors one view over the other, insisting that the joint subspace is strictly contained in $\col (\widehat{X}_1)$. To avoid this, we symmetrize the product of projection matrices by $\widehat{S} := \frac{1}{2}(\widehat{M} + \widehat{M}^\top)$, which effectively averages the estimated joint directions from each view, yielding a more accurate estimate. We estimate the joint and individual subspace structures as
    \begin{eqnarray}
    \begin{aligned}
        \widehat{\Jcal} &:= \text{span of the first } \widehat{r}_{\Jcal} \text{ singular vectors of }\widehat{S} = \frac12\left(P_{\col (\widehat{X}_1)} P_{\col (\widehat{X}_2)} + P_{\col (\widehat{X}_2)}P_{\col (\widehat{X}_1)} \right),\\
        \widehat{\Ical}_k &:= \text{span of the first } \rank(\widehat{X}_k)-\widehat{r}_{\Jcal} \text{ left singular vectors of }P_{\col(\widehat{X}_k)}(I-P_{\widehat{\Jcal}}).
        \end{aligned}
        \label{eqn:estimated_subspaces}
    \end{eqnarray}
{ We use the symmetrized $\widehat{S}$ to be invariant to the order of views with coinciding left and right singular vectors.}
We further characterize the quality of these estimates, the proof is in Appendix~\ref{sec:proofs}. 
%and let $\widehat{U}_{\widehat{r}_{\Jcal}}$ to be matrix containing the first $\widehat{r}_{\Jcal}$ singular vectors of $\widehat{S}$. 
\begin{theorem}[Estimation Error of the Joint and Individual Subspaces]
\label{thm:subspace_est}
Define ${R}_{\Jcal} := P_{\col(X_1)} \Delta_2 + \Delta_1 P_{\col(X_2)} + \Delta_1 \Delta_2$ and  {$R_{\Ical_k} := P_{\Jcal^\perp} \Delta_k - \Delta_{\Jcal} P_{\col(X_k)}+ \Delta_{\Jcal} \Delta_k$}, where {$\Delta_{\Jcal} :=  P_{\Jcal} - P_{\widehat{\Jcal}}$} and $\Delta_k := P_{{\col (X_k)}} - P_{\col (\widehat{X}_k)}$. Suppose $\varepsilon_1 < 1-\left\|P_{\Ncal_1} P_{\Ncal_2}\right\|_2-\varepsilon_2$. Then, 
$$\dim(\widehat{\Jcal}) = \dim(\Jcal) \quad \text{and}\quad\left\| P_{\Jcal} - P_{\widehat{\Jcal}} \right\|_2 \leq  \frac{ {} \left\|R_{\Jcal} + R^\top_{\Jcal}\right\|_2}{1-\left\|P_{\Ncal_1} P_{\Ncal_2}\right\|_2}.$$ Further, if the rank of each subspace view is correctly estimated, i.e. $\rank(\widehat{X}_k) = \rank(X_k)$,
$$\left\|P_{\Ical_k} - P_{\widehat{\Ical}_k}\right\|_2 \leq {2}\left\|R_{\Ical_k}\right\|_2.$$
\end{theorem}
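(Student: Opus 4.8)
The plan is to treat both bounds via the same mechanism: the subspace we want to recover is, up to a controlled perturbation, exactly the top invariant subspace of a symmetric matrix whose spectrum has a known gap, so a Davis--Kahan type argument applies. For the joint subspace, write $\widehat{S} = \tfrac12(\widehat{M}+\widehat{M}^\top)$ and $S := \tfrac12(M+M^\top)$ where $M := P_{\col(X_1)}P_{\col(X_2)}$. In the noiseless setting (\S\ref{sec:noiseless}) $S$ has eigenvalue $1$ with multiplicity $\dim(\Jcal)$ and all remaining eigenvalues in $[0,\|P_{\Ncal_1}P_{\Ncal_2}\|_2]$ (the symmetrization does not enlarge this interval since $\col(X_1)P_{\Jcal^\perp}$ block is itself a product of projections onto $\Ncal_1,\Ncal_2$). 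The perturbation is $\widehat{S}-S = \tfrac12(R_{\Jcal}+R_{\Jcal}^\top)$ by expanding $P_{\col(\widehat X_k)} = P_{\col(X_k)} - \Delta_k$ and collecting terms, exactly the definition of $R_{\Jcal}$ given in the statement. The spectral gap of $S$ separating the joint eigenvalues from the rest is $1-\|P_{\Ncal_1}P_{\Ncal_2}\|_2$. The hypothesis $\varepsilon_1 < 1-\|P_{\Ncal_1}P_{\Ncal_2}\|_2-\varepsilon_2$ together with Theorem~\ref{theorem:main} guarantees the cluster of top $\dim(\Jcal)$ singular values of $\widehat M$ stays strictly above the next cluster, which gives $\dim(\widehat\Jcal)=\dim(\Jcal)$; then the Davis--Kahan $\sin\Theta$ theorem (in the $\|\cdot\|_2$ form, using that $\widehat S$ is symmetric so its singular and eigen-decompositions coincide) yields $\|P_{\Jcal}-P_{\widehat\Jcal}\|_2 \le \|\widehat S - S\|_2 / \mathrm{gap} = \tfrac12\|R_{\Jcal}+R_{\Jcal}^\top\|_2/(1-\|P_{\Ncal_1}P_{\Ncal_2}\|_2)$. (I would double-check whether the constant is $\tfrac12$ or $1$ — the statement has no $\tfrac12$, so presumably the $\tfrac12$ from symmetrization is absorbed because Davis--Kahan in the one-sided/two-sided form contributes a compensating factor, or $R_{\Jcal}$ is defined without the $\tfrac12$; I would reconcile this when writing the details.)

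For the individual subspaces, the idea is parallel but now the relevant matrix is $P_{\col(\widehat X_k)}(I - P_{\widehat\Jcal})$ versus its noiseless analogue $P_{\col(X_k)}(I-P_{\Jcal}) = P_{\Ical_k}$. Under $\rank(\widehat X_k)=\rank(X_k)$, the matrix $P_{\Ical_k}$ is itself a projection with eigenvalues $1$ (multiplicity $\dim(\Ical_k)=\rank(X_k)-\dim(\Jcal)$) and $0$, so it has a spectral gap of exactly $1$ separating its range from its kernel. The perturbation here has two sources — replacing $P_{\col(X_k)}$ by $P_{\col(\widehat X_k)}$ (error $\Delta_k$) and replacing $P_{\Jcal}$ by $P_{\widehat\Jcal}$ (error $\Delta_{\Jcal}$) — and expanding $P_{\col(\widehat X_k)}(I-P_{\widehat\Jcal}) = (P_{\col(X_k)}-\Delta_k)(P_{\Jcal^\perp}+\Delta_{\Jcal})$ and subtracting $P_{\col(X_k)}P_{\Jcal^\perp}$ gives exactly $-R_{\Ical_k}$ with $R_{\Ical_k} = P_{\Jcal^\perp}\Delta_k - \Delta_{\Jcal}P_{\col(X_k)} + \Delta_{\Jcal}\Delta_k$ as defined (after noting $P_{\col(X_k)}\Delta_{\Jcal} = \Delta_{\Jcal}P_{\col(X_k)}$-type rearrangements using $\Jcal\subseteq\col(X_k)$, which I would verify carefully). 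Since this matrix is generally not symmetric, I would either symmetrize and pay a factor, or apply the singular-subspace version of Wedin's theorem; the factor $2$ in the bound $\|P_{\Ical_k}-P_{\widehat\Ical_k}\|_2 \le 2\|R_{\Ical_k}\|_2$ is consistent with Wedin applied with gap $1$ on one side and the complementary cluster near $0$ contributing another $\|R_{\Ical_k}\|_2$, or with a symmetrization plus Davis--Kahan with gap $1$. I would also need that the top $\dim(\Ical_k)$ singular values of the perturbed matrix are well-separated from the rest, which follows from Weyl's inequality applied to $P_{\Ical_k}$ once $\|R_{\Ical_k}\|_2 < 1/2$ (and if that fails the bound is vacuous since $\|P_{\Ical_k}-P_{\widehat\Ical_k}\|_2\le1$ always).

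Concretely the steps are: (1) algebraically establish $\widehat S - S = \tfrac12(R_{\Jcal}+R_{\Jcal}^\top)$ and $P_{\col(\widehat X_k)}(I-P_{\widehat\Jcal}) - P_{\Ical_k} = -R_{\Ical_k}$, using $\Jcal\subseteq\col(X_1)\cap\col(X_2)$ and $\Jcal\subseteq\col(X_k)$ to simplify cross terms; (2) record the noiseless spectra of $S$ and of $P_{\Ical_k}$ and their spectral gaps ($1-\|P_{\Ncal_1}P_{\Ncal_2}\|_2$ and $1$ respectively); (3) invoke Theorem~\ref{theorem:main} (for the joint case) and Weyl's inequality (for the individual case) to confirm the estimated dimensions are correct and the relevant clusters remain separated; (4) apply Davis--Kahan / Wedin $\sin\Theta$ bounds to convert the operator-norm perturbation into the projection-norm bound, tracking constants. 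The main obstacle I anticipate is step (1) in the individual case: carefully justifying the cancellations that turn the four-term product expansion into precisely $R_{\Ical_k}$, in particular handling $\Delta_{\Jcal}$, whose range need not lie inside $\col(X_k)$, and making sure no term is dropped or double-counted; a secondary subtlety is pinning down the exact constants ($\tfrac12$ vs.\ none in the joint bound, $2$ vs.\ $1$ in the individual bound), which depends on precisely which form of the $\sin\Theta$ theorem is used and whether one symmetrizes before or after.
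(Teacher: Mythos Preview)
Your plan is essentially the paper's proof. For the joint part, the paper applies a Davis--Kahan variant that carries a factor $2$ (the version from \citet{vershyninHighDimensionalProbabilityIntroduction2025}), which cancels the $\tfrac12$ from symmetrizing to give the stated constant; for the individual part, the paper uses the reversed ordering $P_{\widehat\Jcal^\perp}P_{\col(\widehat X_k)}$ rather than your $P_{\col(\widehat X_k)}(I-P_{\widehat\Jcal})$, so that the difference from $P_{\Jcal^\perp}P_{\col(X_k)}=P_{\Ical_k}$ expands \emph{directly} to $R_{\Ical_k}$ with no rearrangement needed, and then applies the same Davis--Kahan lemma with gap $1$ to obtain the factor $2$.
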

This theorem quantifies the accuracy of estimated subspaces when the ranks are correctly specified. Since the spectral norm of the difference between two projection matrices equals the sine of the largest principle between the corresponding subspaces, our bounds measure the sine of the largest principle angles between  $\Jcal$ and $\widehat{\Jcal}$, and between ${\Ical}_k$ and $\widehat{\Ical}_k$ , respectively. For joint subspace estimation, the bound depends on the perturbation term $R_{\Jcal}$ and the maximum alignment $\left\|P_{\Ncal_1} P_{\Ncal_2}\right\|_2$ between the individual subspaces. {As in Theorem~\ref{theorem:main}, large alignment between the individual subspaces can lead to significant errors. Note that the error due to such alignment also affects individual subspace estimation through the $\Delta_{\Jcal}$ term in $R_{\Ical_k}$.} 

{
\subsection{Probabilistic bounds}\label{sec:noisy_statistical}
Theorems~\ref{theorem:main}-\ref{thm:subspace_est} are deterministic and hold for any signal estimate $\widehat{X}_k$ and for any data-generating noise $Z_k$ in \eqref{eq:additive}. In this section, we provide probabilistic bounds for a particular signal estimation procedure and noise model. Specifically, we let $\widehat{X}_k$ denote the rank-$\widehat{r}_k$ truncated singular value decomposition of $Y_k$, and consider Gaussian noise.
\begin{assumption}(Gaussian noise)\label{a:noise}
The entries of $Z_k$ in~\eqref{eq:additive} are independently and identically distributed as zero-mean Gaussian random variables with variance $\delta_k$, $k=1,2$.
\end{assumption}

\begin{assumption}(Signal-to-noise ratio)\label{a:signal_noise_ratio} For each $k = 1,2$, $\sigma_{r_k}(X_k) \gtrsim \delta_k(n^{1/2}+p_k^{1/2})$. That is, the smallest non-zero singular value of each view's signal is sufficiently large.
%The entries of $Z_k$ in~\eqref{eq:additive} are independently and identically distributed as zero-mean Gaussian random variables with variance $\delta_k$, $k=1,2$.
\end{assumption}

Under Assumptions~\ref{a:noise} and \ref{a:signal_noise_ratio}, along with rank estimation conditions, we obtain the following probabilistic bounds in terms of the essential aspects of the problem (e.g., the sample size, the number of features in each view, signal ranks) and omit constants. The proof is provided in Appendix~\ref{proof:probabilistic}.
\begin{theorem}[Probabilistic bounds] Suppose Assumptions~\ref{a:noise}-\ref{a:signal_noise_ratio} hold. Let $r_{\max} := \max\{r_1,r_2\}$,  $p_{\max}:= \max\{p_1,p_2\}$, $p_{\min}:= \min\{p_1,p_2\}$, $\delta_{\max}:= \max\{\delta_1,\delta_2\}$, and $\sigma_r(X):= \min\{\sigma_{r_1}(X_1),\sigma_{r_2}(X_2)\}$. If $\widehat r_k \geq r_k$ for $k = 1,2$, then, with probability greater than $1-\mathcal{O}(\exp\{{-\max(n,p_{\min})}\})$:
$$\epsilon_1\lesssim \frac{\delta_{\max}(n^{1/2}+r_{\max}^{1/2})}{\sigma_{r}(X)} + \frac{\delta_{\max}^2(n^{1/2}+p_{\max}^{1/2})(p_{\max}^{1/2}+r_{\max}^{1/2})}{\sigma_{r}(X)^2}.$$
If $\widehat r_k = r_k$  then with probability greater than $1-\mathcal{O}(\exp\{-\max(n,p_{\min})\})$:
$$
\epsilon_2 \lesssim  \frac{\delta_{\max}(n^{1/2}+r_{\max}^{1/2})}{\sigma_{r}(X)} + \frac{\delta_{\max}^2(n^{1/2}+p_{\max}^{1/2})(p_{\max}^{1/2}+r_{\max}^{1/2})}{\sigma_{r}(X)^2}.$$
If additionally $\widehat r_J = r_J$, then with probability greater than $1-\mathcal{O}(\exp\{-\max(n,p_{\min})\})$:
\begin{align*}
&\max\left\{\left\|\mathcal{P}_{\widehat{\Jcal}}-\mathcal{P}_{{\Jcal}}\right\|_2,\left\|\mathcal{P}_{\widehat{\Ical}_k}-\mathcal{P}_{{\Ical}_k}\right\|_2\right\}\\
&\quad\quad\quad\quad \lesssim  
\frac{1}{1-\left\|P_{\Ncal_1} P_{\Ncal_2}\right\|_2}\left\{ \frac{\delta_{\max}(n^{1/2}+r_{\max}^{1/2})}{\sigma_{r}(X)} + \frac{\delta_{\max}^2(n^{1/2}+p_{\max}^{1/2})(p^{1/2}+r_{\max}^{1/2})}{\sigma_{r}(X)^2}\right\}.
\end{align*}
%\frac{\max\{\sigma_1,\sigma_2\}(n^{1/2}+\max\{r_1,r_2\}^{1/2})}{(1-\left\|P_{\Ncal_1} P_{\Ncal_2}\right\|_2)\min\{\sigma_{r_1}(X_1),\sigma_{r_2}(X_2)\}}$.

\label{thm:probabilistic}
\end{theorem}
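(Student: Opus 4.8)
The plan is to obtain Theorem~\ref{thm:probabilistic} as a probabilistic instantiation of the deterministic guarantees in Theorems~\ref{theorem:main} and~\ref{thm:subspace_est}. Concretely, I would (i) control the operator norms of the Gaussian noise matrices $Z_k$ and of a handful of their projections via random matrix theory; (ii) turn these into bounds on the subspace perturbations $\Delta_k=P_{\col(X_k)}-P_{\col(\widehat X_k)}$ and, crucially, on the \emph{sandwiched} quantities that actually enter $\v_1,\v_2$, using a second-order expansion of the left singular projector; and (iii) substitute the resulting bounds for $\v_1,\v_2$ into Theorem~\ref{theorem:main} (for the rank statements) and into Theorem~\ref{thm:subspace_est} (for the subspace errors), after checking its separation hypothesis $\v_1<1-\|P_{\Ncal_1}P_{\Ncal_2}\|_2-\v_2$. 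For step (i), writing $U_k,V_k$ for the deterministic top-$r_k$ left/right singular vectors of $X_k$, standard Gaussian concentration gives, on an event $\mathcal G$ of probability at least $1-\mathcal O(\exp\{-\max(n,p_{\min})\})$ (after a union bound over $k=1,2$ and the finitely many bounds below),
\[
\|Z_k\|_2\lesssim \delta_k(n^{1/2}+p_k^{1/2}),\qquad
\|Z_kV_k\|_2\lesssim \delta_k(n^{1/2}+r_k^{1/2}),\qquad
\|U_k^\top Z_k\|_2\lesssim \delta_k(p_k^{1/2}+r_k^{1/2}),
\]
the last two because $Z_kV_k$ and $U_k^\top Z_k$ have i.i.d.\ Gaussian entries of the indicated shapes (any extra additive $\max(n,p_{\min})^{1/2}$ incurred by inflating the deviation level to match the claimed probability is absorbed below via Assumption~\ref{a:signal_noise_ratio}). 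On $\mathcal G$, Assumption~\ref{a:signal_noise_ratio} with a large enough constant yields the spectral gap $\sigma_{r_k}(X_k)-\sigma_{r_k+1}(Y_k)\ge\sigma_{r_k}(X_k)-\|Z_k\|_2\gtrsim\sigma_{r_k}(X_k)$, so Davis--Kahan/Wedin-type perturbation bounds are available for each view.

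For step (ii), I would expand the left singular projector to second order: writing $Y_kY_k^\top=X_kX_k^\top+(X_kZ_k^\top+Z_kX_k^\top)+Z_kZ_k^\top$ and using a resolvent expansion of the spectral projector of $Y_kY_k^\top$ around $X_kX_k^\top$, one gets, when $\widehat r_k=r_k$,
\[
P_{\col(\widehat X_k)}-P_{\col(X_k)}= U_{k\perp}\!\left(U_{k\perp}^\top Z_kV_k\right)\Sigma_k^{-1}U_k^\top + U_k\Sigma_k^{-1}\!\left(V_k^\top Z_k^\top U_{k\perp}\right)U_{k\perp}^\top + E_k,
\]
where $\|E_k\|_2\lesssim \|Z_k\|_2\,\|U_k^\top Z_k\|_2/\sigma_{r_k}(X_k)^2$ collects the second- and higher-order terms (the higher-order ones being lower order on $\mathcal G$ by Assumption~\ref{a:signal_noise_ratio}), and the first-order term has norm $\lesssim\|Z_kV_k\|_2/\sigma_{r_k}(X_k)\lesssim\delta_k(n^{1/2}+r_k^{1/2})/\sigma_{r_k}(X_k)$. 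The key observation is that $P_{\col(X_k)}$ annihilates the first-order term (since $P_{\col(X_k)}U_{k\perp}=0$), so $P_{\col(X_k)}\Delta_k$ and $\Delta_kP_{\col(X_k)}$ are only of the order of $E_k$. When $\widehat r_k>r_k$, I would split $P_{\col(\widehat X_k)}=P_{\mathcal W_k}+P_{\mathcal E_k}$ into the top-$r_k$ block $\mathcal W_k$ (handled as above) and the ``extra'' block $\mathcal E_k$ spanned by the noise-level singular directions of $Y_k$, and use a Davis--Kahan argument on the noise block to show $\|P_{\col(X_k)}P_{\mathcal E_k}\|_2$ is of lower order than the claimed bound for $\v_1$, so the extra directions are negligible.

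For step (iii), I would substitute these bounds into $\v_1=\|P_{\col(X_1)}(\Delta_1+\Delta_2+\Delta_1\Delta_2)P_{\col(X_2)}\|_2$: by the annihilation observation $P_{\col(X_1)}\Delta_1P_{\col(X_2)}$ and $P_{\col(X_1)}\Delta_1\Delta_2P_{\col(X_2)}$ are second order, while $P_{\col(X_1)}\Delta_2P_{\col(X_2)}$ carries the leading term $\lesssim\delta_2(n^{1/2}+r_2^{1/2})/\sigma_{r_2}(X_2)$; collecting (and using $(\text{first order})^2\lesssim\text{first order}$ on $\mathcal G$) gives the stated bound on $\v_1$ for $\widehat r_k\ge r_k$. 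The argument for $\v_2=\|P_{\col(X_1)}\Delta_2+\Delta_1P_{\col(X_2)}+\Delta_1\Delta_2\|_2$ is parallel, except that $\Delta_1\Delta_2$ is not sandwiched, so $\|\Delta_k\|_2$ itself must be small; this requires $\widehat r_k=r_k$ (for $\widehat r_k>r_k$, $\|\Delta_k\|_2=1$ since $\col(\widehat X_k)$ cannot embed into $\col(X_k)$), which is exactly the hypothesis there, and gives the same bound for $\v_2$. Finally, on $\mathcal G$ with a large enough constant in Assumption~\ref{a:signal_noise_ratio}, $\v_1,\v_2$ are small enough that $\v_1<1-\|P_{\Ncal_1}P_{\Ncal_2}\|_2-\v_2$, so Theorem~\ref{thm:subspace_est} applies: $\|R_{\Jcal}\|_2=\v_2$ by definition, giving $\|P_{\Jcal}-P_{\widehat\Jcal}\|_2\le 2\v_2/(1-\|P_{\Ncal_1}P_{\Ncal_2}\|_2)$, and $\|R_{\Ical_k}\|_2\lesssim\|\Delta_k\|_2+\|P_{\Jcal}-P_{\widehat\Jcal}\|_2\lesssim\v_2/(1-\|P_{\Ncal_1}P_{\Ncal_2}\|_2)$, which fed into $\|P_{\Ical_k}-P_{\widehat\Ical_k}\|_2\le 2\|R_{\Ical_k}\|_2$ yields the last display.

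The technical heart, and the main obstacle, is step (ii): obtaining the \emph{refined} perturbation bound whose leading term scales with $n^{1/2}+r_k^{1/2}$ rather than $n^{1/2}+p_k^{1/2}$ (which hinges on $P_{\col(X_k)}$ killing the first-order term of $\Delta_k$) and whose remainder has precisely the product form $\|Z_k\|_2\|U_k^\top Z_k\|_2/\sigma_{r_k}(X_k)^2$, together with a clean treatment of the over-estimated-rank case $\widehat r_k>r_k$ for $\v_1$ (bounding the leakage of the extra noise directions into $\col(X_k)$) and verifying that the spectral-gap condition needed for Davis--Kahan holds on $\mathcal G$. The Gaussian concentration bounds, the union bound, and the substitution into Theorems~\ref{theorem:main}--\ref{thm:subspace_est} are comparatively routine.
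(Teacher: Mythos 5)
Your proposal has the same architecture as the paper's proof: concentrate $\|Z_k\|_2$, $\|Z_kV_k\|_2$ and $\|U_k^\top Z_k\|_2$; establish a refined perturbation bound with leading term $\delta_k(n^{1/2}+r_k^{1/2})/\sigma_{r_k}(X_k)$ and second-order term $\delta_k^2(n^{1/2}+p_k^{1/2})(p_k^{1/2}+r_k^{1/2})/\sigma_{r_k}(X_k)^2$, for $\|\Delta_k\|_2$ when $\widehat r_k=r_k$ and for the sandwiched quantity when $\widehat r_k\geq r_k$; then substitute into Theorems~\ref{theorem:main} and~\ref{thm:subspace_est}. Where you differ is the derivation of that key lemma: you use a resolvent/second-order expansion of the spectral projector of $Y_kY_k^\top$, whereas the paper (Lemma~\ref{lemma:probab}) uses an algebraic self-bounding identity, $\widehat U_\perp^\top U=\widehat\Sigma_2\widehat V_\perp^\top V\Sigma_1^{-1}-\widehat U_\perp^\top ZV\Sigma_1^{-1}$ combined with its $V$-side analogue and solved under Assumption~\ref{a:signal_noise_ratio}, together with the identity $\|P_{\col(X_k)}\Delta_k\|_2=\|U^\top\widehat U_\perp\|_2$ in the over-estimated case. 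Both routes are standard and yield the same bound; the paper's identity handles the remainder non-asymptotically without tracking higher-order terms of an expansion, while your expansion makes the first-order structure more transparent.

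Two inaccuracies in your mechanism are worth correcting, though neither breaks the final bound. First, $P_{\col(X_k)}$ does \emph{not} annihilate the full first-order term of $\Delta_k$: that term is $U_{k\perp}(U_{k\perp}^\top Z_kV_k)\Sigma_k^{-1}U_k^\top+U_k\Sigma_k^{-1}(V_k^\top Z_k^\top U_{k\perp})U_{k\perp}^\top$, and left-multiplication by $U_kU_k^\top$ kills only the first summand. Consequently $P_{\col(X_1)}\Delta_1P_{\col(X_2)}$ is first order, not second order. The refinement from $n^{1/2}+p_k^{1/2}$ to $n^{1/2}+r_k^{1/2}$ does not come from annihilation at all: it comes from the fact that the first-order term involves only the projected Gaussian $Z_kV_k\in\mathbb{R}^{n\times r_k}$, so both summands already have norm $\lesssim\delta_k(n^{1/2}+r_k^{1/2})/\sigma_{r_k}(X_k)$; the surviving pieces therefore contribute exactly the claimed leading order. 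The sandwiching is needed only in the over-estimated case, to avoid $\|\Delta_k\|_2=1$. Second, for the subspace bounds you try to \emph{derive} the separation condition $\varepsilon_1<1-\|P_{\Ncal_1}P_{\Ncal_2}\|_2-\varepsilon_2$ from Assumption~\ref{a:signal_noise_ratio}; this would additionally require $1-\|P_{\Ncal_1}P_{\Ncal_2}\|_2$ to dominate the $\varepsilon$ bounds, which is not assumed. The theorem instead assumes $\widehat r_J=r_J$ outright, and the Davis--Kahan step in Theorem~\ref{thm:subspace_est} only needs the population eigengap $1-\|P_{\Ncal_1}P_{\Ncal_2}\|_2>0$ together with $\dim(\widehat{\Jcal})=\dim(\Jcal)$, so the paper invokes it directly without verifying the separation condition.
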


Theorem~\ref{thm:probabilistic} makes explicit the distinctive effect of rank estimation on $\varepsilon_1$ and $\varepsilon_2$. Although we obtained identical bounds, the bound for $\varepsilon_1$ holds true for all $\widehat r_k \geq r_k$, whereas the bound for $\varepsilon_2$ is only true when $\widehat r_k = r_k$. Intuitively, rank overestimation does not adversely affect $\varepsilon_1$, as the alignment of joint directions is already captured. In contrast, adding extra noise dimensions increases the chance of a random strong alignment, pushing $\varepsilon_2$ to be high.

Assumption~\ref{a:signal_noise_ratio} is a signal-to-noise ratio assumption that is common in the random matrix theory due to known concentration $\|Z_k\|_2  \asymp\delta_k(n^{1/2}+p_k^{1/2})$ \citep{vershyninHighDimensionalProbabilityIntroduction2025}, with high probability. This leads to the standard bound $\|\Delta_k\|_2 \lesssim  \delta_k(n^{1/2} + p_k^{1/2})/\sigma_{r_k}(X_k)$ via Davis–Kahan theorem \citep{Davis1970TheRO}. Our bounds on $\varepsilon_k$, however, are tighter. %because we exploit the projection of $\Delta_k$ onto small low-dimensional subspaces.
Specifically, the first term does not depend on the number of features $p_k$, and the dependence on $p_k$ is only in the second term, which is of smaller order when the signal $\sigma_{r_k}(X_k)$ is large. Instead, we have the explicit dependence on the rank $r_k$, which highlights how smaller-dimensional signals are easier to estimate.

The third bound highlights the difficulties that arise from the non-orthogonal individual subspaces $\mathcal{N}_k$. As expected,  high alignments across these subspaces makes the separation of joint and individual structures more difficult, affecting estimation accuracy.

%The proof of Theorem~\ref{thm:probabilistic} is based on the analysis in \cite{cai2018rate}, which offers a tighter bound than the standard Davis–Kahan theorem \citep{Davis1970TheRO}. In particular, a direct application of Davis–Kahan yields a factor of $n^{1/2}+\min_k p_k^{1/2}$ in the numerator of the expressions above, instead of $n^{1/2}$.
}

\section{{Proposed Algorithm Based on Product of Projection Matrices}}\label{sec:estimation}

 In practice, our primary goal is to use the observed spectrum of $P_{\col (\widehat{X}_1)} P_{\col (\widehat{X}_2)}$ to accurately determine the joint rank. Our new theoretical insights characterize spectrum perturbation in terms of $\varepsilon_1$ and $\varepsilon_2$, both of which are unknown in practice { and depend on the estimates $\widehat X_k$.} {Specifically, Theorem~\ref{theorem:main} tells that singular values above $1-\varepsilon_1$} correspond to the joint, while those below $\varepsilon_2$ correspond to the noise directions. Therefore, we focus on the following practical questions: (i)~{ how to estimate $\col (\widehat{X}_k)$ for the signal $X_k$} in each view $k\in \{1,2\}$? (ii)~how to estimate $\varepsilon_1$ in practice to identify the top cluster of potential joint directions? (iii)~how to bound $\varepsilon_2$ to ensure the top cluster contains only joint directions and avoid overlap from random noise alignment? {From the discussion in Section~\ref{sec:noisy}, we also have the following insights: (i) rank overestimation is preferred to underestimation since it leads to smaller $\varepsilon_1$; (ii) simultaneously, overestimation pushes $\varepsilon_2$ closer to 1 which makes it harder to distinguish the joint signal from the noise; (iii) rank-to-dimension ratio, signal-to-noise (snr) ratio, and individual component alignment all have influence on the final estimation quality -- in general, high signal rank, low snr, and high degree of alignment make the problem highly intractable.} Below, we outline answers to these practical questions and summarize our proposed method.

\textbf{(i) Estimation of $\col(X_k)$.}
\label{sec:obtaining_estimates}
To estimate the signal $X_k$, we construct $\widehat{X}_k$ using truncated singular value decomposition of observed $Y_k$  for each view $k \in \{1,2\}${, an approach which allowed us to derive probabilistic bounds in Section~\ref{sec:noisy_statistical}}. {Since overestimation is preferred over underestimation, we choose the level of truncation based on the criterion in \cite{gavish2014optimal}, which tends to overestimate the signal rank.} Let $\beta_k = n / p_k$ be the aspect ratio, and %$\widehat{\sigma}_k = y_{\text{med}}/\sqrt{n\mu_k}$ be a robust estimate for the noise variance where 
$y_{\text{med}}$ be the median singular value of $Y_k$. %and $\mu_k$ is the median of the Mar\v{c}henko-Pastur distribution with parameters $(n,p_k)$. 
Then we only keep those singular vectors whose corresponding singular values exceed the threshold $(0.56\beta_k^3 - 0.95\beta_k^2 + 1.82\beta_k + 1.43) \widehat{\sigma}_k$. This threshold is designed to obtain an optimal (in mean squared error) reconstruction of the signal matrix. {Because severe overestimation is still problematic as it introduces potentially many noise directions, we show how to filter out these directions in part (iii) while also we provide a diagnostic plot that allows to confirm the choice.}

\textbf{(ii) Estimation of $\varepsilon_1$ via rotational bootstrap.} \label{subsubsec:bootstrap}{
From Theorem~\ref{theorem:main}, the number of singular values of $\widehat{M} = P_{\col(\widehat{X}_1)} P_{\col(\widehat{X}_2)}$ exceeding $1-\varepsilon_1$ provides an estimate for the joint rank, $r_{\Jcal}$. This approach, however, requires an estimate for the threshold parameter $\varepsilon_1$. To this end, we propose a novel rotational bootstrap procedure. Previous bootstrap procedures, e.g., in \citet{prothero2024data}, independently resample each view's signal subspace, $U_k$. This fails to preserve the alignment between $U_1$ and $U_2$, since in high-dimensional settings, independently sampled subspaces are likely to be nearly orthogonal, particularly when the rank-to-dimension ratio is small. As a result, independent bootstraping leads to underestimation of $\varepsilon_1$, and consequently, an unreliable estimate of the joint rank. 
}

{ In contrast, our proposed method circumvents this issue by sampling the subspaces jointly, thereby preserving their relative orientation. Specifically, let $\Sigma_{\widehat{M}}$ be a diagonal matrix containing singular values of $\widehat M = P_{\col(\widehat X_1)}P_{\col(\widehat X_2)}$. We take $[U_{1,(b)}, U_{2,(b)}]$ to be the first $\widehat{r}_1 + \widehat{r}_2 $ left singular vectors of a Gaussian matrix $N$ of size $n \times n $.} We then align the signal subspaces by updating $U_{2,(b)}$ to be $U_{1,(b)} \cos (\Sigma_{\widehat{M}}) + U_{2,(b)} \sin (\Sigma_{\widehat{M}})$. By construction, the singular values of $U_{1,(b)}^\top U_{2,(b)}$ are exactly $\cos (\Sigma_{\widehat{M}})$. For each bootstrap sample, the data replicate is generated as $Y_{k,(b)} := U_{k,(b)} \widehat \Sigma_k V^\top_{k,(b)}  + \widehat{E}_k$, where $\widehat \Sigma_k$ is a matrix of singular values from $\widehat X_k$, { $V_{k,(b)} \in \mathbb{R}^{p \times \widehat{r}_k}$ are orthonormal, drawn uniformly and independently from the Haar measure}, and $\widehat E_k$ is the adjusted estimate of the noise matrix (see Appendix~C in \citet{prothero2024data}). Here $X_{k, (b)} = U_{k,(b)} \widehat \Sigma_k V^\top_{k,(b)}$ represents the signal replicate. Using data replicate $Y_{k,(b)}$, the corresponding truncated singular value decomposition of rank $\widehat r_k$ is given by $\widehat{X}_{k,(b)} := \widehat{U}_{k,(b)} \widehat D_{(b)} \widehat V_{k,(b)}^\top$. The resulting projection matrices $P_{\col(X_k),(b)} := U_{k,(b)} U^\top_{k,(b)}$ and $P_{\col(\widehat{X}_k),(b)} :=  \widehat{U}_{k,(b)}  \widehat{U}^\top_{k,(b)}$ represent one bootstrap sample of the true and estimated signal subspaces, respectively, allowing estimation of principal angles over replications. Given $B$ bootstrap replicates, we estimate $\varepsilon_1$ as:
\begin{eqnarray}
\widehat{\varepsilon}_1 &:= \frac{1}{B}\sum_{b = 1}^B \|P_{\col(X_1),(b)}(\Delta_{1,(b)}+\Delta_{2,(b)}+\Delta_{1,(b)}\Delta_{2,(b)})P_{\col(X_2),(b)}\|_2,
\label{eqn:epsilon_estimates}
\end{eqnarray}
where $\Delta_{k,(b)} = P_{\col(X_k),(b)}-P_{\col(\widehat{X}_k),(b)}, k \in \{1,2\}$. 

% { Our bootstrap approach is conceptually similar to the work of \citet{prothero2024data}, but with a crucial modification for estimating alignment. Their method samples subspaces $U_{k,(b)} \in \mathbb{R}^{n \times \widehat{r}_k}$ for each view independently from the Haar measure. While suitable for other problems, this independent sampling is flawed for estimating $\varepsilon_1$ as it systematically underestimates the true alignment. In contrast, the numerical comparison in Appendix~\ref{sec:bootstrap-ablation} demonstrates that our proposed estimator maintains robust coverage, with accuracy improving as the signal-to-noise ratio increases.}

\textbf{(iii) Filtering out noise directions via bound on $\varepsilon_2$.} When the estimated subspaces $\col(\widehat{X}_1)$ and $\col(\widehat{X}_2)$ contain noise directions—a consequence of low signal-to-noise ratio or rank over-specification—these directions can exhibit spurious alignment purely by chance. {Such random alignments produce a cluster of singular values bounded from above by $\varepsilon_2$, as established in Theorem~\ref{theorem:main}. A large value of $\varepsilon_2$ complicates the separation of joint and noise subspaces, as it implies a potential overlap in their corresponding singular value distributions.}

To filter out noise directions, one could adapt the bootstrap methodology to estimate $\varepsilon_2$ and use it as a filtering threshold. However, we find a simpler and more direct approach to be effective. By leveraging results from random matrix theory, we found a deterministic upper bound for $\varepsilon_2$ { that depends only on the rank-to-dimension ratios. }%As we demonstrate in Appendix~\ref{sec:bootstrap-ablation}, this analytical bound provides a conservative yet reliable threshold for separating joint signal from noise in practice.\at{need to revisit whether this is an accurate assessment}}

To motivate the analytical bound, consider a simplified scenario with no individual signals,  $P_{\col(X_K)}=P_{\Jcal}$, with $P_{\col(\widehat X_k)} = P_{\Jcal} + P_{\text{noise}_k}$, that is the estimated signal is exactly the joint signal plus orthogonal noise directions. {Formally, $P_{\text{noise}_k}$ is the projection onto the subspace $\Jcal^\perp \cap \col(\widehat X_k)$.} Then $
\varepsilon_2 = \left\|P_{{\col (X_1)}}\Delta_2 + \Delta_1P_{{\col (X_2)}}+\Delta_1\Delta_2\right\|_2 = \left\|P_{\text{noise}_1}P_{\text{noise}_2}\right\|_2$, representing the maximal singular value of the product of random projections. While $\varepsilon_2$ is generally more complex, the key intuition remains: only singular values above the threshold expected from random noise alignment can be confidently identified as the signal. 

We explicitly characterize the distribution of the entire spectrum of the product of random projections $P_{\text{noise}_1}P_{\text{noise}_2}$ by leveraging results from random matrix theory, which leads to the proposed analytical bound on $\left\|P_{\text{noise}_1}P_{\text{noise}_2}\right\|_2$. Let $P_{\text{noise}_1}$ and $P_{\text{noise}_2}$ denote two independent projection matrices onto random subspaces (drawn from a Haar measure) of $\R^n$ with ranks $r_1$ and $r_2$. Let  $q_k := r_k/n$ for $k\in \{1,2\}$ be the aspect ratio. Then the asymptotic distribution of $\lambda$, the squares of the singular values of $P_{\text{noise}_1}P_{\text{noise}_2}$, as $n\to \infty$ \citep{bouchaud2007large} has a continuous part 
\begin{equation}
f(\lambda) = \frac{\sqrt{(\lambda_+ - \lambda)(\lambda - \lambda_-)}}{2\pi\lambda(1-\lambda)}, \quad 
\lambda_{\pm} = q_1 + q_2 - 2q_1q_2 \pm 2\sqrt{q_1q_2(1-q_1)(1-q_2)},
\label{eqn:noise_bound}
\end{equation}
and two delta peaks $A_0\delta(\lambda)$ and $A_1\delta(\lambda-1)$ with $A_0 =  1 - q_1 \wedge q_2$ and $A_1 = (q_1 + q_2 -1) \vee 0$. We illustrate this distribution in Appendix~\ref{sec:noise-plot-appendix} as a function of $q_1$, $q_2$. As the number of noisy directions increases, so does the maximal singular value $\left\|P_{\text{noise}_1}P_{\text{noise}_2}\right\|_2$, reaching the value of one at $q_1=q_2=1/2$. When $q_1, q_2 < 1/2$, the maximal singular value is given by $\lambda_+^{1/2}$ in~\eqref{eqn:noise_bound}. 

{
This analytical result provides a practical filtering strategy. We propose to classify only those singular values as joint signal whose magnitude exceeds the spectral edge $\lambda_+$ (in addition to exceeding $1-\widehat \varepsilon_1$). In practice, we calculate the aspect ratios $q_k$ using the full estimated marginal ranks, $\rank(\widehat{X}_k)$. This substitution makes the bound conservative, as it provides a worst-case noise estimate by assuming all non-joint directions could be noise.

% This conservative choice for the noise threshold is intentional. Recall that our signal bound from step (ii) classifies all directions with singular values above $1-\varepsilon_1$ as joint. As we show in Appendix~\ref{sec:bootstrap-ablation}, our bootstrap estimate for $\varepsilon_1$ tends to slightly overestimate the true value, especially in low signal-to-noise ratio regimes. This makes the signal criterion more permissive. To counterbalance this effect, we pair our permissive signal threshold with the conservative noise filter derived from RMT. This approach aims to strike a robust balance between identifying true joint directions and rejecting spurious ones.
}

\textbf{Full algorithm description and diagnostics.} In summary, given the observed data matrices $(Y_1,Y_2)$, we obtain estimates of $\col (\widehat{X}_1)$ and $\col (\widehat{X}_2)$ as in (i) to construct $\widehat{M} = P_{\col (\widehat{X}_1)} P_{\col (\widehat{X}_2)}$. We estimate the rank of the joint subspace $\widehat{r}_{\mathcal{J}}$ as the number of singular values in the spectrum of $\widehat{M}$ that are above $1-\widehat{\varepsilon}_1 \vee \lambda_{+}^{1/2}$ where $\widehat{\varepsilon}_1$ and $\lambda_+$ are given by \eqref{eqn:epsilon_estimates} and \eqref{eqn:noise_bound}, respectively. We estimate the joint and individual subspaces according to \eqref{eqn:estimated_subspaces}. We summarize the steps in Appendix~\ref{sec:algo-appendix}.

We accompany our algorithm with a diagnostic plot (e.g., Figure~\ref{fig:coad} for COAD dataset), which illustrates the spectrum of the product $\widehat{M}$ together with the bounds $1-\widehat{\varepsilon}_1$ and $\lambda_+^{1/2}$. 
 The best case scenario is observing three clear clusters in the spectrum (see e.g., Figure~\ref{fig:sing-val-clustering} case of Angle = 50, SNR = 22), which provides distinct separation of joint, non-orthogonal individual, and noise irrespective of threshold estimation. More commonly, we expect to only see two clusters (noise and individual/joint). If the cluster of top values is strictly above the threshold, we can be confident those are joint. In case the cluster is in the middle of the threshold, it likely contains a combination of joint and individual. A lack of clustering structure would imply a large magnitude of noise, preventing accurate separation. In the latter case, driven by our understanding of the magnitude of $\left\|P_{\text{noise}_1}P_{\text{noise}_2}\right\|_2$ from Appendix~\ref{sec:noise-plot-appendix}, we suggest further decreasing the initial marginal rank estimates to reduce aspect ratios $q_1$, $q_2$, which subsequently reduce $\lambda_+$. In \S~\ref{sec:data}, we illustrate these ideas when analyzing real data. % \at{I think here, we should say that the user may want to choose a smaller rank to see better behavior.}

\section{Simulations on Synthetic Data}
\label{sec:simulation}

We use simulated data to evaluate the performance of our proposed product of projections decomposition method for joint and individual structure identification. For comparison, we consider methods of \citet{lock2013joint}, \citet{feng2018angle},  \citet{gaynanova2019structural}, \citet{shu2019d} and \citet{park2020integrative}. 

We simulate data from the model \eqref{eq:additive} with $Y_k = J_k + I_k + Z_k$, $k=1, 2$, and $X_k = J_k + I_k$, all representing $n \times p_k$ matrices. We let $n = 50$, $p_1 = 80$, $p_2 = 100$, $\rank(J_k) = 4$, $\rank(I_1) = 5$, and $\rank(I_2) = 4$. We sample the singular values of $J_k$ and $I_k$ independently and identically from the uniform distribution. We sample the row space of $J_k$ and $I_k$ uniformly at random according to the Haar measure. To obtain the column spaces of $J_k$ and $I_k$, we first generate an $n\times{n}$ matrix $N$ where each entry is sampled independently from a normal distribution and let $UDV^T$ be the corresponding singular value decomposition; we then set the column space of $J_k$  to be $\col(U_{[:,1:4]})$ and the column space of $I_1$ to be $\col(U_{[:,5:9]})$; we then generate the column space of $I_2$ to be $\col(\cos(\phi)U_{[:,5:9]} + \sin(\phi)U_{[:,10:14]})$,  where $\phi$ encodes the largest principle angle between the individual subspaces $\col(I_1)$ and $\col(I_2)$.  We sample the entries of the noise matrix $Z_k \in \mathbb{R}^{n \times p_k}$ independently and identically from the Gaussian distribution $\Ncal(0, s_k^2)$. From Theorem 5.32 of \citet{vershynin2010introduction}, the largest singular value of an $n\times p_k$ matrix whose entries are sampled from $\Ncal(0, s^2)$ is bounded by $s(\sqrt{n} +\sqrt{p_k})$. Therefore, to control the signal-to-noise ratio (SNR), we take the noise variance for each view to be $s_k = \|X_k\|_2/ (\mathrm{SNR}(\sqrt{n} + \sqrt{p_k}))$.

The theoretical results of \S~\ref{sec:theory} emphasize three main parameters affecting accurate identification of the subspaces: maximum principle angle between the individual subspaces $\phi$, noise level, and the marginal ranks of estimated subspaces. To test the magnitude of each of these effects, we consider $\phi \in \{30^\circ,90^\circ\}$ and $\mathrm{SNR} \in \{0.5,2\}$. For the rank estimation, we consider three scenarios: (a) we let each method estimate the ranks based on the default method's implementation; (b) we supply marginal ranks that are smaller than the truth (under-specified rank); (c) we supply marginal ranks that are larger than the truth (over-specified rank).  For (b) and (c), the miss-specification is achieved by sampling uniformly from $\{1,2,3\}$ and subtracting it from $\rank(X_k)$ in the under-specified setting or adding it to $\rank(X_k)$ in the over-specified setting, respectively. Since the implementation of \citet{lock2013joint}, \citet{gaynanova2019structural} and \citet{park2020integrative} does not allow specification of marginal ranks, we only implement (b) and (c) with methods of \citet{feng2018angle}, \citet{shu2019d} and our approach.

Choosing the right accuracy measure for unsupervised multi-view decomposition is an open challenge, as discussed in \citet{gaynanova2024comments}. Many approaches rely on the percent of variance explained or the marginal view reconstruction. However, the percent of variance explained is maximized by a separate singular value decomposition for each view, favoring a lack of joint structure and larger marginal ranks. On the other hand, signal reconstruction error, though unbiased, does not account for structural assumptions or misclassification of the joint directions as individual signals. We propose a new accuracy measure using the subspace version of the false and true discovery metric from \citet{taeb2020false}. For an estimated subspace $\widehat{\mathcal{S}}$ of a true subspace $\mathcal{S}$, we evaluate the true positive proportion (TPP) and false discovery proportion (FDP). These are then combined into a single metric,  the F-score:
\begin{eqnarray}
\label{eqn:fdr}
   \text{TPP} = \frac{\Tr(P_{\widehat{\mathcal{S}}}P_{\mathcal{S}})}{\dim(\mathcal{S})}, \quad  \text{FDP} = \frac{\Tr\left(\left(I - P_{\mathcal{S}}\right) P_{\widehat{\mathcal{S}}}\right)}{\dim(\widehat{\mathcal{S}})}, \quad
   \text{F-score} = 2 \cdot \frac{(1 - \text{FDP}) \cdot \text{TPP}}{1 - \text{FDP} + \text{TPP}}.
\end{eqnarray}
We compute the F-scores for joint and individual subspace estimates and take their average F-score as a final accuracy measure, with larger values indicating better performance.

\begin{table}[!t]
\centering
\caption{Mean F-scores \eqref{eqn:fdr} on simulated data, larger values correspond to more accurate estimation of joint and individual subspaces.}
\scalebox{0.8}{
\begin{tabular}{lcccccccccccc}
& \multicolumn{4}{c}{Under-specified rank} & \multicolumn{4}{c}{Estimated rank} & \multicolumn{4}{c}{Over-specified rank} \\

 &  \multicolumn{2}{c}{SNR = 2} & \multicolumn{2}{c}{SNR=0.5} & \multicolumn{2}{c}{SNR = 2} & 
 
 \multicolumn{2}{c}{SNR=0.5} & \multicolumn{2}{c}{SNR = 2} & \multicolumn{2}{c}{SNR=0.5}  \\
 & $90^{\circ}$ & $30^{\circ}$ & $90^{\circ}$ & $30^{\circ}$ & $90^{\circ}$ & $30^{\circ}$ & $90^{\circ}$ & $30^{\circ}$ & $90^{\circ}$ & $30^{\circ}$ & $90^{\circ}$ & $30^{\circ}$  \\

JIV & -- & -- & -- & -- & 8.27 & 7.88 & \textbf{4.74} & \textbf{4.33} & -- & -- & -- & --\\
AJI & \underline{8.54} & \underline{7.65} & 5.07 & \underline{4.56} & \underline{9.81} & \underline{8.55} & 3.85 & 3.67 & \underline{9.17} & \underline{7.70} & \underline{5.01} & \textbf{4.70} \\
SLI$^*$ & -- & -- & -- & -- & 9.61 & 7.84 & 1.49 & 1.57 & -- & -- & -- & -- \\
DCC & \textbf{8.61} & 7.59 & \underline{5.10} & 4.52 & 9.80 & 8.28 & 3.78 & 3.60 & 9.14 & 7.53 & 4.93 & 4.46 \\
UNI$^*$ & -- & -- & -- & -- & 6.73 & 6.05 & \underline{4.34} & \underline{4.05} & -- & -- & -- & -- \\
PPD & 8.44 & \textbf{7.86} & \textbf{5.12} & \textbf{4.61} & \textbf{9.91} & \textbf{9.75} & 3.86 & 3.67 & \textbf{9.19} & \textbf{8.01} & \textbf{5.07} & \underline{4.67} \\

\end{tabular}}
% \begin{tabnote}
% 	All values are multiplied by $10$. The results are averaged across 50 iterations. The maximum standard error of the mean is $0.03$. We highlight the best result in \textbf{bold} and \underline{underline} second best result. We denote by ($^*$) methods that do not allow supplying marginal ranks for each view. We denote \citet{lock2013joint} as JIV, \citet{feng2018angle} as AJI, \citet{gaynanova2019structural} as SLI, \citet{shu2019d} as DCC, \citet{park2020integrative} as UNI, and our method as PPD.
% \end{tabnote}
\label{table:simulations}
\end{table}

Table~\ref{table:simulations} displays mean F-score values across 50 replications for each combination of angle, signal-to-noise ratio, and rank specification. The results align with what is expected based on Theorem~\ref{theorem:main}: for each method, the performance deteriorates with larger alignment between individual subspaces (smaller angle), and smaller signal-to-noise ratio. In the large signal-to-noise ratio case, rank under-specification is worse than rank over-specification, which is also in agreement with the theory.  In the low signal-to-noise ratio case, the results for default rank estimation are worse than those with controlled rank miss-specification, which we suspect is due to significant challenges in estimating rank correctly in this setting, with default estimation likely leading to even larger rank estimation error than controlled miss-specification. In terms of comparison across methods, it is difficult to imagine a uniformly best-performing one since different approaches have different strengths under different scenarios. However, our proposed method achieves the highest accuracy in most settings, followed closely by \citet{feng2018angle}'s method. A more detailed empirical and theoretical comparison with \citet{feng2018angle} is in Appendix~\ref{sec:comparison_ajive}, where we show that our method can more effectively distinguish between joint and individual signals.

\section{Real Data Experiments}\label{sec:data}

\subsection{Colorectal cancer data}
\label{sec:colorectal}

We analyze colorectal cancer data from the Cancer Genome Atlas with two distinct data views: RNA sequencing (RNAseq) normalized counts and miRNA expression profiles \citep{guinney2015consensus}. We process the data as in \citet{zhang2022joint}, leading to $p_1 = 1572$ RNAseq and $p_2 = 375$ miRNA variables. Colorectal cancer has been classified into four consensus molecular subtypes based on gene expression data \citep{guinney2015consensus}, and we obtain the labels from the Synapse platform (Synapse ID syn2623706). After filtering for complete data, we arrive at $n = 167$ samples from two views with cancer subtype information. Our goal is to extract joint and individual signals and determine which signals are most predictive for cancer subtypes. We apply the proposed product of projections methods and compare it with other methods from \S~\ref{sec:simulation}. We also consider standard principal component analysis for each view as the baseline.

First, we determine the ranks corresponding to joint and individual structures for each method.
For the proposed approach, we estimate the marginal ranks using the truncation method of \citet{gavish2014optimal}.  The scree plot for each view is shown in Figure~\ref{fig:coad} (left), resulting in ranks of $40$ and $42$, respectively. We then apply the proposed method, generating the diagnostic plot of the observed product of the projections spectrum (Figure~\ref{fig:coad}, middle).  In the plot, the blue region represents the noise cutoff $\lambda_+^{1/2}$, and the green region marks the top cluster based on the bootstrap estimate of $\varepsilon_1$. Observe that the spectrum does not show clear clustering, with the noise directions falling well within the region identified by the bootstrap. This inconsistency is likely due to the high rank-to-dimension ratios, which inflate both bounds and the low signal-to-noise ratio, which affects the bootstrap estimate. To address the inclusion of noise directions, we re-examined the scree plot presented in Figure~\ref{fig:coad}. The initial marginal ranks appeared too high, so we identified the elbow point at rank $16$ for each view based on the scree plot.

After adjusting the ranks, we re-applied the proposed approach with the updated diagnostic plot (Figure~\ref{fig:coad}, right). The updated plot shows more distinct clustering, with a clear separation between lower and higher singular values. Both the bootstrap and noise bounds support this observation by identifying the cluster of high singular values as the joint structure. This analysis highlights the effectiveness of the proposed diagnostic plot, enabling practitioners to fine-tune parameters to obtain a more confident separation of joint directions.

\begin{figure}[!t]
    \centering
    \includegraphics[width=0.9\linewidth]{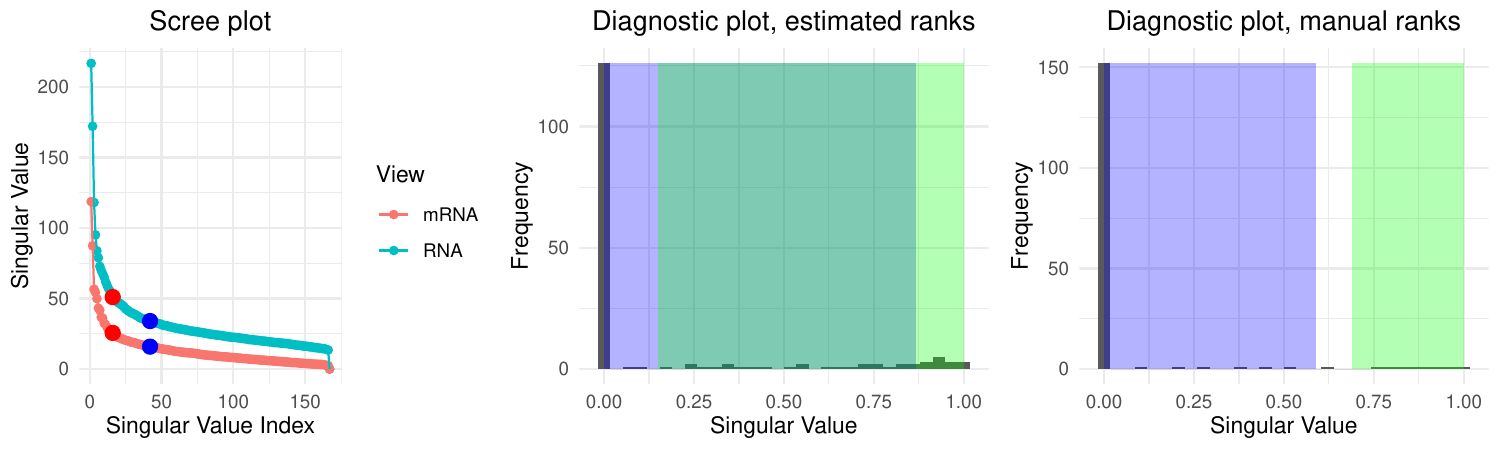}
    \caption{\textbf{Left:} Scree plot for RNASeq and mRNAseq data. \textcolor{blue}{Blue point} denotes the cutoff suggested by \citet{gavish2014optimal}. \textcolor{red}{Red point} indicates our suggested cutoff. \textbf{Middle / Right:} Spectrum of the product of projection matrices onto each view for two different marginal ranks. The \textcolor{blue}{blue region} indicates the random direction bound, $[0, \lambda_+^{1/2}]$, used to filter out the noise directions. The \textcolor{green}{green region} is given by $[1-\widehat{\varepsilon}_1,1]$ where $\widehat{\varepsilon}_1$ is a bootstrap estimate of $\varepsilon_1$.}
    \label{fig:coad}
\end{figure}

Next, we evaluate the predictive power of the estimated joint and individual components for cancer subtype classification using multinominal regression.  For a fair comparison, we use the same marginal ranks as ours for \citet{feng2018angle}, \citet{shu2019d}, and principal component analysis. For other methods, we use their default rank estimation procedures, as custom ranks could not be specified. We assess classification performance separately for the joint and the individual subspaces.  We expect that the joint components will be closely related to the cancer subtype, while the individual components would capture differences between RNA and mRNA platforms unrelated to subtypes. Table~\ref{table:coad} shows the estimated ranks, miss-classification error rates, and the smallest angle between individual subspaces for each method. As expected, most methods perform better using joint components than individual ones. The exceptions are principal components analysis and the methods of \citet{lock2013joint} and \citet{park2020integrative}. For principle components analysis and \citet{lock2013joint}, the angles between individual subspaces are small, suggesting that the joint structure may be mistaken for the individual. For \citet{park2020integrative}, the estimated marginal rank of miRNA view is much smaller compared to all other approaches, so it is possible that the joint structure was missed in that view. For joint subspace, 
\citet{gaynanova2019structural} achieves the lowest misclassification rate; however, the corresponding estimated ranks are much higher than for other approaches.  In comparison, our method offers a more parsimonious representation of the joint components while retaining strong predictive power, effectively balancing simplicity and informativeness. \citet{feng2018angle} and \citet{shu2019d} achieve slightly better miss-classification rates (by $1\%$) but at the cost of including an additional joint component. From the angles between the individual subspaces reported in Table~\ref{table:coad}, we see that this additional component corresponds to aligned individual directions with an angle of $52$ degrees between the views, which is excluded from the joint subspace in our approach. Indeed, in Figure~\ref{fig:coad}, this angle corresponds to the rotated individual directions by falling exactly in-between our noise bound, $\lambda_+^{1/2}$, and bootstrap bound, $1-\varepsilon_1$. This again highlights the value of the diagnostic plot and underscores the clear subspace separation achieved by our method.

\begin{table}[!t]
\centering
\caption{Mean cancer subtype misclassification rates in percentages together with the minimum principle angle between estimated individual subspaces.}
\scalebox{0.8}{\begin{tabular}{cccccccc}
&  \multicolumn{2}{c}{\textbf{Joint ($\Jcal$)}} & \multicolumn{2}{c}{\textbf{RNAseq} $(\Ical_1)$} & \multicolumn{2}{c}{\textbf{miRNA}  $(\Ical_2)$} & \multicolumn{1}{c}{\textbf{Angle}}\\
& Rank & Error & Rank & Error & Rank & Error  & $\angle(\Ical_1, \Ical_2)$ \\
PCA & -- & -- & 16 & 0 & 16 & 6.6 &  2 \\
JIV & 2 & 44.3 & 20 & 0 & 13 & 30.5 &  16 \\ 
AJI & 9 & 7.2 & 7 & 50.9 & 8 & 61.7  & 57 \\
SLI & 32 & 0 & 41 & 43.7 & 10 & 59.9  & 90 \\
DCC & 9 & 7.8 & 7 & 53.3 & 7 & 66.5  & 90 \\
UNI & 1 & 55.1 & 5 & 0 & 1 & 44.3  & 74 \\
PPD & 8 & 8.9 & 8 & 53.9 & 8 & 62.9  & 52 \\
\end{tabular}}
% \begin{tabnote}
%     We denote principle components analysis by PCA, \citet{lock2013joint} as JIV, \citet{feng2018angle} as AJI, \citet{gaynanova2019structural} as SLI, \citet{shu2019d} as DCC, \citet{park2020integrative} as UNI, and our method as PPD.
% \end{tabnote}
\label{table:coad}
\end{table}

\subsection{Nutrigenomic mice data}
\label{sec:nutrigenomic}

\begin{figure}[t]
    \centering
    \includegraphics[width=0.9\linewidth]{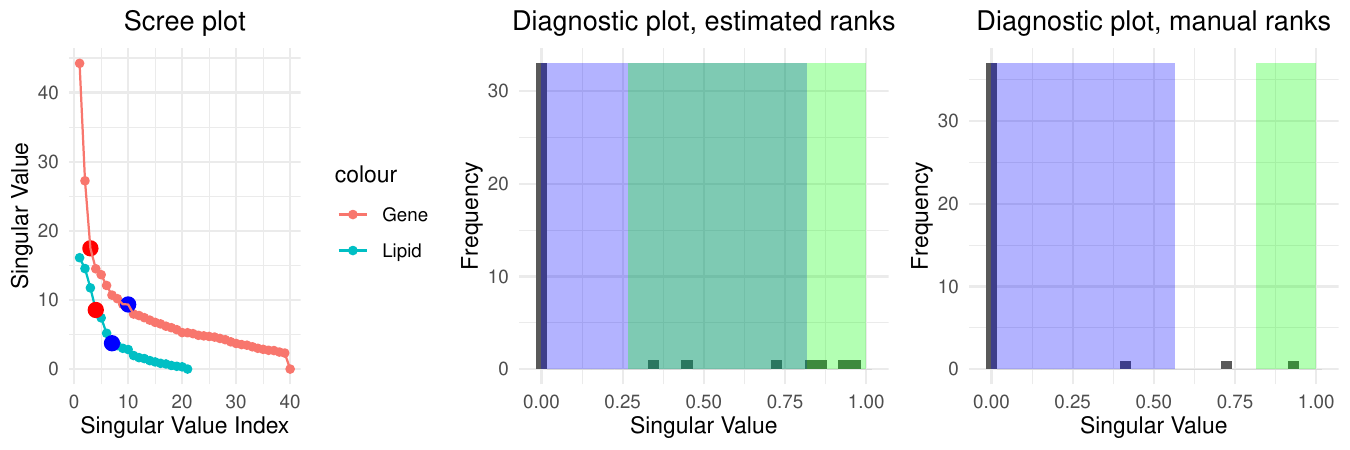}
    \caption{\textbf{Left:} Scree plot for gene expression and lipid content data. \textcolor{blue}{Blue point} denotes the cutoff suggested by \citet{gavish2014optimal}. \textcolor{red}{Red point} indicates our suggested cutoff. \textbf{Middle / Right:} Spectrum of the product of projection matrices onto each view for two different marginal ranks. The \textcolor{blue}{blue region} indicates the random direction bound, $[0, \lambda_+^{1/2}]$, used to filter out the noise directions. The \textcolor{green}{green region} is given by $[1-\widehat{\varepsilon}_1,1]$ where $\widehat{\varepsilon}_1$ is a bootstrap estimate of $\varepsilon_1$.}
    \label{fig:plot_mice}
\end{figure}

\begin{table}[t]
    \centering
\caption{Mean genotype and diet misclassification rates in percentages together with the minimum principle angle between estimated individual subspaces.}
\scalebox{0.8}{\begin{tabular}{ccccccccccc}
&  \multicolumn{3}{c}{\textbf{Joint ($\Jcal$)}} & \multicolumn{3}{c}{\textbf{Gene} $(\Ical_1)$} & \multicolumn{3}{c}{\textbf{Lipid}  $(\Ical_2)$} & \multicolumn{1}{c}{\textbf{Angles}}\\
& Rank & Type & Diet & Rank & Type & Diet & Rank & Type & Diet & $\angle(\Ical_1, \Ical_2)$ \\
PCA & -- & -- & -- & 3 & 0 & 60 & 4 & 0 & 0 & 22 \\
JIV & 2 & 0 & 27.5 & 3 & 25 & 57.5 & 3 & 30 & 12.5 & 44 \\
AJI & 2 & 0 & 55 & 1 & 52.5 & 77.5 & 2 & 47.5 & 10 & 66 \\
SLI & 6 & 0 & 0 & 8 & 45 & 42.5 & 1 & 50 & 57.5 & 90 \\
DCC & 3 & 0 & 37.5 & 1 & 45 & 65 & 1 & 45 & 47.5 & 85 \\
UNI & 2 & 0 & 62.5 & 1 & 52.5 & 77.5 & 2 & 35 & 0 & 72 \\
PPD & 1 & 0 & 80 & 2 & 37.5 & 57.5 & 3 & 30 & 0 & 44 \\
    \end{tabular}}
    % \begin{tabnote}
    % We denote principle components analysis by PCA, \citet{lock2013joint} as JIV, \citet{feng2018angle} as AJI, \citet{gaynanova2019structural} as SLI, \citet{shu2019d} as DCC, \citet{park2020integrative} as UNI, and our method as PPD.
    % \end{tabnote}
    \label{table:mice}
\end{table}

The nutrigenomic mice data \citep{martin2007novel} contain measurements from $n = 40$ mice from two views: gene expressions ($p_1 = 120$) and lipid content of the liver ($p_2 = 21$). The data are available as part of the multi-omics R package \citep{rohart2017mixomics}. We process the data as in \citet{yuan2022exponential}. The mice were selected from two genotypes (wild-type and PPAR-$\alpha$ mutant). They were administered five distinct diets consisting of corn and colza oils (ref), a saturated fatty acid diet of hydrogenated coconut oil (coc), an Omega6-rich diet of sunflower oil (sun), an Omega3-rich diet of linseed oil (lin), and a diet with enriched fish oils (fish). Our goal is to identify the joint and individual components across views. We expect that the joint components would be predictive of the mice genotype, while individual components for the lipid view would be predictive of the diet. 

As in \S~\ref{sec:colorectal}, we start by determining the ranks for each view.  The truncation method of \citet{gavish2014optimal} leads to ranks $11$ and $7$ for gene expressions and lipid concentrations, respectively. Figure~\ref{fig:plot_mice}, left, shows the corresponding scree plot, which suggests that the selected ranks may be too large. Our proposed diagnostic plot based on these ranks is displayed in Figure~\ref{fig:plot_mice}, middle. The noise bound is quite large, and overlaps with the bootstrap bound, suggesting that the rank reduction would be beneficial in filtering the noise directions. Previous analysis of these data has used ranks $3$ for genes and $4$ for lipids \citep{yuan2022exponential}. These ranks appear to better align with an elbow on the scree plot and lead to a diagnostic plot with a clear clustering structure and lack of overlap (Figure~\ref{fig:plot_mice}, right). We thus use these ranks for subsequent analysis.

We evaluate the predictive power of the estimated joint and individual components separately for the classification of genotype and diet based on multinomial linear regression. Table~\ref{table:mice} shows that the identified joint components for all methods perfectly separate the genotype, with our method providing the most parsimonious representation of rank equal to one. Predicting the diet from the individual lipid features turns out to be a harder task, with only our method and that of \citet{park2020integrative} achieving the perfect classification. While principle components analysis also achieves perfect diet classification, the angle between individual components of 22 degrees is small and suggests a missing joint structure. A closer look at Figure~\ref{fig:plot_mice} shows that the perfect classification of diet by our method is based on the rotated individual direction, which is not part of our joint estimate. In contrast, \citet{feng2018angle}'s method treats this direction as a joint component, leading to worse diet classification accuracy. As in the example from Section~\ref{sec:colorectal}, the diagnostic plot proves valuable in evaluating the potential over-specification of ranks and providing confidence in the separation of joint and individual components.

\section{Conclusion}

In this work, we describe a multi-view data decomposition method that naturally derives from the identifiability conditions for joint and individual subspaces. The advantages of our approach include subspace estimation guarantees, scalability, and interpretable diagnostics. %We demonstrate its utility through a range of simulation settings and real data applications.

Our work opens several avenues for future research. First, it would be valuable to rigorously extend our method to the multi-view setting with partially shared joint signals. Our current extension in Appendix~\ref{sec:multiview} assumes fully shared joint components and selects the joint rank via pairwise testing. Second, further theoretical developments could elucidate the properties of our rotational bootstrap approach and refine the asymptotic characterization of the noise bound to account for non-random subspaces. Finally, since many practical applications emphasize predictive performance, extending our method to the semi-supervised setting, as in \citet{park2021sparse, palzer2022sjive}, would be beneficial.

%pecifically, for the rotational bootstrap, developing a complete set of assumptions and deriving a convergence analysis could help explain when the bootstrap might fail in practice and how to select the appropriate number of sampling steps. Regarding the asymptotic distribution, quantifying the spectrum of the product of projection matrices when the underlying subspaces are not sampled uniformly but favor certain directions would be helpful. That is, quantifying the spectrum of \( P_1 P_2 \), where \( P_k \) projects onto \( \mathcal{I}_k \) perturbed by \( E_k \), would directly estimate our quantity \( \sigma_{\max}(P_{\mathcal{N}_1}P_{\mathcal{N}_2}) + \epsilon_2 \). 

\section{Acknowledgements}
Armeen Taeb is supported by NSF DMS-2413074 and by the Royalty Research Fund at the University of Washington. Irina Gaynanova's research was supported by NSF DMS-2422478.
% Bibliography styles

\bibliographystyle{biometrika}
\bibliography{bibliography.bib,IrinaReferences}

\newpage
\appendix
% \appendixone
% \newpage
\section{Techinical details}
\subsection{Proof of Theorem \ref{theorem:main}}
\label{sec:proofs}
Our proof relies on the following lemma, which we establish first.
\begin{lemma}\label{lemma:singularbounds} Let $X, Y\in \R^{n \times n}$. If $X^{\top}Y = 0$ or $XY^{\top} = 0$, then
\begin{enumerate}
\item $\sigma_r(X+Y)\geq \max(\sigma_r(X), \sigma_r(Y))$ for all $r\geq 1$ (assuming $\sigma_1 \geq \sigma_2\geq ....0$)
\item $\sigma_1^2(X+Y) \leq \sigma_1^2(X) + \sigma_1^2(Y)$
\end{enumerate}
\end{lemma}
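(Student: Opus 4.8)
I want to prove a small linear-algebra lemma: if $X,Y\in\mathbb R^{n\times n}$ satisfy $X^\top Y=0$ (or $XY^\top=0$), then (1) $\sigma_r(X+Y)\ge\max(\sigma_r(X),\sigma_r(Y))$ for all $r\ge 1$, and (2) $\sigma_1^2(X+Y)\le\sigma_1^2(X)+\sigma_1^2(Y)$. The natural route is to pass to the Gram matrix $(X+Y)^\top(X+Y)$ and exploit that the cross terms vanish.

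Let me sketch what I'd write...

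First, the case $X^\top Y=0$. Then $(X+Y)^\top(X+Y)=X^\top X + Y^\top Y + X^\top Y + Y^\top X = X^\top X + Y^\top Y$ since $Y^\top X = (X^\top Y)^\top = 0$. So the singular values squared of $X+Y$ are the eigenvalues of $X^\top X + Y^\top Y$, a sum of two PSD matrices. For part (2): $\sigma_1^2(X+Y) = \lambda_{\max}(X^\top X + Y^\top Y) \le \lambda_{\max}(X^\top X) + \lambda_{\max}(Y^\top Y) = \sigma_1^2(X)+\sigma_1^2(Y)$, by subadditivity of $\lambda_{\max}$ (Weyl). For part (1): since $Y^\top Y\succeq 0$, we have $X^\top X + Y^\top Y \succeq X^\top X$, so by Weyl's monotonicity the $r$-th largest eigenvalue satisfies $\lambda_r(X^\top X + Y^\top Y)\ge\lambda_r(X^\top X)$, i.e. $\sigma_r^2(X+Y)\ge\sigma_r^2(X)$; symmetrically $\ge\sigma_r^2(Y)$, giving the max. (I need the convention $\sigma_i=0$ for $i>\mathrm{rank}$, which the paper already uses implicitly when writing "$\sigma_1\ge\sigma_2\ge\cdots\ge 0$".)

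Now the case $XY^\top=0$. Here I'd reduce to the previous case by transposition: $XY^\top=0\iff (Y^\top)^\top(X^\top)... $ hmm, let me be careful — $XY^\top = 0$ means $(X^\top)^\top Y^\top... $. Actually $XY^\top=0 \iff (YX^\top)^\top = 0 \iff YX^\top=0$, and then $(X^\top)$ and $(Y^\top)$ play the roles: $(X^\top)(Y^\top)^\top$... no. Cleanest: apply the already-proven case to $X^\top$ and $Y^\top$, which satisfy $(X^\top)^\top(Y^\top) = X Y^\top = 0$. Wait, $(X^\top)^\top (Y^\top) = X Y^\top$? $(X^\top)^\top = X$ and times $Y^\top$ gives $XY^\top=0$. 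Yes. So the first case applied to $(X^\top,Y^\top)$ gives $\sigma_r(X^\top+Y^\top)\ge\max(\sigma_r(X^\top),\sigma_r(Y^\top))$ and the squared-norm bound; since $\sigma_r(A)=\sigma_r(A^\top)$ and $X^\top+Y^\top=(X+Y)^\top$, we're done.

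**Anticipated obstacle.** There's essentially no hard step — the only things to be careful about are (a) invoking the right form of Weyl's inequality (monotonicity under PSD perturbation for part 1, subadditivity of the top eigenvalue for part 2), and (b) handling the rank/indexing convention so the inequalities make sense when $r$ exceeds the rank of $X$ or $Y$. Here is the writeup.

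\begin{proof}
We first treat the case $X^{\top}Y = 0$. Throughout we adopt the convention $\sigma_i(A) = 0$ for $i > \rank(A)$, so that $\sigma_1(A)\geq \sigma_2(A)\geq\cdots\geq 0$ is defined for all indices. Since $Y^{\top}X = (X^{\top}Y)^{\top} = 0$ as well, the cross terms vanish:
\begin{align*}
(X+Y)^{\top}(X+Y) = X^{\top}X + X^{\top}Y + Y^{\top}X + Y^{\top}Y = X^{\top}X + Y^{\top}Y.
\end{align*}
In particular $\sigma_r^2(X+Y) = \lambda_r\!\left(X^{\top}X + Y^{\top}Y\right)$, where $\lambda_r(\cdot)$ denotes the $r$-th largest eigenvalue of a symmetric matrix.

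For part (1), note $Y^{\top}Y \succeq 0$, hence $X^{\top}X + Y^{\top}Y \succeq X^{\top}X$. By Weyl's monotonicity inequality for eigenvalues of symmetric matrices, $\lambda_r\!\left(X^{\top}X + Y^{\top}Y\right) \geq \lambda_r\!\left(X^{\top}X\right) = \sigma_r^2(X)$ for every $r\geq 1$. By the symmetric argument (using $X^{\top}X\succeq 0$), $\lambda_r\!\left(X^{\top}X + Y^{\top}Y\right)\geq \sigma_r^2(Y)$. Taking the maximum and then square roots gives
\begin{align*}
\sigma_r(X+Y) \geq \max\!\left(\sigma_r(X),\, \sigma_r(Y)\right), \qquad r\geq 1.
\end{align*}

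For part (2), apply Weyl's subadditivity bound for the largest eigenvalue of a sum of symmetric matrices:
\begin{align*}
\sigma_1^2(X+Y) = \lambda_1\!\left(X^{\top}X + Y^{\top}Y\right) \leq \lambda_1\!\left(X^{\top}X\right) + \lambda_1\!\left(Y^{\top}Y\right) = \sigma_1^2(X) + \sigma_1^2(Y).
\end{align*}

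It remains to treat the case $XY^{\top} = 0$. Set $X' := X^{\top}$ and $Y' := Y^{\top}$. Then $(X')^{\top}Y' = X Y^{\top} = 0$, so the case already proven applies to the pair $(X', Y')$. Since $\sigma_r(A^{\top}) = \sigma_r(A)$ for any matrix $A$ and any $r$, and $X' + Y' = (X+Y)^{\top}$, we obtain
\begin{align*}
\sigma_r(X+Y) = \sigma_r(X'+Y') \geq \max\!\left(\sigma_r(X'),\, \sigma_r(Y')\right) = \max\!\left(\sigma_r(X),\, \sigma_r(Y)\right)
\end{align*}
for all $r\geq 1$, and likewise $\sigma_1^2(X+Y) = \sigma_1^2(X'+Y') \leq \sigma_1^2(X') + \sigma_1^2(Y') = \sigma_1^2(X) + \sigma_1^2(Y)$. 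This completes the proof.
\end{proof}
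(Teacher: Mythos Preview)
Your proof is correct and follows essentially the same route as the paper: expand the Gram matrix $(X+Y)^\top(X+Y)=X^\top X+Y^\top Y$, then invoke Weyl's inequality (you phrase it as PSD monotonicity, the paper uses the additive form $\lambda_n(A)+\lambda_r(B)\le\lambda_r(A+B)\le\lambda_1(A)+\lambda_r(B)$, but these are equivalent here). Your transpose reduction for the $XY^\top=0$ case is slightly more explicit than the paper's ``proof is analogous'', but there is no substantive difference.
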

\begin{proof}[of Lemma~\ref{lemma:singularbounds}]
Without loss of generality, consider $X^{\top}Y = 0$ (proof is analogous for $XY^{\top} = 0$). Consider
$$
(X+Y)^{\top}(X+Y) = X^{\top}X + Y^{\top}Y.
$$
Thus, we have the eigenvalues satisfy
$$
\lambda_r\{(X+Y)^{\top}(X+Y) \} = \lambda_r(X^{\top}X + Y^{\top}Y).
$$

Weyl's inequality states:
\begin{align*}
    \lambda_n(X^{\top}X) +  \lambda_r(Y^{\top}Y) & \leq \lambda_r(X^{\top}X + Y^{\top}Y) \leq \lambda_1(Y^{\top}Y) +  \lambda_r(X^{\top}X) \\
    \lambda_n(X^{\top}X) +  \lambda_r(Y^{\top}Y) & \leq \lambda_r(X^{\top}X + Y^{\top}Y) \leq \lambda_1(X^{\top}X) +  \lambda_r(Y^{\top}Y) 
\end{align*}
{
$$
\lambda_r(X^{\top}X + Y^{\top}Y) \geq \max\left\{ \lambda_n(X^{\top}X) +  \lambda_r(Y^{\top}Y),  \lambda_n(X^{\top}X) +  \lambda_r(Y^{\top}Y)\right\}
$$

Recall that $\sigma^2_r(A) = \lambda_r(A^{\top}A)$. Since $\lambda_n(Y^{\top}Y) \geq 0$, $\lambda_n(X^{\top}X)\geq 0$, we get (1).

 Similarly,}
$$
\lambda_r(X^{\top}X + Y^{\top}Y) \leq \min\left\{\lambda_r(X^{\top}X) + \lambda_1(Y^{\top}Y), \lambda_1(X^{\top}X) + \lambda_r(Y^{\top}Y)\right\},
$$
 since $\lambda_1(Y^{\top}Y) \geq 0$, $\lambda_1(X^{\top}X)\geq 0$, taking $r=1$ yields (2).
\end{proof}

\begin{proof}[of Theorem~\ref{theorem:main}]
First, we know from Section~\ref{sec:noiseless} that
\begin{equation}
    P_{\col (X_1)} P_{\col (X_2)} =P_{\Jcal} + P_{\Ical_1}P_{\Ical_2}=P_{\Jcal} + P_{\Ncal_1}P_{\Ncal_2}= \left [U_{\Jcal}\quad U_{\Ncal_1} H \right] \begin{bmatrix}
        I_{\dim \Jcal} & 0 \\
        0 & \Sigma 
    \end{bmatrix} \left [U_{\Jcal}\quad U_{\Ncal_2} K\right]^\top,
\end{equation}
where $\Sigma$ is diagonal corresponding to the cosines of the non-zero principle angles between $\Ical_1$ and $\Ical_2$. Thus, $\sigma_r(P_{\col(X_1)}P_{\col(X_2)}) = 1$ for $1 \leq r \leq \dim (\Jcal)$ and $\sigma_r(P_{\col(X_1)}P_{\col(X_2)}) = \sigma_{r-\dim(\Jcal) + 1}$ for $\dim(\Jcal) +1 \leq r \leq \dim(\Jcal) + \rank (P_{\Ical_1} P_{\Ical_2})$.

The upper bound for each cluster follows directly from applying Weyl's inequality as
\begin{align*}
\sigma_r(\widehat M) &= \sigma_r(P_{\col(X_1)}P_{\col(X_2)} + P_{\col(X_1)}\Delta_2 + \Delta_1P_{\col(X_2)} + \Delta_1\Delta_2) \\
&\leq \sigma_r(P_{\col(X_1)}P_{\col(X_2)}) + \epsilon_2.
\end{align*}

To get the lower bound, we rewrite $\widehat M$ as:
\begin{align*}
\widehat M &= P_{\col(X_1)}P_{\col(X_2)} + P_{\col(X_1)}\Delta_2 + \Delta_1P_{\col(X_2)} + \Delta_1\Delta_2\\
&= \underbrace{P_{\col(X_1)}(I + \Delta_1 + \Delta_2 + \Delta_1\Delta_2)P_{\col(X_2)}}_{:=A_1} + \underbrace{P_{\col(X_1)}(\Delta_2 + \Delta_1\Delta_2)(I-P_{\col(X_2)})}_{:=A_2} \\
&\quad + \underbrace{(I-P_{\col(X_1)})(\Delta_1 + \Delta_1\Delta_2)P_{\col(X_2)}}_{:=A_3} + \underbrace{(I-P_{\col(X_1)})\Delta_1\Delta_2(I-P_{\col(X_2)})}_{:=A_4} 
\end{align*}
Note that by construction we have $(A_1 + A_2)^\top (A_3 + A_4) = 0$ and $A_1 A_2^\top = 0$. Then it holds that:
\begin{align*}
\sigma_r(\widehat{M}) &\geq \sigma_r(A_1 + A_2) & \text{(Lemma~\ref{lemma:singularbounds})} \\
&\geq \sigma_r(A_1)  & \text{(Lemma~\ref{lemma:singularbounds})}\\
&\geq \sigma_r(P_{\col(X_1)}P_{\col(X_2)}) - \epsilon_1. & \text{(Weyl's inequality)}   
\end{align*}
\end{proof}

\subsection{Proof of Theorem~\ref{thm:subspace_est}
}

In the proof of Theorem~\ref{thm:subspace_est}, we use the adaptation of the Davis-Kahan theorem as used in \citet{yu2015useful} and \citet{vershyninHighDimensionalProbabilityIntroduction2025}. The theorem is stated in terms of the $\sin\Theta$ distance between the subspaces. To apply the theorem, we state the definition of the $\sin \Theta$ distance and principle angles here. Specifically, the $\sin \Theta$ distance is used to measure the difference between two $n \times r$ matrices $V$ and $\widehat{V}$  with orthogonal columns. Suppose the singular values of $V^\top \widehat{V}$ are $\sigma_1 \geq \sigma_2 \geq \cdots \geq \sigma_r \geq 0$. Then we call
\[
\Theta(V, \widehat{V}) = \operatorname{diag}(\cos^{-1}(\sigma_1), \cos^{-1}(\sigma_2), \dots, \cos^{-1}(\sigma_r))
\]
as the principle angles. A quantitative measure of distance between the column spaces of $V$  $\widehat{V}$ is then $\|\sin \Theta(V, \widehat{V})\|_2$ or $\|\sin \Theta(V, \widehat{V})\|_F$, where the $\sin$ function is defined entrywise on the matrix $\Theta(V, \widehat{V})$. Note that the difference of projection matrices, $P_{\col(V)} - P_{\col(\widehat{V})}$, is related to the $\sin\Theta$ via the equality $\| P_{\col({V})} - P_{\col(\widehat{V})}\|_2$ = $\|\sin \Theta(V , \widehat{V})\|_2$. {We use the following adaptation of the Davis-Kahan theorem from \citet{vershyninHighDimensionalProbabilityIntroduction2025}.}%\citet{chenSpectralMethodsData2021}}

{
\begin{lemma}[Exercise 4.13 from \citet{vershyninHighDimensionalProbabilityIntroduction2025}]\label{lemma_kahan}Let $\Sigma,\widehat{\Sigma} \in \mathbb{R}^{n \times n}$ be symmetric with eigenvalues $\lambda_1 \geq \dots \geq \lambda_{n}$ and $\widehat{\lambda}_1 \geq \dots \geq \widehat{\lambda}_n$ respectively. Fix $1 \leq s <n$ with $\lambda_{s} - \lambda_{s+1}>0$. Let $V = (v_1,v_{2},\dots,v_s)\in\mathbb{R}^{n \times s}$ and let $\widehat{V} = (\widehat{v}_1,\widehat{v}_{2},\dots,\widehat{v}_{s}) \in \mathbb{R}^{n \times s}$ consist of the first $s$ eigenvectors of $\Sigma$ and $\widehat{\Sigma}$. Then,
$$ \|\sin \Theta(V, \widehat{V})\|_2 \leq \frac{2\|\Sigma - \widehat{\Sigma}\|_2}{\lambda_{s}-\lambda_{s+1}}.$$
\end{lemma}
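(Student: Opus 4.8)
This is a Davis--Kahan-style $\sin\Theta$ bound, so I would follow the classical route. Write $E := \widehat{\Sigma}-\Sigma$, $\delta := \lambda_s-\lambda_{s+1}>0$, and let $P := VV^\top$ and $\widehat{P} := \widehat{V}\widehat{V}^\top$ be the rank-$s$ spectral projections of $\Sigma$ and $\widehat{\Sigma}$ onto their top $s$ eigenvectors, with $P^\perp := I-P$. Since $\|\sin\Theta(V,\widehat{V})\|_2 = \|P-\widehat{P}\|_2$ (the identity recalled just above the lemma) and, for two orthogonal projections of equal rank, $\|P-\widehat{P}\|_2 = \|P^\perp\widehat{P}\|_2$, it suffices to bound $\|D\|_2$ with $D := P^\perp\widehat{P}$. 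I would first dispose of the easy regime: if $\|E\|_2 \geq \delta/2$, then the claimed right-hand side is $\geq 1 \geq \|\sin\Theta(V,\widehat{V})\|_2$ and there is nothing to prove; so from now on assume $\|E\|_2 < \delta/2$.

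The crux is a Sylvester-type identity. Because $P^\perp$ commutes with $\Sigma$ (both spectral in the same eigenbasis) and $\widehat{P}$ commutes with $\widehat{\Sigma}$, we get $\Sigma D - D\widehat{\Sigma} = P^\perp\Sigma\widehat{P} - P^\perp\widehat{\Sigma}\widehat{P} = P^\perp(\Sigma-\widehat{\Sigma})\widehat{P} = -P^\perp E\widehat{P}$, hence $\|\Sigma D - D\widehat{\Sigma}\|_2 \leq \|E\|_2$. For the matching lower bound, note that $D = P^\perp D\widehat{P}$, so in the ``mixed'' eigenbasis every nonzero entry of $\Sigma D - D\widehat{\Sigma}$ is the corresponding entry of $D$ rescaled by $\lambda_i-\widehat{\lambda}_j$ with $i>s$ and $j\leq s$; by Weyl's inequality $\widehat{\lambda}_j \geq \widehat{\lambda}_s \geq \lambda_s-\|E\|_2$ while $\lambda_i \leq \lambda_{s+1}$, so $|\lambda_i-\widehat{\lambda}_j| \geq \delta-\|E\|_2 > 0$ on the support. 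Turning this eigenvalue separation into the spectral-norm inequality $\|\Sigma D - D\widehat{\Sigma}\|_2 \geq (\delta-\|E\|_2)\|D\|_2$ — which I would do either via the resolvent/integral representation $D = -\int_0^\infty e^{t\Sigma'}(\Sigma D - D\widehat{\Sigma})e^{-t\widehat{\Sigma}'}\,dt$, where $\Sigma'$ and $\widehat{\Sigma}'$ are the restrictions of $\Sigma$ and $\widehat{\Sigma}$ to the ranges of $P^\perp$ and $\widehat{P}$, or more elementarily by testing $\Sigma D - D\widehat{\Sigma}$ against the top singular pair of $D$ — and combining with the upper bound gives $\|D\|_2 \leq \|E\|_2/(\delta-\|E\|_2) \leq 2\|E\|_2/\delta$, the last step using $\|E\|_2 < \delta/2$.

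The main obstacle is exactly that lower-bound step: an entrywise lower bound on the multipliers $\lambda_i-\widehat{\lambda}_j$ does not by itself lower-bound a spectral norm, since Hadamard-type rescalings can shrink it; one must genuinely use the block structure $D = P^\perp D\widehat{P}$, equivalently that the Sylvester operator $X\mapsto \Sigma X - X\widehat{\Sigma}$ restricted to matrices of this form is invertible with operator norm at most $(\delta-\|E\|_2)^{-1}$. Everything else (Weyl's inequality, the commutation identities, and the reduction to $\|P^\perp\widehat{P}\|_2$) is routine.
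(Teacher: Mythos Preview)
The paper does not actually prove this lemma; it is stated as a citation (Exercise~4.13 in Vershynin, with \citet{yu2015useful} also mentioned) and then used as a black box inside the proof of Theorem~\ref{thm:subspace_est}. So there is no ``paper's own proof'' to compare against.

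That said, your argument is correct and is the standard Davis--Kahan route. The reduction to $\|P^\perp\widehat P\|_2$, the Sylvester identity $\Sigma D-D\widehat\Sigma=-P^\perp E\widehat P$, the trivial regime $\|E\|_2\ge\delta/2$, and the Weyl step $\widehat\lambda_s\ge\lambda_s-\|E\|_2$ are all fine. For the lower bound you flag as the main obstacle, the ``testing against the top singular pair'' option you mention goes through cleanly: if $Dv=\sigma u$, $D^\top u=\sigma v$ with $u\in\mathrm{range}(P^\perp)$, $v\in\mathrm{range}(\widehat P)$, then $u^\top(\Sigma D-D\widehat\Sigma)v=\sigma(u^\top\Sigma u-v^\top\widehat\Sigma v)\le-\sigma(\delta-\|E\|_2)$, which gives $\|\Sigma D-D\widehat\Sigma\|_2\ge(\delta-\|E\|_2)\|D\|_2$ directly. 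One small caveat on the alternative you sketch: the integral $-\int_0^\infty e^{t\Sigma'}(\Sigma D-D\widehat\Sigma)e^{-t\widehat\Sigma'}\,dt$ need not converge as written unless you first shift both restricted operators by a common scalar so that $\Sigma'$ becomes negative definite and $\widehat\Sigma'$ positive definite; after that shift the bound $\|D\|_2\le\|E\|_2/(\delta-\|E\|_2)$ drops out. Either way, the conclusion $\|D\|_2\le 2\|E\|_2/\delta$ follows.
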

}
%Notice, unlike the standard Davis-Kahan Theorem, the result above does not require gap on the matrix $\widehat{\Sigma}$, which as we will see in our setting, will correspond to a matrix obtained from data.

\begin{proof}[of Theorem~\ref{thm:subspace_est}] We treat the joint and individual components separately. 

    \noindent
    \textbf{Joint components.}
    To apply Lemma~\ref{lemma_kahan}, we show that there exits an eigengap in the population-level model $S = \tfrac{1}{2}\left(P_{\col(X_1)} P_{\col(X_2)} + P_{\col(X_2)}P_{\col(X_1)}  \right)$. The matrix $\widehat{S} = \tfrac{1}{2}\left(P_{{\col}(\widehat{X}_1)} P_{{\col}(\widehat{X}_2)} + P_{{\col}(\widehat{X}_2)}P_{{\col}(\widehat{X}_1)}  \right)$ represents the finite sample estimate. 
    
    From Section~\ref{sec:noiseless}, we know that $S = P_{\Jcal} + \frac{1}{2}\left(P_{\Ncal_1}P_{\Ncal_2}+P_{\Ncal_2}P_{\Ncal_1}\right)$. Fix an orthonormal basis, $U_{\Jcal}$, for $P_{\Jcal}$. Since $\Jcal \perp \Ncal_k$, we have that $SU_{\Jcal} = U_{\Jcal}$. Hence, $S$ has at least $\dim (\Jcal)$ eigenvalues that are equal to $1$ with corresponding eigenvectors forming $\Jcal$. Now fix any orthonormal vector $u \in \spann (U_{\Jcal})^\perp$. Then we have:
    \begin{align*}
        \|Su\|_2 &= \left\|P_{\Jcal} u + \tfrac{1}{2}\left(P_{\Ncal_1}P_{\Ncal_2}+P_{\Ncal_2}P_{\Ncal_1}\right) u\right\|_2 \\
        &\leq {\left\|P_{\Jcal} u\right\|_2 +\frac{1}{2}\left\|P_{\Ncal_1}P_{\Ncal_2}(u)\right\|_2 +\frac{1}{2}\left\|P_{\Ncal_2}P_{\Ncal_1}(u)\right\|_2} & \text{(by triangle-inequality)} \\
        &\leq \left\|P_{\Ncal_2}P_{\Ncal_1}\right\|_2 \\
        &< 1 & \text{(since }\Ical_1 \cap \Ical_2 = \{\mathbf{0}\})
    \end{align*}
    {Here, the second inequality follows from $u \in \Jcal^\perp$, $\|u\|_2=1$ and the property that $\|Au\|_2 \leq \|A\|_2\|u\|_2=\|A^\top\|_2\|u\|$ for any matrix $A\in\mathbb{R}^{n \times n}$}. Therefore, the spectrum of $S$ becomes $\sigma_i(S) = 1$ for $1 \leq i \leq \dim(\Jcal)$ and $\sigma_i(S) \leq \left\|P_{\Ncal_1} P_{\Ncal_2}\right\|_2 < 1$ for $i > \dim (\Jcal)$. Fix an orthonormal basis, $\widehat{U}_{\Jcal}$, for $P_{\widehat{\Jcal}}$. By the upper bound on $\varepsilon_1$ and appealing to Theorem~\ref{theorem:main}, we have that $\mathrm{dim}(\widehat{\Jcal}) = \mathrm{dim}(\Jcal)$. Let $s :=  \mathrm{dim}(\Jcal)$, $\widehat{\Sigma} = \widehat{S}$, and apply Lemma~\ref{lemma_kahan} to obtain:
    \begin{equation*}
        \| \sin\Theta(U_{\Jcal}, U_{\widehat{\Jcal}}) \|_2 \leq \frac{{2} \| S -\widehat{S} \|_2}{1-\left\|P_{\Ncal_1} P_{\Ncal_2}\right\|_2}
    \end{equation*}
    Putting everything together, we have:
    \begin{align*}
        \| P_{\Jcal} - P_{\widehat{\Jcal}} \|_2 &= \| \sin\Theta(U_{\Jcal}, U_{\widehat{\Jcal}} )\|_2 \\ 
        &\leq \frac{{2}\left\| S- \widehat{S}\right\|_2}{1-\left\|P_{\Ncal_1} P_{\Ncal_2}\right\|_2} \\
        &\leq  \frac{  {}\left\|P_{\col(X_1)} \Delta_2 + \Delta_1 P_{\col(X_2)} + \Delta_1 \Delta_2 + \Delta_2 P_{\col(X_1)}  + P_{\col(X_2)} \Delta_1  +  \Delta_2\Delta_1\right\|_2}{1-\left\|P_{\Ncal_1} P_{\Ncal_2}\right\|_2} \\
        &= \frac{ {}\left\|R_{\Jcal} + R_{\Jcal}^\top\right\|_2}{1-\left\|P_{\Ncal_1} P_{\Ncal_2}\right\|_2}.
        \end{align*} 

        \noindent
        \textbf{Individual components.} Since $\Jcal \perp \Ical_1$ and $\Jcal \perp \Ical_2$, we have that $P_{\Jcal^\perp} P_{\col(X_1)} = P_{\Ical_1}$ and $P_{\Jcal^\perp} P_{\col(X_2)} = P_{\Ical_2}$. 
        Therefore, we have that $P_{\Jcal^\perp} P_{\col(X_k)}$ has exactly $r_{\Ical_k}$ eigenvalues equal to $1$ and the rest equal to $0$ so that the gap between the $r_{\Ical_k}$ and $r_{\Ical_k}+1$ eigenvalue of $P_{\Jcal^\perp} P_{\col(X_k)}$ is one. Let $U_{{\Ical}_k}$ consist of the top $r_{\Ical_k}$ singular vectors of $P_{\Jcal^\perp} P_{\col(X_1)}$: these span the subspace ${\Ical_k}$. Let ${U}_{\widehat{\Ical}_k}$ consist of the first $\mathrm{rank}(\widehat{X}_k) - \widehat{r}_{\Jcal}$ singular vectors of $P_{\widehat{\Jcal}^\perp} P_{\col(\widehat{X}_1)}$: these span the subspace $\widehat{\Ical}_k$. Since $\mathrm{rank}(\widehat{X}_k) = \mathrm{rank}(X_k)$ by assumption and $\widehat{r}_{\Jcal} ={r}_{\Jcal}$ from the first part of the theorem, we have that $U_{{\Ical}_k}$ and $U_{\widehat{\Ical}_k}$ have the same number of columns. Let $s :=  r_{\Ical_k}$, $\Sigma:= P_{\Jcal^\perp} P_{\col(X_k)}$, $\widehat{\Sigma} = P_{\widehat{\Jcal}^\perp} P_{\col(\widehat{X}_1)}$, and apply Lemma~\ref{lemma_kahan} to obtain:        
        %consist of the first $\mathrm{rank}(\widehat{X}_k) - \widehat{r}_\Jcal$
        
        \begin{align*}
            \| P_{\Ical_k} - P_{\widehat{\Ical}_k}\|_2 & =  \|\sin \Theta(U_{{\Ical_k}}, U_{\widehat{\Ical}_k}) \|_2 \\
            &\leq{2\left\|P_{\Jcal^\perp} P_{\col(X_k)} - P_{\widehat{\Jcal}^\perp} P_{\col(\widehat{X}_k)}\right\|_2} \\
            &{= 2\left\|P_{\Jcal^\perp} P_{\col(X_k)} - (P_{\Jcal^\perp} + \Delta_{\Jcal})(P_{\col (X_k)} - \Delta_k)\right\|_2}\\
            &{= 2\left\|P_{{\Jcal}^\perp} \Delta_k - \Delta_{\Jcal} P_{\col(X_k)} + \Delta_{\Jcal} \Delta_k \right\|_2} \\
            &{=2\|R_{\Ical_k}\|_2.}
        \end{align*}
\end{proof}

%\begin{theorem}[Estimation Error]
%Suppose $\epsilon_1 < \max\{1-\epsilon_2, 1-\left\|P_{\Ncal_1} P_{\Ncal_2}\right\|_2-\epsilon_2\}${ I believe this should be $\epsilon_1 < 1- \left\|P_{\Ncal_1} P_{\Ncal_2}\right\|_2-\epsilon_2$}. Define ${R}_{\Jcal} := P_{\col(X_1)} \Delta_2 + \Delta_1 P_{\col(X_2)} + \Delta_1 \Delta_2$ and  $R_{\Ical_k} := P_{\Jcal} \Delta_k + \Delta_{\Jcal} P_{\col(X_k)} + \Delta_{\Jcal} \Delta_k$, where $\Delta_{\Jcal} := P_{\widehat{\Jcal}} - P_{\Jcal}$ and $\Delta_k := P_{{\col (X_k)}} - P_{\col (\widehat{X}_k)}$ . Then, $\dim(\widehat{\Jcal}) = \dim({\Jcal})$ and the deviation of the estimated joint subspace from the true joint subspace is bounded by
%$$\|P_{\widehat{\Jcal}} - P_{\Jcal} \|_2 \leq  \frac{ \dim(\Jcal)^{1/2}\left\|R_{\Jcal} + R^\top_{\Jcal} \right\|_2}{1-\left\|P_{\Ncal_1} P_{\Ncal_2}\right\|_2}.$$
%{ I believe this should be
%$$\|P_{\widehat{\Jcal}} - P_{\Jcal} \|_2 \leq  \frac{ \left\|R_{\Jcal} + R^\top_{\Jcal} \right\|_2}{1-\left\|P_{\Ncal_1} P_{\Ncal_2}\right\|_2}.$$
%since for spectral version of Davis-Kahan Theorem, there is no rank based on \citet{chenSpectralMethodsData2021}, Section 2.3.3. Similarly, I don't think we should have rank in the individual bound below for spectral norm.}

%Furthermore, if the rank of each subspace view is correctly estimated, i.e. $\rank(\widehat{X}_k) = \rank(X_k)$, then, the deviation of the estimated individual subspace from the true individual subspace is bounded by 
%$$\|P_{\widehat{\Ical}_k} - P_{\Ical_k}\|_2 \leq 2\dim(\Ical_k)^{1/2}  \|R_{\Ical_k}\|_F.$$
%\end{theorem}

\subsection{{Proof of Theorem \ref{thm:probabilistic}
}}
\label{proof:probabilistic}
{The proof of this theorem relies on the following lemmas.
\begin{lemma}Let $L \in \mathbb{R}^{d_1 \times d_2}$ have entries that are independently and identically distributed as zero-mean Gaussian random variables with variance $\tau$. Then, with probability greater than $1-2\exp(-\max\{d_1,d_2\})$, we have that $\|L\|_2 \leq C\tau(d_1^{1/2}+d_2^{1/2})$, where $C$ is a constant (does not depend on $d_1$, $d_2$, or $\tau$).
\label{lemma:gaussian_concentration}
\end{lemma}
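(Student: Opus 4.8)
The plan is to recognize this as the standard high-probability bound on the operator norm of a rectangular Gaussian matrix and prove it by the usual $\varepsilon$-net argument, after first reducing to the unit-scale case. By homogeneity of the spectral norm, write $L = \tau G$ where $G \in \R^{d_1 \times d_2}$ has independent $\mathcal{N}(0,1)$ entries (so that $\tau$ is the per-entry scale, matching the way the paper treats $\delta_k$). Since $\|L\|_2 = \tau\|G\|_2$, it suffices to show that $\|G\|_2 \le C(d_1^{1/2} + d_2^{1/2})$ with probability at least $1 - 2\exp(-\max\{d_1,d_2\})$ for an absolute constant $C$.

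First I would discretize both spheres. Fix $1/4$-nets $\mathcal{T}_1$ and $\mathcal{T}_2$ of the unit spheres $S^{d_1 - 1}$ and $S^{d_2 - 1}$; a volumetric bound gives $|\mathcal{T}_i| \le 9^{d_i}$, and a routine covering argument yields $\|G\|_2 \le 2\max_{u \in \mathcal{T}_1,\, v \in \mathcal{T}_2} u^\top G v$. For any fixed pair of unit vectors $u,v$ (independent of $G$), the scalar $u^\top G v$ is $\mathcal{N}(0,1)$, so $\Pr(u^\top G v > s) \le \exp(-s^2/2)$ for $s>0$. A union bound over the at most $9^{d_1+d_2}$ pairs gives
\[
\Pr\Big( \max_{u \in \mathcal{T}_1,\, v \in \mathcal{T}_2} u^\top G v > s \Big) \;\le\; 9^{d_1+d_2}\exp(-s^2/2).
\]
Taking $s = C_0(d_1^{1/2} + d_2^{1/2})$ with $C_0$ large enough that $\tfrac{C_0^2}{2} \ge 1 + \log 9$, and using $(d_1^{1/2}+d_2^{1/2})^2 \ge d_1 + d_2 \ge \max\{d_1,d_2\}$, makes the right-hand side at most $\exp(-\max\{d_1,d_2\})$. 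On the complementary event, $\|G\|_2 \le 2s = 2C_0(d_1^{1/2}+d_2^{1/2})$, which is the claim with $C = 2C_0$.

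A shorter route is to quote the bound directly: for instance, $\|G\|_2 \le d_1^{1/2} + d_2^{1/2} + t$ with probability at least $1 - 2\exp(-t^2/2)$ \citep{vershyninHighDimensionalProbabilityIntroduction2025}, and then set $t = (2\max\{d_1,d_2\})^{1/2}$, giving the lemma with $C = 1 + \sqrt{2}$. There is no genuine obstacle here; the only step needing a little care is bookkeeping on the exponent — the textbook tail carries $t^2/2$, so to obtain $\max\{d_1,d_2\}$ (rather than half of it) one inflates the deviation parameter by a constant factor, which merely enlarges the absolute constant $C$, and the leading factor $2$ on the probability is harmless.
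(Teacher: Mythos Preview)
Your proposal is correct and aligns with the paper's approach: the paper simply states that the lemma follows from Theorem~4.4.3 of \citet{vershyninHighDimensionalProbabilityIntroduction2025}, and your ``shorter route'' is precisely this citation, while your $\varepsilon$-net argument is essentially the proof of that theorem written out in full. Your parenthetical remark that $\tau$ should be read as the per-entry scale (standard deviation) rather than variance is also the correct reading given how the paper uses $\delta_k$ elsewhere.
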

Lemma~\ref{lemma:gaussian_concentration} follows from Theorem 4.4.3 of \cite{vershyninHighDimensionalProbabilityIntroduction2025}.
\begin{lemma} Let $L \in \mathbb{R}^{d_1 \times d_2}$ have entries that are independently and identically distributed as zero-mean Gaussian random variables with variance $\tau$. Let $U \in \mathbb{R}^{d_1 \times t}$ and $V \in \mathbb{R}^{d_2 \times t}$ be matrices with orthonormal columns with $t \leq \min\{d_1,d_2\}$. Then $LV \in \mathbb{R}^{d_1 \times t}$ and $L^\top{U} \in \mathbb{R}^{d_2 \times t}$ are matrices with entries that are independent and identically distributed as zero-mean Gaussian random variables with variance $\tau$. 
\label{lemma:linear_algebra}
\end{lemma}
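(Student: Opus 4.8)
The plan is to exploit the rotational invariance of isotropic Gaussian vectors: applying a linear map with orthonormal rows to a mean-zero isotropic Gaussian vector produces a mean-zero isotropic Gaussian vector of the appropriate dimension. Since linear images of Gaussian vectors are Gaussian, joint Gaussianity is automatic, so the whole claim reduces to a one-line covariance computation together with bookkeeping of which entries of $L$ are involved in which entries of the image.

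First I would note that the rows $\ell_1^\top,\dots,\ell_{d_1}^\top$ of $L$ are independent, each distributed as $N(\mathbf 0,\tau I_{d_2})$, because the entries of $L$ are i.i.d.\ $N(0,\tau)$. The $i$-th row of $LV$ is $\ell_i^\top V=(V^\top\ell_i)^\top$. Since $V$ has orthonormal columns, $V^\top V=I_t$, so $V^\top\ell_i\sim N(\mathbf 0,\tau V^\top V)=N(\mathbf 0,\tau I_t)$; moreover these vectors are independent over $i$ because the $\ell_i$ are independent and $V$ is deterministic. Hence the rows of $LV$ are i.i.d.\ $N(\mathbf 0,\tau I_t)$, which is precisely the assertion that the entries of $LV$ are i.i.d.\ $N(0,\tau)$. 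Concretely, for a fixed row $i$ and columns $v_j\neq v_{j'}$ of $V$ one has $\mathrm{Cov}(\ell_i^\top v_j,\ell_i^\top v_{j'})=\tau\, v_j^\top v_{j'}=0$, which (with joint Gaussianity) gives the within-row independence.

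The argument for $L^\top U$ is the transpose of this one: the columns of $L$ are independent $N(\mathbf 0,\tau I_{d_1})$ vectors, the $j$-th row of $L^\top U$ equals $(U^\top(\text{$j$-th column of }L))^\top$, and $U^\top U=I_t$ yields the same conclusion. The hypothesis $t\le\min\{d_1,d_2\}$ enters only to guarantee that $U$ and $V$ can carry $t$ orthonormal columns. There is essentially no obstacle here; the one thing to be careful about is not over-stating the conclusion — the lemma asserts that each of $LV$ and $L^\top U$ individually has i.i.d.\ Gaussian entries, not that the two are jointly independent of each other, and the sketch above delivers exactly that.
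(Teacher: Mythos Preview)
Your proposal is correct and follows the same idea the paper invokes: the paper does not actually spell out a proof but simply remarks that the lemma ``is a standard result based on the fact that linear combinations of Gaussian random variables are Gaussian.'' Your argument fills in exactly that standard computation---joint Gaussianity plus the covariance identity $V^\top(\tau I)V=\tau I_t$---so it is the same approach, just made explicit.
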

Lemma~\ref{lemma:linear_algebra} is a standard result based on the fact that linear combinations of Gaussian random variables are Gaussian. We state it as a separate lemma for convenience.}

{
% which we state and prove first.
\begin{lemma}Suppose Assumptions~\ref{a:noise}--\ref{a:signal_noise_ratio} hold. If $\widehat r_k = r_k$, then
we have for some absolute constant $C>0$, with probability greater than $1-\mathcal{O}(\exp\{-\max(p_k,n)\})$
$$
\|\Delta_k\|_2 \leq C\left(\delta_k\frac{n^{1/2}+r_k^{1/2}}{\sigma_{r_k}(X_k)} + \delta_k^2\frac{(n^{1/2} + p_k^{1/2})(p_k^{1/2}+r_k^{1/2})}{\sigma_{r_k}^2(X_k)}\right).
$$
If $\widehat r_k \geq r_k$, then we have for some absolute constant $C>0$, with probability greater than $1-\mathcal{O}(\exp\{-\max(p_k,n)\})$
$$
\|P_{\col(X_k)}\Delta_k\|_2 \leq C\left(\delta_k\frac{n^{1/2}+r_k^{1/2}}{\sigma_{r_k}(X_k)} + \delta_k^2\frac{(n^{1/2} + p_k^{1/2})(p_k^{1/2}+r_k^{1/2})}{\sigma_{r_k}^2(X_k)}\right).
$$
\label{lemma:probab}
\end{lemma}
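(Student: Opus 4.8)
The plan is to control $\Delta_k = P_{\col(X_k)} - P_{\col(\widehat X_k)}$ (or its projection onto $\col(X_k)$) by writing $Y_k = X_k + Z_k$ and applying a sin-$\Theta$-type bound tailored to the perturbation structure. The key observation, which gives us bounds sharper than the naive Davis–Kahan estimate $\|\Delta_k\|_2 \lesssim \|Z_k\|_2/\sigma_{r_k}(X_k)$, is that the relevant quantities involve $Z_k$ only through its action on the low-dimensional signal subspaces. Concretely, write the SVD $X_k = U_k\Sigma_k V_k^\top$ with $U_k \in \R^{n\times r_k}$, $V_k \in \R^{p_k\times r_k}$. First I would invoke a perturbation expansion for the projection difference (e.g. the standard second-order bound for spectral projectors, as in Davis–Kahan refinements): up to the eigengap $\sigma_{r_k}(X_k)$, the leading term in $\Delta_k$ is governed by $\|P_{U_k^\perp} Z_k V_k\|_2$ and $\|U_k^\top Z_k P_{V_k^\perp}\|_2$, i.e. the ``cross'' blocks of $Z_k$ in the signal bases, while the second-order remainder is governed by $\|Z_k\|_2 \cdot \|Z_k V_k\|_2 / \sigma_{r_k}(X_k)^2$ and $\|Z_k\|_2\cdot\|U_k^\top Z_k\|_2/\sigma_{r_k}(X_k)^2$.

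Next I would estimate each of these random quantities. By Lemma~\ref{lemma:linear_algebra}, $Z_k V_k \in \R^{n\times r_k}$ and $U_k^\top Z_k \in \R^{r_k\times p_k}$ (and their further compressions by $U_k^\perp$, $V_k^\perp$) are Gaussian matrices with i.i.d.\ entries of variance $\delta_k$; by Lemma~\ref{lemma:gaussian_concentration} they have operator norm $\lesssim \delta_k(n^{1/2}+r_k^{1/2})$, respectively $\lesssim \delta_k(p_k^{1/2}+r_k^{1/2})$, with probability $1-2\exp(-\max(n,r_k))$, resp.\ $1-2\exp(-\max(p_k,r_k))$. Again by Lemma~\ref{lemma:gaussian_concentration}, $\|Z_k\|_2 \lesssim \delta_k(n^{1/2}+p_k^{1/2})$ with probability $1-2\exp(-\max(n,p_k))$. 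Substituting these into the perturbation expansion and using Assumption~\ref{a:signal_noise_ratio} to ensure $\|Z_k\|_2 \le \tfrac12\sigma_{r_k}(X_k)$ (so the expansion is valid and the denominators are harmless), the first-order term yields the $\delta_k(n^{1/2}+r_k^{1/2})/\sigma_{r_k}(X_k)$ contribution and the second-order remainder yields the $\delta_k^2(n^{1/2}+p_k^{1/2})(p_k^{1/2}+r_k^{1/2})/\sigma_{r_k}(X_k)^2$ contribution. A union bound over the $\mathcal{O}(1)$ many events, absorbing the smaller $\exp(-r_k)$ terms, gives the claimed probability $1-\mathcal{O}(\exp\{-\max(n,p_k)\})$. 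The distinction between the two displayed bounds is that for $\|P_{\col(X_k)}\Delta_k\|_2$ one only needs the block of $\Delta_k$ lying in $\col(X_k)$, which depends on $Z_k$ solely through $U_k^\top Z_k V_k$, $U_k^\top Z_k P_{V_k^\perp}$, etc.; crucially this block is insensitive to how many extra (noise) directions $\widehat X_k$ picks up, so the bound survives under mere overestimation $\widehat r_k\ge r_k$, whereas $\|\Delta_k\|_2$ itself includes the spurious directions in $\col(\widehat X_k)\setminus\col(X_k)$ and so requires exact $\widehat r_k = r_k$.

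The main obstacle is obtaining the $p_k$-free first-order term rather than the crude $\|Z_k\|_2$ bound: this requires a genuinely two-sided (left and right) perturbation analysis of the SVD projectors that isolates the compressed Gaussian blocks $U_k^\perp{}^\top Z_k V_k$ and $U_k^\top Z_k V_k^\perp{}^\top$, and carefully tracking that the higher-order terms in the Neumann/contour-integral expansion for the spectral projector only ever contribute the product of a ``full'' $\|Z_k\|_2$ factor with a ``compressed'' factor, never two full factors — otherwise one would be left with a suboptimal $\delta_k^2(n^{1/2}+p_k^{1/2})^2/\sigma_{r_k}(X_k)^2$ remainder. Handling the asymmetry (the $n$ side carries rank $r_k$, the $p_k$ side carries dimension $p_k$) in both the deterministic expansion and the concentration step is where the care is needed; everything else is a routine union bound.
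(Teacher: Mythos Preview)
Your strategy is correct and leads to the stated bound, but it is not the route the paper takes. You invoke an abstract second-order perturbation expansion for spectral projectors (Neumann/contour-integral type) and then read off that the leading block is the compressed Gaussian $U_\perp^\top Z V$ while the remainder carries one full $\|Z\|_2$ and one compressed factor. The paper instead argues \emph{elementarily} via an SVD telescoping identity: writing $\widehat U_\perp^\top U = \widehat U_\perp^\top (U\Sigma_1 V^\top)V\Sigma_1^{-1} = \widehat U_\perp^\top(Y-Z)V\Sigma_1^{-1}$ and using $\widehat U_\perp^\top Y = \widehat\Sigma_2\widehat V_\perp^\top$ gives $\widehat U_\perp^\top U = \widehat\Sigma_2\widehat V_\perp^\top V\Sigma_1^{-1} - \widehat U_\perp^\top Z V\Sigma_1^{-1}$; doing the symmetric manipulation for $\widehat V_\perp^\top V$ and substituting back yields a self-referential inequality for $\|\widehat U_\perp^\top U\|_2$ that can be solved once $\|\widehat\Sigma_2\|_2^2/\sigma_r(X)^2\le 1/2$ (ensured by Weyl and Assumption~\ref{a:signal_noise_ratio}). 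This directly exposes the two terms $\|ZV\|_2/\sigma_r(X)$ and $\|Z\|_2\|Z^\top U\|_2/\sigma_r(X)^2$ without any expansion machinery. The overestimation case is handled by the same trick after noting $\|P_{\col(X)}\Delta\|_2 = \|\widehat U_\perp^\top U\|_2$ still holds when $\widehat r\ge r$. Your approach buys generality (it would extend more readily to non-Gaussian or weakly-dependent noise once the expansion is in hand), while the paper's argument is shorter, fully self-contained, and avoids the bookkeeping you flag as the ``main obstacle'' of tracking which factors in the higher-order terms stay compressed.
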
}

\begin{proof}[of Lemma~\ref{lemma:probab}] {We consider the two cases separately.

\vspace{0.1in}
\noindent\textbf{Correct rank estimation case with $\widehat r_k = r_k$} For simplicity, we drop subscript $k$ in the proof. Let $X$ of rank-$r$ have singular value decomposition
$$
X = [U\ U_{\perp}]\begin{pmatrix} \Sigma_1 & 0\\0&0 \end{pmatrix} [V\ V_{\perp}]^{\top}.
$$}
{Let $Y = X + Z$, and similarly partition $Y$ as follows:
$$
Y = [\widehat U\ \widehat U_{\perp}]\begin{pmatrix} \widehat \Sigma_1 & 0\\0&\widehat \Sigma_2 \end{pmatrix} [\widehat V\ \widehat V_{\perp}]^{\top},
$$
where $\widehat U \in \mathbb{R}^{n \times r}, \widehat V \in \mathbb{R}^{p \times r}$, and $\widehat X = \widehat U \widehat \Sigma_1 \widehat V^{\top}$. Then $\|\Delta\|_2 = \|P_{{\col (X)}} - P_{\col (\widehat{X})}\|_2 = \|UU^{\top} - \widehat U \widehat U^\top\|_2 = \|\widehat U_{\perp}^{\top}U\|_2$, where the last equality is true because of equality in ranks, e.g. Lemma 2.5 in \citet{chenSpectralMethodsData2021}. Consider
\begin{align*}
\widehat U_{\perp}^{\top}U &= \widehat U_{\perp}^{\top}U\Sigma_1V^{\top}V\Sigma_1^{-1} = \widehat U_{\perp}^{\top}(X)V\Sigma_1^{-1} =\widehat U_{\perp}^{\top}(Y - Z) V\Sigma_1^{-1} \\
&=\widehat U_{\perp}^{\top}(\widehat U\widehat \Sigma_1\widehat V^{\top} + \widehat U_{\perp}\widehat \Sigma_2\widehat V_{\perp}^{\top} - Z) V\Sigma_1^{-1} \\
&= \widehat \Sigma_2 \widehat V_{\perp}^{\top}V\Sigma_1^{-1} - \widehat U_{\perp}^{\top}ZV\Sigma_1^{-1}.
\end{align*}
Similarly, we can show
\begin{align*}
\widehat V_{\perp}^{\top}V = \widehat \Sigma_2 \widehat U_{\perp}^{\top}U\Sigma_1^{-1} - \widehat V_{\perp}^{\top}Z^{\top}U\Sigma_{1}^{-1}.
\end{align*}
Combining the above two displays gives
\begin{align*}
\widehat U_{\perp}^{\top}U &= \widehat \Sigma_2 ^2 \widehat U_{\perp}^{\top}U\Sigma_1^{-2} -\widehat \Sigma_2 \widehat V_{\perp}^{\top}Z^{\top}U\Sigma_{1}^{-2} - \widehat U_{\perp}^{\top}ZV\Sigma_1^{-1}.
\end{align*}
Therefore, 
\begin{align*}
\|\widehat U_{\perp}^{\top}U\|_2 \leq \|\widehat U_{\perp}^{\top}U\|_2 \|\widehat \Sigma_2\|_2^2\frac1{\sigma_r(X)^2} + \|\widehat \Sigma_2\|_2\frac1{\sigma_r(X)^2}\|\widehat V_{\perp}^{\top}Z^{\top}U\|_2 + \frac1{\sigma_r(X)}\|\widehat U_{\perp}^{\top}ZV\|_2.
\end{align*}
From Weyl's inequality and using that $X$ is rank-$r$,
$$
\|\widehat \Sigma_2\|_2\leq \|\Sigma_2\|_2 + \|Z\|_2 = \|Z\|_2.
$$
Therefore,
$$
\|\widehat U_{\perp}^{\top}U\|_2 \leq \|\widehat U_{\perp}^{\top}U\|_2 \frac{\|Z\|_2^2}{\sigma_r(X)^2} + \frac{\|Z\|_2}{\sigma_r(X)}\frac{\|\widehat V_{\perp}^{\top}Z^{\top}U\|_2}{\sigma_r(X)} + \frac{\|\widehat U_{\perp}^{\top}ZV\|_2}{\sigma_r(X)}
$$
If $\|Z\|_2\leq \frac{1}{2^{1/2}}\sigma_r(X)$, then
$$
\|\widehat U_{\perp}^{\top}U\|_2 \leq 2\left\{ \frac{\|\widehat U_{\perp}^{\top}ZV\|_2}{\sigma_r(X)} + \frac{\|Z\|_2}{\sigma_r(X)}\frac{\|\widehat V_{\perp}^{\top}Z^{\top}U\|_2}{\sigma_r(X)}\right\}.
$$
Appealing to Lemma~\ref{lemma:gaussian_concentration}, we have that $\|Z\|_2 \leq C_1\delta(n^{1/2}+p^{1/2})$ with probability greater than $1-2\exp(-\max\{n,p\})$ for some constant $C_1$. Thus, under the assumption that $\sigma_r(X)\geq \sqrt{2^{1/2}}C_1\delta(n^{1/2} + p^{1/2})$, we get that $\|Z\|_2\leq \frac1{2^{1/2}}\sigma_r(X)$ holds with high probability. Furthermore, we have with probability greater than $1-2\exp(-\max\{n,p\})$
\begin{eqnarray*}
\begin{aligned}
\|\widehat U_{\perp}^{\top}ZV\|_2 &\leq \|ZV\|_2 \leq C_2\delta(n^{1/2}+r^{1/2}),\\
\|\widehat V_{\perp}^{\top}Z^\top{U}\|_2 &\leq \|Z^\top{U}\|_2 \leq C_3\delta(p^{1/2}+r^{1/2}),
\end{aligned}
\end{eqnarray*}
where $C_2$ and $C_3$ are constants that do not depend on $n,p,r$ and $\delta$. The first inequality in each of the equations above is due to the sub-multiplicative property of the spectral norm, and the second inequality follows by combining Lemma~\ref{lemma:linear_algebra} with Lemma~\ref{lemma:gaussian_concentration}. Putting things together and bringing back the notation for every view, we have that with probability greater than $1-\mathcal{O}(-\max\{n,p_k\})$, 
\begin{eqnarray}
\left\|P_{{\col (X_k)}} - P_{\col (\widehat X_k)}\right\|_2 \lesssim \frac{\delta_k(n^{1/2}+r_k^{1/2})}{\sigma_{r_k}(X_k)}+ \frac{\delta_k^2(n^{1/2} + p_k^{1/2})(p_k^{1/2}+r_k^{1/2})}{\sigma_{r_k}(X_k)^2}.
\label{eqn:equaiton_inter}
\end{eqnarray}

\vspace{0.1in}
\noindent \textbf{Rank overestimation case where $\widehat r_k \geq r_k$}: As in the previous case, we drop subscript $k$ in the proof for notational convenience. Observe that
$$
\|P_{{\col (X)}}\Delta\|_2 = \|UU^{\top} -  UU^{\top}\widehat U \widehat U^{\top}\|_2 = \|UU^{\top}\widehat U_{\perp}\widehat U_{\perp}^{\top}\| = \|U^{\top}\widehat U_{\perp}^{\top}\|_2.
$$
Here, $\widehat U \in \mathbb{R}^{n \times \widehat r}$ is the first $\widehat r$ left singular vectors of $Y$ and $\widehat U_\perp \in \mathbb{R}^{n \times (\mathrm{rank}(Y) -\widehat r)}$ represents the remaining left singular vectors corresponding to the non-zero singular values. Consider as before the singular value decomposition
$$
X = [U\ U_{\perp}]\begin{pmatrix} \Sigma_1 & 0\\0&0 \end{pmatrix} [V\ V_{\perp}]^{\top}.
$$
Let $Y = X + Z$, and similarly partition $Y$ based on singular value decomposition as follows
$$
Y = [\widetilde U\ \widetilde U_{\perp}]\begin{pmatrix} \widetilde \Sigma_1 & 0\\0&\widetilde \Sigma_2 \end{pmatrix} [\widetilde V\ \widetilde V_{\perp}]^{\top},
$$
where $\widetilde U \in \mathbb{R}^{n \times r}$ and $\widetilde V \in \mathbb{R}^{p \times r}$ consists of the first $r$ columns of $\widehat U$ and $\widehat V$, respectively. Further, $\widetilde U_\perp$ and $\widetilde V_\perp$ consist of the remaining columns of $\widehat U$ and $\widehat V$, respectively. Consider
\begin{align*}
\widehat U_{\perp}^{\top}U &= \widehat U_{\perp}^{\top}U\Sigma_1V^{\top}V\Sigma_1^{-1} = \widehat U_{\perp}^{\top}(X)V\Sigma_1^{-1} =\widehat U_{\perp}^{\top}(Y - Z) V\Sigma_1^{-1} \\
&=\widehat U_{\perp}^{\top}(\widetilde U\widetilde\Sigma_1\widetilde V^{\top} + \widetilde U_{\perp}\widetilde \Sigma_2\widetilde{V}_{\perp}^{\top} - Z) V\Sigma_1^{-1} \\
&= \widehat U_{\perp}^{\top} \widetilde U_\perp \widetilde \Sigma_2 \widehat V_{\perp}^{\top}V\Sigma_1^{-1} - \widehat U_{\perp}^{\top}ZV\Sigma_1^{-1}.
\end{align*}
Here, we have used the fact that $\widehat U_\perp ^\top \widetilde U = 0$. Similarly, we can show
\begin{align*}
\widehat V_{\perp}^{\top}V = \widehat V_{\perp}^{\top} \widetilde V \widetilde \Sigma_2 \widehat U_{\perp}^{\top}U\Sigma_1^{-1} - \widehat V_{\perp}^{\top}Z^{\top}U\Sigma_{1}^{-1}.
\end{align*}
Combining the above two displays gives
\begin{align*}
\widehat U_{\perp}^{\top}U &= \widehat U_{\perp}^{\top} \widetilde U_\perp \widetilde \Sigma_2\widehat V_{\perp}^{\top}V \widetilde\Sigma_2  \widehat U_{\perp}^{\top}U\Sigma_1^{-2} -\widehat U_{\perp}^{\top} \widetilde U_\perp \widetilde \Sigma_2 \widehat V_{\perp}^{\top}Z^{\top}U\Sigma_{1}^{-2} - \widehat U_{\perp}^{\top}ZV\Sigma_1^{-1}.
\end{align*}
Therefore, using triangle inequality, the sub-multipicative property of spectral norms, that $\|\widehat U_{\perp}^{\top} \widetilde U_\perp\|_2 \leq 1$, and that $\|\Sigma_1^{-2}\|_2 =\frac{1}{\sigma_r(X)^2}$, we obtain:
\begin{align*}
\|\widehat U_{\perp}^{\top}U\|_2 \leq \|\widehat U_{\perp}^{\top}U\|_2 \|\widetilde\Sigma_2\|_2^2\frac1{\sigma_r(X)^2} + \|\widetilde\Sigma_2\|_2\frac1{\sigma_r(X)^2}\|\widehat V_{\perp}^{\top}Z^{\top}U\|_2 + \frac1{\sigma_r(X)}\|\widehat U_{\perp}^{\top}ZV\|_2.
\end{align*}
As in the previous case, appealing to Weyl's inequality and using that $X$ is rank-$r$, we have that $\|\widehat \Sigma_2\|_2 \leq \|Z\|_2$. Following the same logic as the correct rank case, we conclude that:
$$
\|P_{{\col (X_k)}}\Delta_k\|_2 = \|U^{\top}\widehat U_{\perp}^{\top}\|_2 \lesssim \frac{\delta_k(n^{1/2}+r_k^{1/2})}{\sigma_{r_k}(X_k)}+ \frac{\delta_k^2(n^{1/2} + p_k^{1/2})(p_k^{1/2}+r_k^{1/2})}{\sigma_{r_k}(X_k)^2}.
$$}
\end{proof}

\begin{proof}[~of Theorem~\ref{thm:probabilistic}] {We consider the bounding of $\varepsilon_1$, $\varepsilon_2$, and the estimation accuracy of subspaces separately.  

\vspace{0.1in}
\noindent \textbf{Bounding $\varepsilon_1$:} Recall,
\begin{eqnarray*}
\begin{aligned}
    \varepsilon_1 &:= \left\|P_{{\col (X_1)}}\left(\Delta_1+\Delta_2+\Delta_1\Delta_2\right)P_{{\col (X_2)}}\right\|_2,
  %  \varepsilon_2 &:= \left\|P_{{\col (X_1)}}\Delta_2 + \Delta_1P_{{\col (X_2)}}+\Delta_1\Delta_2\right\|_2,
\end{aligned}
\end{eqnarray*}
with $\Delta_k := P_{{\col (X_k)}} - P_{\col (\widehat{X}_k)}$. By triangle inequality, symmetry of projection matrices, and sub-multipicative property of the spectral norm, we have that:
\begin{eqnarray*}
\varepsilon_1 \leq \left\|P_{{\col (X_1)}}\Delta_1\right\|_2+\left\|P_{{\col (X_2)}}\Delta_2\right\|+\left\|P_{{\col (X_1)}}\Delta_1\right\|_2\left\|P_{{\col (X_2)}}\Delta_2\right\|_2.
\end{eqnarray*}
Under the assumption $\widehat r_k \geq r_k$, we appeal to Lemma~\ref{lemma:probab} to conclude that with probability greater than $1-\mathcal{O}(\exp\{-\max(p_{\min},n)\})$:
$$
\varepsilon_1 \lesssim \frac{\delta_{\max}(n^{1/2}+r_{\max}^{1/2})}{\sigma_{r}(X)} + \frac{\delta_{\max}^2(n^{1/2}+p_{\max}^{1/2})(p_{\max}^{1/2}+r_{\max}^{1/2})}{\sigma_{r}(X)^2}.
$$
Here, we have used that $\|\mathcal{P}_{\col(X_1)}\Delta_1\|_2 \|\mathcal{P}_{\col(X_2)}\Delta_2\|_2 \lesssim \max_k\|\mathcal{P}_{\col(X_k)}\Delta_k\|_2$ for $k = 1,2$ due to Assumption~\ref{a:signal_noise_ratio}.

\vspace{0.1in}
\noindent \textbf{Bounding $\varepsilon_2$:} Recall,
\begin{eqnarray*}
\begin{aligned}
    %\varepsilon_1 &:= \left\|P_{{\col (X_1)}}\left(\Delta_1+\Delta_2+\Delta_1\Delta_2\right)P_{{\col (X_2)}}\right\|_2,
  \varepsilon_2 &:= \left\|P_{{\col (X_1)}}\Delta_2 + \Delta_1P_{{\col (X_2)}}+\Delta_1\Delta_2\right\|_2.
\end{aligned}
\end{eqnarray*}
Using triangle inequality and the sub-multipicative property of the spectral norm, we have that:
\begin{align*}
\varepsilon_2 \leq \|\Delta_1\|_2+ \|\Delta_2\|_2 + \|\Delta_1\|_2 \|\Delta_2\|_2.
\end{align*}
Under the correctly specified rank $\widehat r_k = r_k$, we appeal to Lemma~\ref{lemma:probab} to conclude that with probability greater than $1-\mathcal{O}(\exp\{-\max(p_{\min},n)\})$:
$$
\varepsilon_2 \lesssim \frac{\delta_{\max}(n^{1/2}+r_{\max}^{1/2})}{\sigma_{r}(X)} + \frac{\delta_{\max}^2(n^{1/2}+p_{\max}^{1/2})(p_{\max}^{1/2}+r_{\max}^{1/2})}{\sigma_{r}(X)^2}.
$$
Here, we have used the property that $\|\Delta_k\|_2^2 \lesssim \|\Delta_k\|_2$ for $k = 1,2$ due to Assumption~\ref{a:signal_noise_ratio}. 

\vspace{0.1in}
\noindent \textbf{Bounding subspace estimation errors:} Since $\widehat r_J = r_J$ and $\widehat r_k = r_k$, appealing to Theorem~\ref{thm:subspace_est}, we have that:
\begin{eqnarray*}
\begin{aligned}
\left\| P_{\Jcal} - P_{\widehat{\Jcal}} \right\|_2 &\leq  \frac{\left\|R_{\Jcal} + R^\top_{\Jcal}\right\|_2}{1-\left\|P_{\Ncal_1} P_{\Ncal_2}\right\|_2},\\
\left\|P_{\Ical_k} - P_{\widehat{\Ical}_k}\right\|_2 &\leq {2}\|R_{\Ical_k}\|_2,
\end{aligned}
\end{eqnarray*}
where ${R}_{\Jcal} := P_{\col(X_1)} \Delta_2 + \Delta_1 P_{\col(X_2)} + \Delta_1 \Delta_2$ and {$R_{\Ical_k} := P_{\Jcal^\perp} \Delta_k - \Delta_{\Jcal} P_{\col(X_k)}+ \Delta_{\Jcal} \Delta_k$} and $\Delta_{\Jcal} :=  P_{\Jcal} - P_{\widehat{\Jcal}}$. Using triangle inequality, sub-multipicative property of spectral norm, and that the spectral norm of a projection matrix is one, we have that:
\begin{eqnarray*}
\begin{aligned}
\|{R}_{\Jcal}\|_2 &\leq \|\Delta\|_2 + \|\Delta\|_1 + \|\Delta_1\|_2\|\Delta_2\|_2, \\
\|{R}_{\Ical_k}\|_2 &\leq \|\Delta_k\|_2 + \|\Delta_{\Jcal}\|_2 +  \|\Delta_k\|_2\|\Delta_{\Jcal}\|_2.
\end{aligned}
\end{eqnarray*}
Appealing to Lemma~\ref{lemma:probab}, we have with probability greater than $1-\mathcal{O}(\exp\{-\max(p_{\min},n)\})$:
\begin{eqnarray*}
\begin{aligned}
    \left\| P_{\Jcal} - P_{\widehat{\Jcal}} \right\|_2 \lesssim \frac{1}{1-\left\|P_{\Ncal_1} P_{\Ncal_2}\right\|_2}\left\{\frac{\delta_{\max}(n^{1/2}+r_{\max}^{1/2})}{\sigma_{r}(X)} + \frac{\delta_{\max}^2(n^{1/2}+p_{\max}^{1/2})(p_{\max}^{1/2}+r_{\max}^{1/2})}{\sigma_{r}(X)^2}\right\}, \\
    \left\| P_{\Ical_k} - P_{\widehat{\Ical_k}} \right\|_2 \lesssim \frac{1}{1-\left\|P_{\Ncal_1} P_{\Ncal_2}\right\|_2}\left\{\frac{\delta_{\max}(n^{1/2}+r_{\max}^{1/2})}{\sigma_{r}(X)} + \frac{\delta_{\max}^2(n^{1/2}+p_{\max}^{1/2})(p_{\max}^{1/2}+r_{\max}^{1/2})}{\sigma_{r}(X)^2}\right\}.
 \end{aligned}
\end{eqnarray*}}  
\end{proof}

%We now use another set of arguments to arrive at a different bound for $\|\widehat U_{\perp}^{\top}U\|_2$. By definition we have
%\begin{align*}
%Y\widehat V = \widehat U\widehat \Sigma_1 \Rightarrow U_{\perp}^{\top}(X + Z)\widehat V = U_{\perp}^{\top}\widehat U\widehat \Sigma_1\Rightarrow U_{\perp}^{\top}Z\widehat V \widehat \Sigma_1^{-1} = U_{\perp}^{\top}\widehat U.
%\end{align*}
%Therefore, by the sub-multipicative property of the spectral norm:
%\begin{align*}
%\|U_{\perp}^{\top}\widehat U\|_2 = \|U_{\perp}^{\top}Z\widehat V \widehat \Sigma_1^{-1}\|_2 \leq \|U_{\perp}^{\top}Z\widehat V\|_2\|\widehat \Sigma_1^{-1}\|_2 \leq \frac{\|Z\widehat V\|_2}{\sigma_{r}(\widehat \Sigma_1)}.
%\end{align*}
%By Weyl's inequality, $|\sigma_r(\widehat \Sigma_1) - \sigma_r(\Sigma_1)|\leq \|Z\|_2$. 

\subsection{Algorithm implementation}
\label{sec:algo-appendix}
We provide the full implementation of our method in Algorithm~\ref{algo:main}. Recall from Section~\ref{sec:estimation}, given the observed data matrices \( (Y_1, Y_2) \), we first obtain estimates of \( \text{col}(\widehat{X}_1) \) and \( \text{col}(\widehat{X}_2) \) following \citet{gavish2014optimal}. Using these estimates, we construct \( \widehat{M} = P_{\text{col}(\widehat{X}_1)} P_{\text{col}(\widehat{X}_2)} \). We then estimate the rank of the joint subspace \( \widehat{r}_{\mathcal{J}} \) by counting the number of singular values in the spectrum of \( \widehat{M} \) that exceed \( (1 - \widehat{\varepsilon}_1) \vee \lambda_{+}^{1/2} \), where \( \widehat{\varepsilon}_1 \) and \( \lambda_{+} \) are defined in equations \eqref{eqn:epsilon_estimates} and \eqref{eqn:noise_bound}, respectively. The joint and individual subspaces are estimated according to equation \eqref{eqn:estimated_subspaces}.

\FloatBarrier
\begin{algorithm}[!t]
\SetAlgoLined
\KwIn{Data matrices from two views $Y_1 \in \mathbb{R}^{n \times p_1}$ and $Y_2 \in \mathbb{R}^{n \times p_2}$.}

\begin{enumerate}
    \item \textbf{Estimate marginal view signals} $\widehat{X}_k$ by truncated singular value decomposition of $Y_k$ with rank as in \citet{gavish2014optimal}
\\
    \item \textbf{Compute perturbation bound $\widehat \varepsilon_1$} by using rotational bootstrap in \eqref{eqn:epsilon_estimates}.
\\
      \item \textbf{Compute noise bound $\lambda_+$} based on~\eqref{eqn:noise_bound} with $q_k = \rank(\widehat{X}_k)/n$.
\\
    \item \textbf{Estimate joint rank}  $\widehat{r}_{\Jcal} = |\{ \sigma(\widehat{M}) > \lambda_+^{1/2} \vee (1-\widehat{\varepsilon}_1)\} |$ with
   $\widehat{M} = P_{\col(\widehat{X}_1)} P_{\col(\widehat{X}_2)}$
\\
    \item \textbf{Estimate joint subspace} $\widehat{\Jcal}$ as the span of the first $\widehat{r}_{\Jcal}$ singular vectors of $\widehat{S} =(\widehat{M}+\widehat{M}^\top)/2$
\\
    \item \textbf{Estimate individual subspaces} 
$\widehat{\Ical}_k$ as span of the first left $\rank(\widehat{X}_k)-\widehat{r}_{\Jcal}$ singular vectors of $P_{\col(\widehat{X}_k)}(I-P_{\widehat{\Jcal}})$.
\end{enumerate}
\KwOut{Joint subspace $\widehat\Jcal$ and individual subspaces $\widehat\Ical_1$ and $\widehat\Ical_2$; a histogram of the spectrum of $\widehat{M}$ overlaid with $\lambda_+^{1/2}$, $1-\widehat{\varepsilon}_1$ and the distribution $f(\lambda)$ in \eqref{eqn:noise_bound}.}
\caption{Algorithmic implementation of our method}
\label{algo:main}
\end{algorithm}
\FloatBarrier

\subsection{Extension to multi-view case}
\label{sec:multiview}
For the case of $k > 2$, we establish the following lemma on the identifiability, motivated by \citet{feng2018angle}. The main difference from the previous Lemma~\ref{lemma:identifiability} is that for the multi-view case, we further assume absence of the partially shared strctures. That is the joint subspace is a shared between all the view. 

\begin{lemma}[Identifiability] 
\label{lemma:identifiability-multiview}
Given a set of subspaces $\{\col(X_1), \dots, \col(X_K)\}$, there is a unique set of subspaces $\{\Jcal, \Ical_1, \dots, \Ical_K\}$ such that:
\begin{enumerate}
    \item $\col(X_k) = \Jcal \oplus \Ical$ with $\Jcal \subseteq \col(X_k)$
    \item $\Jcal \perp \Ical_k$
    \item $\bigcap\Ical_k = \{\mathbf{0}\}$.
\end{enumerate}
\end{lemma}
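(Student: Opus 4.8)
The plan is to write down the joint and individual subspaces explicitly and then verify the axioms. I would set $\Jcal := \bigcap_{k=1}^K \col(X_k)$ and $\Ical_k := \col(X_k) \cap \Jcal^\perp$. Existence is then almost immediate. Since $\Jcal \subseteq \col(X_k)$ by construction, decomposing $\col(X_k)$ orthogonally relative to its own subspace $\Jcal$ gives $\col(X_k) = \Jcal \oplus \Ical_k$ with $\Jcal \perp \Ical_k$, which are properties (1) and (2). For property (3), if $v \in \bigcap_k \Ical_k$ then $v \in \col(X_k)$ for all $k$, hence $v \in \Jcal$; but also $v \in \Ical_k \subseteq \Jcal^\perp$, so $v \in \Jcal \cap \Jcal^\perp = \{\mathbf{0}\}$.

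The real content is uniqueness, and here the argument parallels the two-view case (Lemma~\ref{lemma:identifiability}). Suppose $\{\Jcal', \Ical_1', \dots, \Ical_K'\}$ also satisfies (1)--(3). From (1), $\Jcal' \subseteq \col(X_k)$ for every $k$, so $\Jcal' \subseteq \bigcap_k \col(X_k) = \Jcal$. For the reverse inclusion, the key observation is that (1) and (2) together make $\col(X_k) = \Jcal' \oplus \Ical_k'$ an \emph{orthogonal} direct sum, so for any $v \in \col(X_k)$ its $\Jcal'$-component equals $P_{\Jcal'} v$, a quantity that does not depend on $k$. Hence for $v \in \Jcal = \bigcap_k \col(X_k)$, the residual $v - P_{\Jcal'} v$ lies in $\Ical_k'$ for every $k$, so it lies in $\bigcap_k \Ical_k' = \{\mathbf{0}\}$ by (3); therefore $v = P_{\Jcal'} v \in \Jcal'$. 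This gives $\Jcal = \Jcal'$, after which (1)--(2) force $\Ical_k' = \col(X_k) \cap (\Jcal')^\perp = \col(X_k) \cap \Jcal^\perp = \Ical_k$, finishing the proof.

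I expect the reverse inclusion $\Jcal \subseteq \Jcal'$ in the uniqueness step to be the only place requiring thought; everything else is routine manipulation of orthogonal complements. The crucial point is that property (2) cannot be dropped: it is what turns the abstract direct sum in (1) into an orthogonal projection decomposition with a canonical, $k$-independent joint component, and without it one could non-orthogonally tilt the $\Ical_k'$ and break uniqueness, so (2) must be used essentially at this step. I would also double-check the two elementary linear-algebra facts invoked — that $\mathcal{U} \subseteq \mathcal{W}$ implies $\mathcal{W} = \mathcal{U} \oplus (\mathcal{W} \cap \mathcal{U}^\perp)$, and that an orthogonal direct sum $\mathcal{W} = \mathcal{U} \oplus \mathcal{V}$ forces $\mathcal{V} = \mathcal{W} \cap \mathcal{U}^\perp$ — but these are standard, so the overall proof should be short and mirror the structure of the proof of Lemma~\ref{lemma:identifiability}.
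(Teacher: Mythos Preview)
The paper does not actually prove this lemma; it is stated in the multi-view appendix with attribution to \citet{feng2018angle} as motivation, but no argument is given. Your proof is correct and complete: the construction $\Jcal = \bigcap_k \col(X_k)$ and $\Ical_k = \col(X_k)\cap\Jcal^\perp$ handles existence, and your uniqueness argument---using that orthogonality in (2) forces the $\Jcal'$-component of any $v\in\col(X_k)$ to equal the ambient projection $P_{\Jcal'}v$, which is $k$-independent, so that (3) drives the residual to zero---is exactly the right idea and is the natural proof one would expect the cited references to contain.
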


With the above in mind, we propose to identify the joint rank by testing each pair of views separately and setting the joint rank as the the minimum pairwise estimated joint rank. Specifically, we set $\dim \widehat{\Jcal} = \min_{1\leq i<j \leq k} \dim \widehat \Jcal_{ij}$, where $\widehat\Jcal_{ij}$ is the joint identified by considering views $i$ and $j$ respectively. To understand why we take the minimum, we consider the following example. Suppose $\dim\widehat{J}_{12} > \dim\widehat{J}_{13}$. The directions that have been identified for each pair are above the strict noise bound and the bootstrap bound. This means that we can rule out the possibility that some directions in $\widehat{\Jcal}_{12}$ or $\widehat{\Jcal}_{13}$ are the result of pure noise. This means the estimate $\dim\widehat{J}_{12}$ includes highly aligned signal directions between the two views. In general, our bootstrap bound always overestimates the true bound for the joint as we highlight in Section~\ref{sec:add_experiments}. Therefore, we are confident that this aligned directions in $\widehat{J}_{12}$ must correspond to the highly rotated individual subspaces, and not the corrupted joint signal. Therefore, by the above logic, we must exclude these directions from our final estimate of the joint. 

We estimate the joint subspace by taking the top $\dim \widehat{\Jcal}$ singular vectors of the following symmetric product of projections $\frac{1}{|\Pcal|}\sum_{(i_1, i_2, \dots, i_k) \in \Pcal} P_{\col(\widehat{X}_{i_1})}P_{\col(\widehat{X}_{i_2})}\dots P_{\col(\widehat{X}_{i_k})}$, where $\Pcal$ is the set of all permutations of $\{1,2,\dots, k\}$. In Theorem~\ref{thm:subspace_est_multiview}, we show that a similar subspace estimation error bound holds for the multi-view case as in the case of two views. 

\begin{theorem}[Estimation Error for $K>2$]
\label{thm:subspace_est_multiview}
Suppose our estimate of the joint rank is correct, i.e., $\dim \widehat{\Jcal} = \dim \Jcal$. Define $W_{i_j}(\Scal) = P_{\col(X_{i_j})} \Id[i_j\in \Scal] + \Delta_{i_j}\Id[i_j\not\in \Scal]$ and  {$R_{\Ical_k} := P_{\Jcal} \Delta_k - \Delta_{\Jcal} P_{\col(X_k)} + \Delta_{\Jcal} \Delta_k$, where $\Delta_{\Jcal} :=  P_{\Jcal} - P_{\widehat{\Jcal}}$} and $\Delta_k := P_{{\col (X_k)}} - P_{\col (\widehat{X}_k)}$ .
Additionally, let $\Pcal$ be the set of all permutations of $\{1,, 2, \dots, K\}$. Then, the deviation of the estimated joint subspace from the true joint subspace is bounded by:
$$\|P_{\Jcal} - P_{\widehat{\Jcal}} \|_2 \leq \frac{{2} \left\|\sum_{\Scal \subset \{1, 2, \dots, K\}} \prod_{j=1}^K W_{i_j}(\Scal) \right\|_2}{|\Pcal| \left(1-\frac{1}{|\Pcal|} \sum_{(i_1, i_2, \dots, i_k)\in \Pcal} \left\| P_{\Ical_{i_1}}P_{\Ical_{i_2}}\dots P_{\Ical_{i_k}}\right\|_2\right)}.$$
Furthermore, if the rank of each subspace view is correctly estimated, i.e. $\rank(\widehat{X}_k) = \rank(X_k))$, then, the deviation of the estimated individual subspace from the true individual subspace is bounded by: 
$$\|  P_{\Ical_k} - P_{\widehat{\Ical}_k}\|_2 \leq  {2} \|R_{\Ical_k}\|_2.$$
\end{theorem}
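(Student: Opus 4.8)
The plan is to mirror the two-view argument of Theorem~\ref{thm:subspace_est}, replacing the symmetrized product $\widehat S$ by the permutation-averaged product $\widehat S_K := \tfrac{1}{|\Pcal|}\sum_{(i_1,\dots,i_K)\in\Pcal} P_{\col(\widehat X_{i_1})}\cdots P_{\col(\widehat X_{i_K})}$ and its population analogue $S_K := \tfrac{1}{|\Pcal|}\sum_{(i_1,\dots,i_K)\in\Pcal} P_{\col(X_{i_1})}\cdots P_{\col(X_{i_K})}$. Both matrices are symmetric, since reversing a permutation transposes its product and $\pi\mapsto\bar\pi$ is a bijection of $\Pcal$; this lets us invoke the Davis--Kahan bound of Lemma~\ref{lemma_kahan} directly. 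The two things to establish for the joint part are: (i) an eigengap for $S_K$ at position $\dim\Jcal$, and (ii) a bound on $\|S_K - \widehat S_K\|_2$ in terms of the $\Delta_k$; the individual-subspace bound is then essentially verbatim from the two-view case, given that the joint rank is assumed correct.

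For (i), Lemma~\ref{lemma:identifiability-multiview} gives $\col(X_k)=\Jcal\oplus\Ical_k$ with $\Jcal\perp\Ical_k$, so $P_{\col(X_k)}=P_\Jcal+P_{\Ical_k}$. Expanding $\prod_{j=1}^K(P_\Jcal+P_{\Ical_{i_j}})$, every mixed term contains an adjacent factor $P_\Jcal P_{\Ical_{i_j}}$ or $P_{\Ical_{i_j}}P_\Jcal$, both zero since $\Jcal\perp\Ical_{i_j}$; hence the product collapses to $P_\Jcal+P_{\Ical_{i_1}}\cdots P_{\Ical_{i_K}}$, and averaging gives $S_K=P_\Jcal+T$ with $T:=\tfrac{1}{|\Pcal|}\sum_{\pi}P_{\Ical_{\pi(1)}}\cdots P_{\Ical_{\pi(K)}}$. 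The matrix $T$ is symmetric, annihilates $\Jcal$ (the last factor $P_{\Ical_{\pi(K)}}$ kills vectors in $\Jcal$) and has range in $\Jcal^\perp$, so $S_K$ acts as the identity on $\Jcal$ and as $T$ on $\Jcal^\perp$. It remains to show $\|T\|_2<1$, for which it suffices that $\|P_{\Ical_{\pi(1)}}\cdots P_{\Ical_{\pi(K)}}\|_2<1$ for every $\pi$: if a unit vector attained norm one under this chain of contractions, then each orthogonal projection in the chain would have to preserve its norm, forcing the vector into every $\Ical_k$, i.e. into $\bigcap_k\Ical_k=\{\mathbf 0\}$ --- a contradiction. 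Thus $S_K$ has its top $\dim\Jcal$ eigenvalues equal to $1$ and all remaining ones in $(-\|T\|_2,\|T\|_2)$, so the eigengap at $\dim\Jcal$ is at least $1-\tfrac{1}{|\Pcal|}\sum_{\pi}\|P_{\Ical_{\pi(1)}}\cdots P_{\Ical_{\pi(K)}}\|_2>0$. Since $\dim\widehat\Jcal=\dim\Jcal$ by hypothesis, Lemma~\ref{lemma_kahan} with $s=\dim\Jcal$ gives $\|P_\Jcal-P_{\widehat\Jcal}\|_2=\|\sin\Theta(U_\Jcal,U_{\widehat\Jcal})\|_2\le 2\|S_K-\widehat S_K\|_2\big/\big(1-\tfrac{1}{|\Pcal|}\sum_{\pi}\|P_{\Ical_{\pi(1)}}\cdots P_{\Ical_{\pi(K)}}\|_2\big)$.

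For (ii), substitute $P_{\col(\widehat X_{\pi(j)})}=P_{\col(X_{\pi(j)})}-\Delta_{\pi(j)}$ into each product in $\widehat S_K$ and expand over subsets $\Scal$ of positions retaining a $P_{\col(X)}$ factor; the term $\Scal=\{1,\dots,K\}$ (no $\Delta$'s) reproduces $S_K$ and cancels, leaving $\widehat S_K-S_K=\tfrac{1}{|\Pcal|}\sum_{\pi}\sum_{\Scal\subsetneq\{1,\dots,K\}}(-1)^{K-|\Scal|}\prod_{j=1}^K W_{\pi(j)}(\Scal)$ with $W$ as defined in the statement; bounding the spectral norm of this sum yields the numerator. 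For the individual part, note $P_{\Jcal^\perp}P_{\col(X_k)}=P_{\Ical_k}$ (since $\Ical_k\subseteq\Jcal^\perp$ and $\Jcal\perp\Ical_k$), which has exactly $r_{\Ical_k}$ unit eigenvalues and the rest zero, hence eigengap one; because $\dim\widehat\Jcal=\dim\Jcal$ and $\rank(\widehat X_k)=\rank(X_k)$, the estimated and true individual bases have the same number of columns, so Lemma~\ref{lemma_kahan} with $s=r_{\Ical_k}$ gives $\|P_{\Ical_k}-P_{\widehat\Ical_k}\|_2\le 2\|P_{\Jcal^\perp}P_{\col(X_k)}-P_{\widehat\Jcal^\perp}P_{\col(\widehat X_k)}\|_2$; expanding $P_{\widehat\Jcal^\perp}=P_{\Jcal^\perp}+\Delta_\Jcal$ and $P_{\col(\widehat X_k)}=P_{\col(X_k)}-\Delta_k$ identifies the right-hand side with $2\|R_{\Ical_k}\|_2$, exactly as in the proof of Theorem~\ref{thm:subspace_est}.

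The main obstacle I anticipate is the step $\|P_{\Ical_{\pi(1)}}\cdots P_{\Ical_{\pi(K)}}\|_2<1$: unlike the two-view case, $\bigcap_k\Ical_k=\{\mathbf 0\}$ does not produce a strict principal angle between any single pair of individual subspaces, so one must argue through the full chain of contractions rather than reducing to $\|P_{\Ical_a}P_{\Ical_b}\|_2<1$. The only other delicate point is the sign/subset bookkeeping in the product expansion and its matching to the $W_{i_j}(\Scal)$ notation, which is routine once the cancellation of the leading ($\Scal$ = full set) term against $S_K$ is observed; the symmetry of $\widehat S_K$ (needed to apply Lemma~\ref{lemma_kahan}) is likewise immediate from closure of $\Pcal$ under reversal.
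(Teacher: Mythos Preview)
Your proposal is correct and follows essentially the same route as the paper: establish that $S_K=P_\Jcal+\tfrac{1}{|\Pcal|}\sum_\pi P_{\Ical_{\pi(1)}}\cdots P_{\Ical_{\pi(K)}}$ has its top $\dim\Jcal$ eigenvalues equal to one and the rest strictly below, then apply the Davis--Kahan bound (Lemma~\ref{lemma_kahan}) and expand $S_K-\widehat S_K$ over subsets $\Scal$; the individual part is literally identical. You are in fact more explicit than the paper on two points it leaves implicit---the symmetry of $S_K$ via permutation reversal and the chain-of-contractions justification that $\|P_{\Ical_{\pi(1)}}\cdots P_{\Ical_{\pi(K)}}\|_2<1$ follows from $\bigcap_k\Ical_k=\{\mathbf 0\}$---and your remark on the $(-1)^{K-|\Scal|}$ signs is a genuine bookkeeping wrinkle the paper's statement glosses over.
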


\begin{proof} We adapt the proof from Theorem~\ref{thm:subspace_est} to the multi-view case. Both proofs rely on the version of the Davis-Kahan theorem of \citet{yu2015useful}.

    \noindent
    \textbf{Joint components.}
    To apply Davis-Kahan theorem, we show that there exits an eigengap in the population-level model $S = \frac{1}{|\Pcal|}\sum_{(i_1, i_2, \dots, i_k) \in \Pcal} P_{\col({X}_{i_1})}P_{\col({X}_{i_2})}\dots P_{\col({X}_{i_k})}$
    Similar to Section~\ref{sec:noiseless}, we use the orthogonality assumption $\Jcal \perp \Ical_k$ to obtain $S = P_{\Jcal} + \frac{1}{|\Pcal|}\sum_{(i_1, i_2, \dots, i_k)\in \Pcal} P_{\Ical_{i_1}}P_{\Ical_{i_2}}\dots P_{\Ical_{i_k}}$. Fix an orthonormal basis, $U_{\Jcal}$, for $P_{\Jcal}$. Since $\Jcal \perp \Ical_k$, we have that $SU_{\Jcal} = U_{\Jcal}$. Hence, $S$ has at least $\dim \Jcal$ eigenvalues that are equal to $1$ with corresponding eigenvectors forming $\Jcal$. Now fix any orthonormal vector $u \in \spann U_{\Jcal}^\perp$. Then we have:
    \begin{align*}
        \|Su\|_2 &= \left\|P_{\Jcal} u + \frac{1}{|\Pcal|}\sum_{(i_1, i_2, \dots, i_k)\in \Pcal} P_{\Ical_{i_1}}P_{\Ical_{i_2}}\dots P_{\Ical_{i_k}} u\right\|_2 \\
        &\leq \frac{1}{|\Pcal|} \sum_{(i_1, i_2, \dots, i_k)\in \Pcal} \left\| P_{\Ical_{i_1}}P_{\Ical_{i_2}}\dots P_{\Ical_{i_k}}\right\|_2 & \text{(by triangle-inequality)} \\
        &< 1. & \text{(since } \bigcap \Ical_k = \{\mathbf{0}\})
    \end{align*}
    Therefore, the spectrum of $S$ becomes $\sigma_r(S) = 1$ for $1 \leq r \leq \dim\Jcal$ and $\sigma_r (S) < 1$ for $r > \dim \Jcal$. 

    From here, applying Davis-Kahan, we have that:
    \begin{equation*}
        \| \sin\Theta(U_{\Jcal}, U_{\widehat{\Jcal}}) \|_2 \leq \frac{{2}  \| S - \widehat{S}\|_2}{1-\frac{1}{|\Pcal|} \sum_{(i_1, i_2, \dots, i_k)\in \Pcal} \left\| P_{\Ical_{i_1}}P_{\Ical_{i_2}}\dots P_{\Ical_{i_k}}\right\|_2}
    \end{equation*}
    Putting everything together, we have:
    \begin{align*}
        \| P_{\Jcal} - P_{\widehat{\Jcal}} \|_2 &=  \| \sin\Theta(U_{\Jcal}, U_{\widehat{\Jcal}} ) \|_2 \\
        &\leq \frac{{2}\|  S- \widehat{S}\|_2}{1-\frac{1}{|\Pcal|} \sum_{(i_1, i_2, \dots, i_k)\in \Pcal} \left\| P_{\Ical_{i_1}}P_{\Ical_{i_2}}\dots P_{\Ical_{i_k}}\right\|_2} \\
        &\leq  \frac{{2}\left\| \sum_{\Scal \subset \{1, 2, \dots, K\}} \prod_{j=1}^K W_{i_j}(\Scal) \right\|_2}{|\Pcal| \left(1-\frac{1}{|\Pcal|} \sum_{(i_1, i_2, \dots, i_k)\in \Pcal} \left\| P_{\Ical_{i_1}}P_{\Ical_{i_2}}\dots P_{\Ical_{i_k}}\right\|_2\right)},
        \end{align*} 
        where $W_{i_j}(\Scal) = P_{\col(X_{i_j})}$ if $i_j \in \Scal$ and $W_{i_j}(\Scal) = \Delta_{i_j}$ otherwise. 

        \noindent
        \textbf{Individual components.} The case for individual components is the same as in Theorem~\ref{thm:subspace_est}.
\end{proof}

\subsection{Detailed comparison with related methods}
\label{sec:comparison_ajive}

\begin{figure}[!t]
    \centering
    \includegraphics[width=0.9\linewidth]{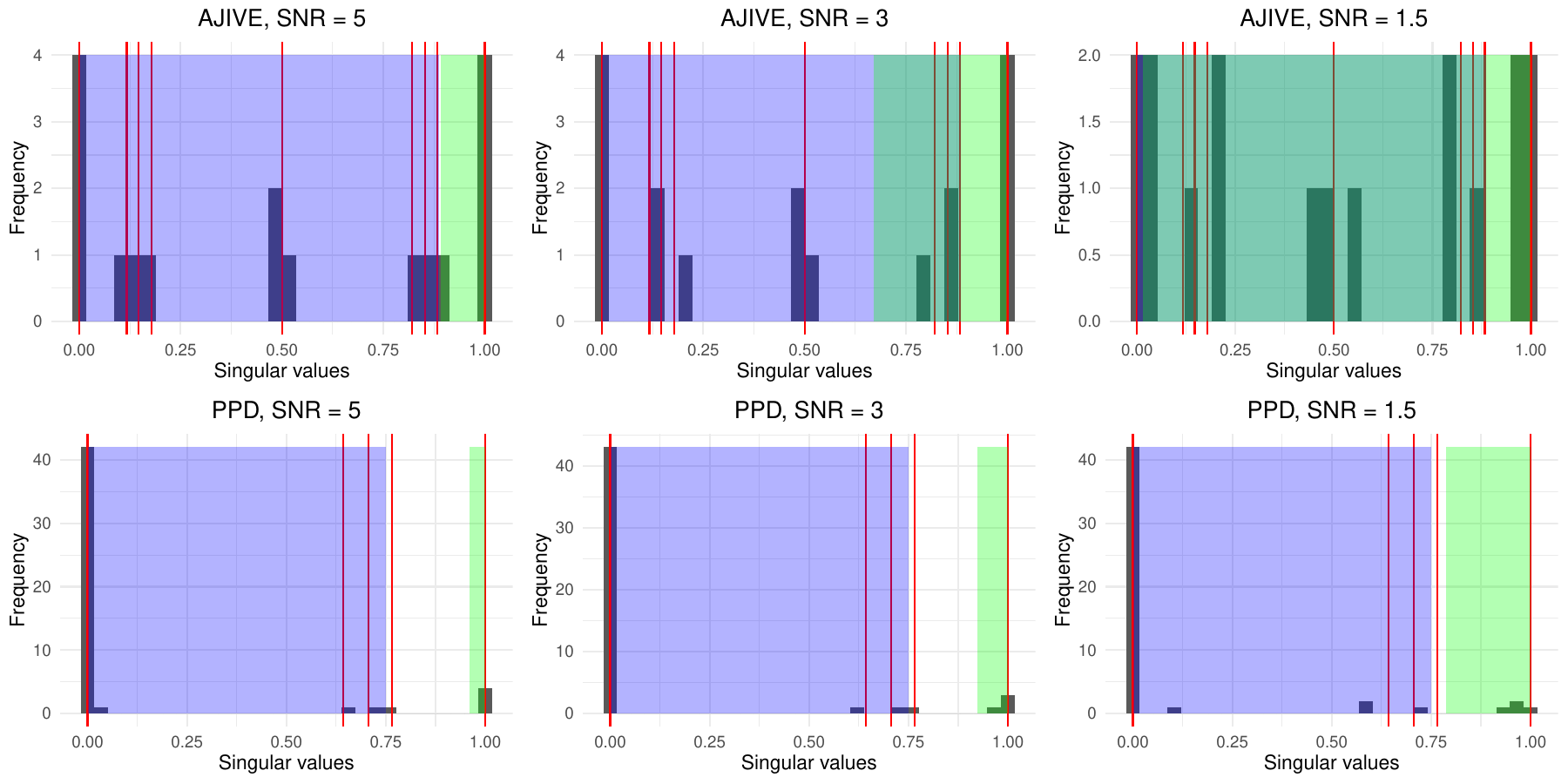}
    \caption{Simulated data of \S~\ref{sec:simulation} with angle $\phi =40^\circ$ and varying SNR. \textbf{Top row:} Spectrum of $\widehat{S}_{sum}$ analyzed by AJIVE \citep{feng2018angle}. \textcolor{blue}{Blue region} indicates the noise bound; \textcolor{green}{green region} indicates the joint Wedin bound, both computed using bootstrap. \textbf{Bottom row: } Spectrum of $\widehat{M}$ analyzed in our approach. \textcolor{blue}{Blue region} indicates the noise bound from~\eqref{eqn:noise_bound}; \textcolor{green}{green region} indicates the joint bound computed using bootstrap from~\eqref{eqn:epsilon_estimates}. For both rows, \textcolor{red}{red lines} indicate the true spectrum of $S_{sum}$ and $M$, respectively.}
    \label{fig:ppd-diagnostic-simulations}
\end{figure}

Our approach builds on the principal angle formulation by analyzing the product of projections, which directly reveals subspace alignment. This formulation was first explored for multi-view data in \citet{feng2018angle}, where the authors instead studied the sum of the projection matrices $S_{sum} = (P_{\col(X_1)} + P_{\col(X_2)})/2$ rather than the product. Applying Lemma~\ref{lemma:identifiability} to the sum gives
$$
S_{sum} = P_{\Jcal} + P_{\Ocal_1}/2 + P_{\Ocal_2}/2 + (P_{\Ncal_1} + P_{\Ncal_2})/2,
$$
which reveals that its spectrum has $\dim(\Jcal)$ eigenvalues equal to 1 with eigenvectors corresponding to joint subspace, $\dim(\Ocal_1) + \dim(\Ocal_2)$  eigenvalues equal to 1/2 with eigenvectors corresponding to orthogonal individual subspaces, and $\dim(\Ncal_1) + \dim(\Ncal_2)$ eigenvalues of the form $\{\sigma_i\}^2 = \{1 \pm \cos(\phi_i)\}/2$, where $\phi_i$ are $\dim(\Ncal_1)$ principal angles between non-orthogonal $\Ncal_1$ and $\Ncal_2$. The gap separating joint from individual subspaces is of the size $\{1-\sigma_{\max}(\Sigma)\}/2$. In contrast, $ P_{\col (X_1)} P_{\col (X_2)}$ has only $\dim(\Jcal) + \dim(\Ncal_1)$ non-zero singular values, and the gap separating joint from individual subspaces is twice larger, $\{1-\sigma_{\max}(\Sigma)\}$.

The above comparison of the population quantities directly affects the resulting estimation. Since in practice only estimates $\col(\widehat X_1)$ and $\col(\widehat X_2)$ are available, the resulting $\widehat S_{sum}$ and $P_{\col (\widehat X_1)} P_{\col (\widehat X_2)}$ both have perturbed spectrum. Since $S_{sum}$ has a smaller spectral gap between joint and individual subspaces, and a larger number of non-zero singular values unrelated to joint structure $\dim(\Ocal_1) + \dim(\Ocal_2) +  \dim(\Ncal_1) + \dim(\Ncal_2)$, $\widehat S_{sum}$ is more affected by noise perturbations than $P_{\col (\widehat X_1)} P_{\col (\widehat X_2)}$. As shown in Figure~\ref{fig:ppd-diagnostic-simulations}, the singular values from \citet{feng2018angle} do not exhibit the clustering observed in our method. 

Furthermore, estimating joint structure based on the product of projections has advantages beyond rank estimation.  To illustrate this, consider the symmetrized version
$$
\widehat{S} = (P_{\col (\widehat X_1)} P_{\col (\widehat X_2)} + P_{\col (\widehat X_2)} P_{\col (\widehat X_1)})/2.
$$
The two objects are related as $2\widehat S_{sum}^2 - \widehat S_{sum} = \widehat{S}$.
 Therefore, non-trivial eigenvectors of $\widehat{S}$ are also non-trivial eigenvectors of $\widehat S_{sum}$. However, eigenvectors of $\widehat S_{sum}$ with eigenvalue $1/2$ are in the null space of $\widehat{S}$. In principle, these eigenvectors of $\widehat S_{sum}$ correspond to the individual orthogonal directions in $\Ocal_1$ and $\Ocal_2$, or specifically $\col(\widehat{X}_1) \cap \col(\widehat{X}_2)^\perp$ or $\col(\widehat{X}_1)^\perp \cap \col(\widehat{X}_2)$. Our proposed estimate of joint structure automatically excludes such directions.

 We further provide a more detailed comparison between the proposed approach and the method of \citet{feng2018angle} using angle $\phi =40^\circ$ and varying $\mathrm{SNR}\in\{1, 3, 5\}$. One key advantage of the proposed algorithm is the diagnostic plot, which could be used to assess the quality of joint rank estimation in practice. 
The bottom row of Figure~\ref{fig:ppd-diagnostic-simulations} illustrates these diagnostic plots by showing the spectrum of $P_{\col(\widehat{X}_1)}P_{\col(\widehat{X}_2)}$ together with the proposed bootstrap bound  $1-\widehat{\varepsilon}_1$ (obtained in step 2 of Algorithm~\ref{algo:main}) and noise bound $\lambda_+$ (obtained in step 3 of Algorithm~\ref{algo:main}). Observe that singular values cluster into groups corresponding to joint, non-orthogonal individual and remaining directions. In the high signal-to-noise ratio, our bootstrap bound for joint directions effectively identifies the corresponding top cluster, leading to the correct estimate of the joint rank as 4.
Furthermore, the noise bound and the bootstrap bound do not overlap. In contrast, when signal-to-noise is low, the bootstrap bound becomes loose due to the increased error in subspace estimation (hence inflated $\Delta_k$ in~\eqref{eqn:epsilon_estimates}). Here, the noise bound compensates by filtering out rotated directions since their deteriorated alignment disqualifies them from being included as a joint signal.  In comparison, the top row of Figure~\ref{fig:ppd-diagnostic-simulations} illustrates the spectrum of $P_{\col(\widehat{X}_1)} + P_{\col(\widehat{X}_2)}$ used by AJIVE \citep{feng2018angle}.  The lack of clustering in the spectrum makes the separation of joint directions more challenging. Like our approach, \citet{feng2018angle} uses the maximum of two bounds to determine joint rank: bootstrap estimate of the Wedin bound to identify the joint structure and bootstrap bound to filter out the noise directions. Figure~\ref{fig:ppd-diagnostic-simulations} shows that our joint bound is tighter since the noise bound of \citet{feng2018angle} almost always dominates the joint bound. We believe that the tightness of our joint bound is due to taking explicit advantage of perturbation $\varepsilon_1$ in Theorem~\ref{theorem:main} when performing bootstrap. 
In terms of noise bound, the two approaches agree in values, differing by at most $2\%$ on average across simulations, but the main difference is that our bound $\lambda_+$ is analytical based on random matrix theory whereas \citet{feng2018angle} estimate it using the bootstrap. Overall, Figure~\ref{fig:ppd-diagnostic-simulations} suggests that improved performance of proposed method is due to a more clear separation of joint directions in the spectrum of $P_{\col(\widehat{X}_1)}P_{\col(\widehat{X}_2)}$ compared to the spectrum of $P_{\col(\widehat{X}_1)} + P_{\col(\widehat{X}_2)}$, as well as tighter bound that we use to identify this separation in practice.

% \appendixtwo
\section{Additional experiments}
\label{sec:add_experiments}

\subsection{Simulations on synthetic data for more than two views} 

\begin{table}[t]
\centering
\caption{Quality of the joint and individual subspace estimates in terms of the metric \eqref{eqn:fdr}, where larger values indicate better performance.}
\scalebox{0.8}{
\begin{tabular}{lcccccccccccc}
& \multicolumn{4}{c}{Under-specified rank} & \multicolumn{4}{c}{Estimated rank} & \multicolumn{4}{c}{Over-specified rank} \\

 &  \multicolumn{2}{c}{SNR = 2} & \multicolumn{2}{c}{SNR=0.5} & \multicolumn{2}{c}{SNR = 2} & 
 
 \multicolumn{2}{c}{SNR=0.5} & \multicolumn{2}{c}{SNR = 2} & \multicolumn{2}{c}{SNR=0.5}  \\
 & $90^{\circ}$ & $30^{\circ}$ & $90^{\circ}$ & $30^{\circ}$ & $90^{\circ}$ & $30^{\circ}$ & $90^{\circ}$ & $30^{\circ}$ & $90^{\circ}$ & $30^{\circ}$ & $90^{\circ}$ & $30^{\circ}$  \\

JIV & -- & -- & -- & -- & 9.08 & 7.49 & \textbf{4.81} & \textbf{4.55} & -- & -- & -- & -- \\ \\
AJI & \textbf{8.67} & \textbf{7.77} & \textbf{6.04} & \textbf{5.69} & \underline{9.79} & \underline{8.45} & 3.81 & 3.63 & \underline{9.08} & \underline{7.63} & \textbf{6.21} & \textbf{5.91} \\
SLI$^*$ & -- & -- & -- & -- & 9.47 & 7.43 & 1.17 & 1.09 & -- & -- & -- & -- \\
UNI$^*$ & -- & -- & -- & -- & 9.04 & 8.08 & \underline{4.32} & \underline{3.92} & -- & -- & -- & -- \\
PPD & \underline{8.03} & \underline{7.41} & \underline{5.43} & \underline{5.44} & \textbf{9.83} & \textbf{9.75} & 3.70 & 3.64 & \textbf{9.12} & \textbf{9.07} & \underline{5.67} & \underline{5.57} \\
\end{tabular}
}
% \begin{tabnote}
% 	All values are multiplied by $10$. The results are averaged across 100 iterations. Maximum standard error of the mean is $0.03$. We highlight the best result in \textbf{bold} and \underline{underline} second best result. We denote by ($^*$) methods that do not allow supplying marginal ranks for each view. We denote \citet{lock2013joint} as JIV, \citet{feng2018angle} as AJI, \citet{gaynanova2019structural} as SLI, \citet{park2020integrative} as UNI, and our method as PPD.
% \end{tabnote}
\label{table:simulations-multiview}
\end{table}

Similar to Section~\ref{sec:simulation}, we run simulations for the case of 3 views under various rank corruption, signal-to-noise ratio, and angles. We set the joint ranks to be $\rank(J_k) = 3$, and the ranks of the individual subspaces to $\rank (I_1), \rank(I_2), \rank(I_3)$. We take the matrix dimensions to be $n=35$ and $p_1 = 40, p_2=45, p_3=50$. We report our results in Table~\ref{table:simulations-multiview}. From the table, we see that our method consistently performs in the top among other methods. In addition, we note that the proposed approach offers strong performance even under the more difficult conditions such as low signal-to-noise ratio and highly rotated angles. 

\subsection{Ablation study for bootstrap}
\label{sec:bootstrap-ablation}
\begin{figure}[!t]
    \centering
    \includegraphics[width=\linewidth]{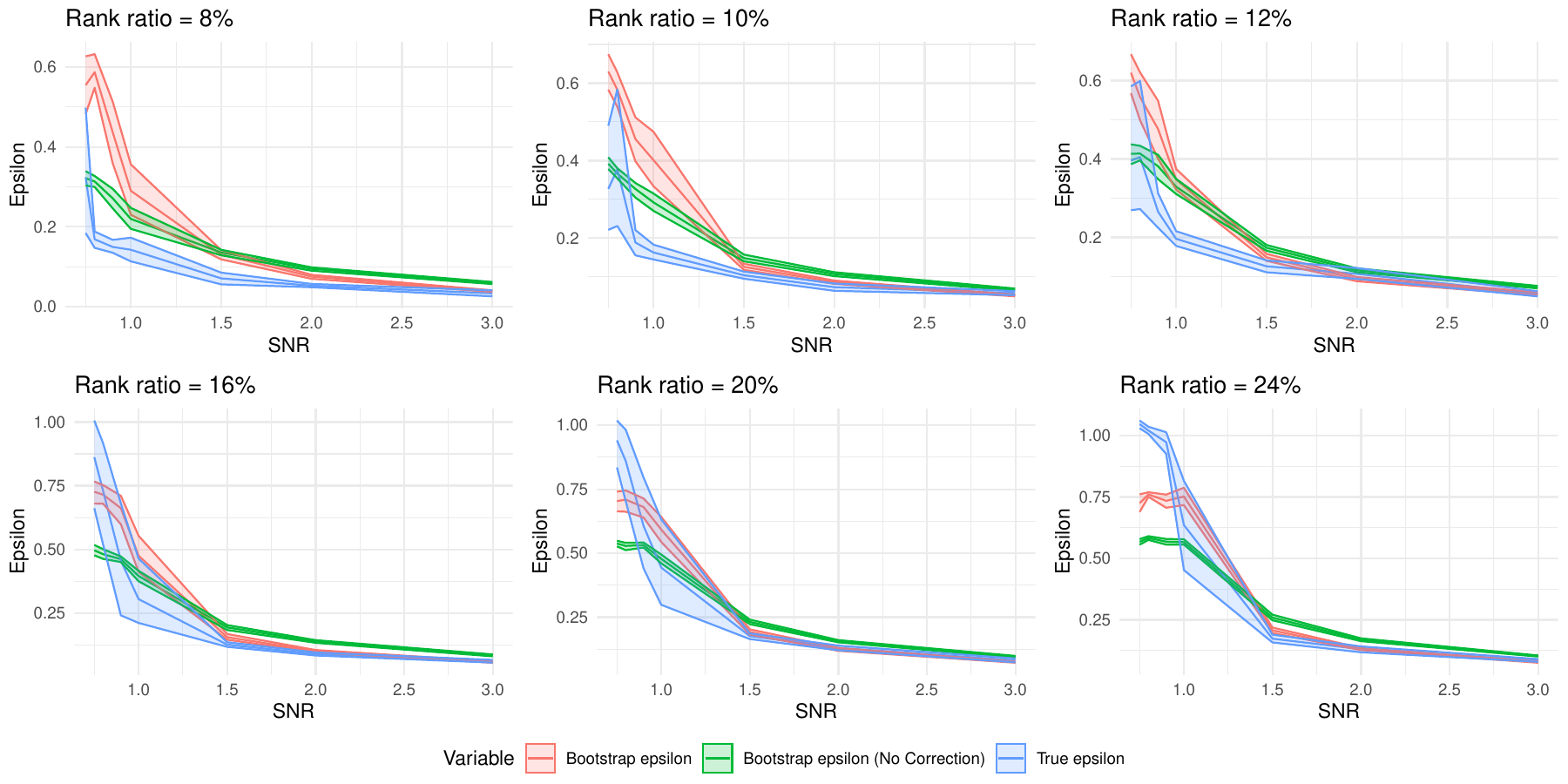}
    \caption{Estimate of $\varepsilon_1$ computed with and without our modification, aligning the view-specific bases. We simulate the data as in Section~\ref{sec:simulation}, fix the angle between individual subspaces at $60^\circ$, while varying the rank-to-dimension and signal-to-noise ratios. We produce the confidence intervals based on 10 re-runs.}
    \label{fig:bootstrap-comparison-naive}
\end{figure}
We modify the bootstrap method proposed in \citet{prothero2024data} to better align the resampled singular vector bases with the observed data. Specifically, given orthogonal resampled bases \( U_{1,(b)} \) and \( U_{2,(b)} \), we adjust \( U_{2,(b)} \) to align with \( U_{1,(b)} \) using the formula:$
U_{2,(b)} \leftarrow U_{1,(b)} \cos (\Sigma_{\widehat{M}}) + U_{2,(b)} \sin (\Sigma_{\widehat{M}}),$
where \( \Sigma_{\widehat{M}} \) is a diagonal matrix containing the principal angles between the column spaces \( \operatorname{col}(\widehat{X}_1) \) and \( \operatorname{col}(\widehat{X}_2) \). This alignment ensures that the resampled bases reflect the geometric relationship observed in the data, which is crucial for accurate estimation of \( \varepsilon_1 \) and $\varepsilon_2$. Without this correction, a straightforward approach would be to randomly sample \( U_{1,(b)} \) and \( U_{2,(b)} \) from the Haar measure. However, as we argue in Section~\ref{sec:estimation}, such an approach tends to underestimate \( \varepsilon_1 \) and $\varepsilon_2$, particularly when the rank-to-dimension ratio is low. In this regime, randomly sampled bases are likely to be nearly orthogonal, which does not capture the true alignment between the column spaces of \( \widehat{X}_1 \) and \( \widehat{X}_2 \).

We investigate the effect of our correction in Figure~\ref{fig:bootstrap-comparison-naive} for $\varepsilon_1$ with similar conclusions holding for $\varepsilon_2$. To generate the figure, we simulate data following the procedure in Section~\ref{sec:simulation}, fixing the angle between individuals at \( 60^\circ \) while varying the rank-to-dimension and signal-to-noise ratios. The 95\% confidence intervals are computed using 10 re-runs.

From the figure, we observe that our corrected bound provides better control over the true \( \varepsilon_1 \), offering more accurate coverage in the low signal-to-noise ratio regime and tighter bounds in the high signal-to-noise ratio regime. Recall that $\varepsilon_1 := \left\| P_{\operatorname{col}(X_1)} \left( \Delta_1 + \Delta_2 + \Delta_1 \Delta_2 \right) P_{\operatorname{col}(X_2)} \right\|_2$. The projection operators \( P_{\operatorname{col}(X_1)} \) and \( P_{\operatorname{col}(X_2)} \) reduce the estimate by filtering out components of the error terms that lie within the column spaces of \( X_1 \) and \( X_2 \), respectively. Recall from Figure~\ref{fig:noisespectrum}, for random projections \( P_{\operatorname{col}(X_1)} \) and \( P_{\operatorname{col}(X_2)} \), the spectral density depends on the rank-to-dimension ratios: the bulk of singular values is around zero, with the tail extending towards one as the rank-to-dimension ratio increases. Hence, the naive estimate tends to underestimate \( \varepsilon_1 \), especially when the rank-to-dimension ratio is low, because the resampled projections \( P_{\operatorname{col}(X_1)} \) and \( P_{\operatorname{col}(X_2)} \) are nearly orthogonal, causing more directions to be filtered out. This effect is more pronounced for low signal-to-noise ratios, as the error terms $\Delta_k$ are larger. We note that our corrected bound tends to be tighter in the high signal-to-noise ratio regime, and we leave a detailed investigation of this phenomenon for future work.

\subsection{Empirical error of the bootstrap}
\label{sec:bootstrap-error}
\begin{figure}[!t]
    \centering
    \includegraphics[width=0.9\linewidth]{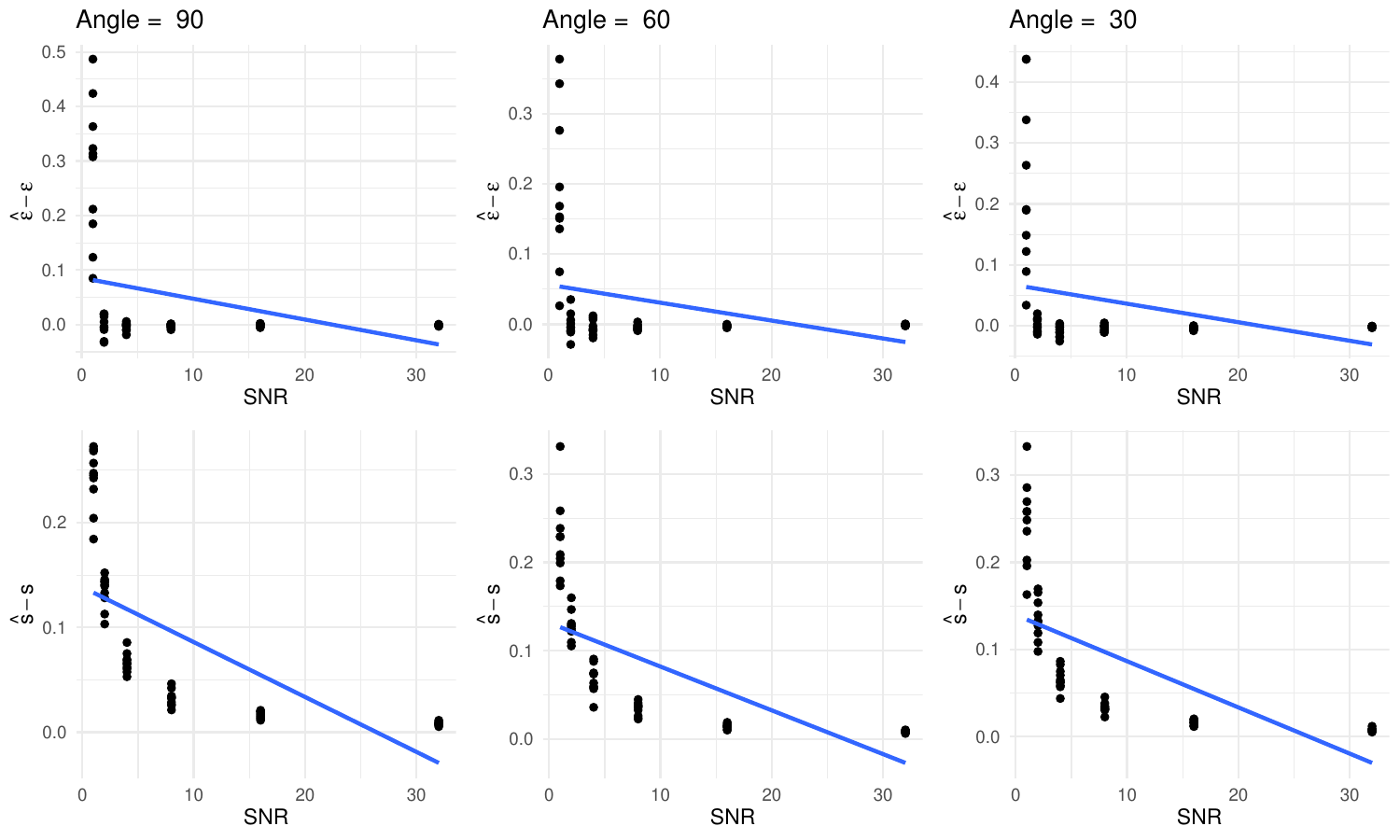}
    \caption{\textbf{Top row: }Error in the bootstrap estimate of $\epsilon_1$. \textbf{Bottom row:} error in the noise variance estimation by the robust estimator $y_{\text{med}} / \mu_\beta$.}
    \label{fig:bootstrap-error}
\end{figure}

We simulate the data following Section~\ref{sec:simulation}, setting. We run our bootstrap model for a range of signal-to-noise ratios and angles, supplying the model with the true marginal ranks. We report the difference $\hat \epsilon_1 - \epsilon_1$ in Figure~\ref{fig:bootstrap-error}. From the figure, we see that our bootstrap estimator tends to be accurate, except for the low signal-to-noise ratios for which it overestimates the true parameter. We investigate this issue and find it related to the overestimation of the noise variance by the robust estimator $y_{\text{med}} / \mu_\beta$ with $y_{\text{med}}, \mu_\beta$ denoting the median singular value of $Y$ and the median of the Marchenko-Pastur distribution with parameter $\beta$. We show the error in the variance estimation in Figure~\ref{fig:bootstrap-error}. Recall that in our bootstrap as described in Section~\ref{subsubsec:bootstrap}, we impute the signal directions back into the noise estimate $\check{E}_k$, which are scaled by the noise variance. As our estimate of the noise variance overestimates the truth, correspondingly our noise estimate becomes inflated, which finally leads to the inflated $\widehat{\varepsilon}_1$ estimate. We leave the improvement of our bootstrap estimator in the low signal-to-noise ratio regime as future work.

\subsection{Asymptotic distribution of spectrum of random projections}
\label{sec:noise-plot-appendix}

\begin{figure}[!t]
    \centering\includegraphics[width=0.9\linewidth]{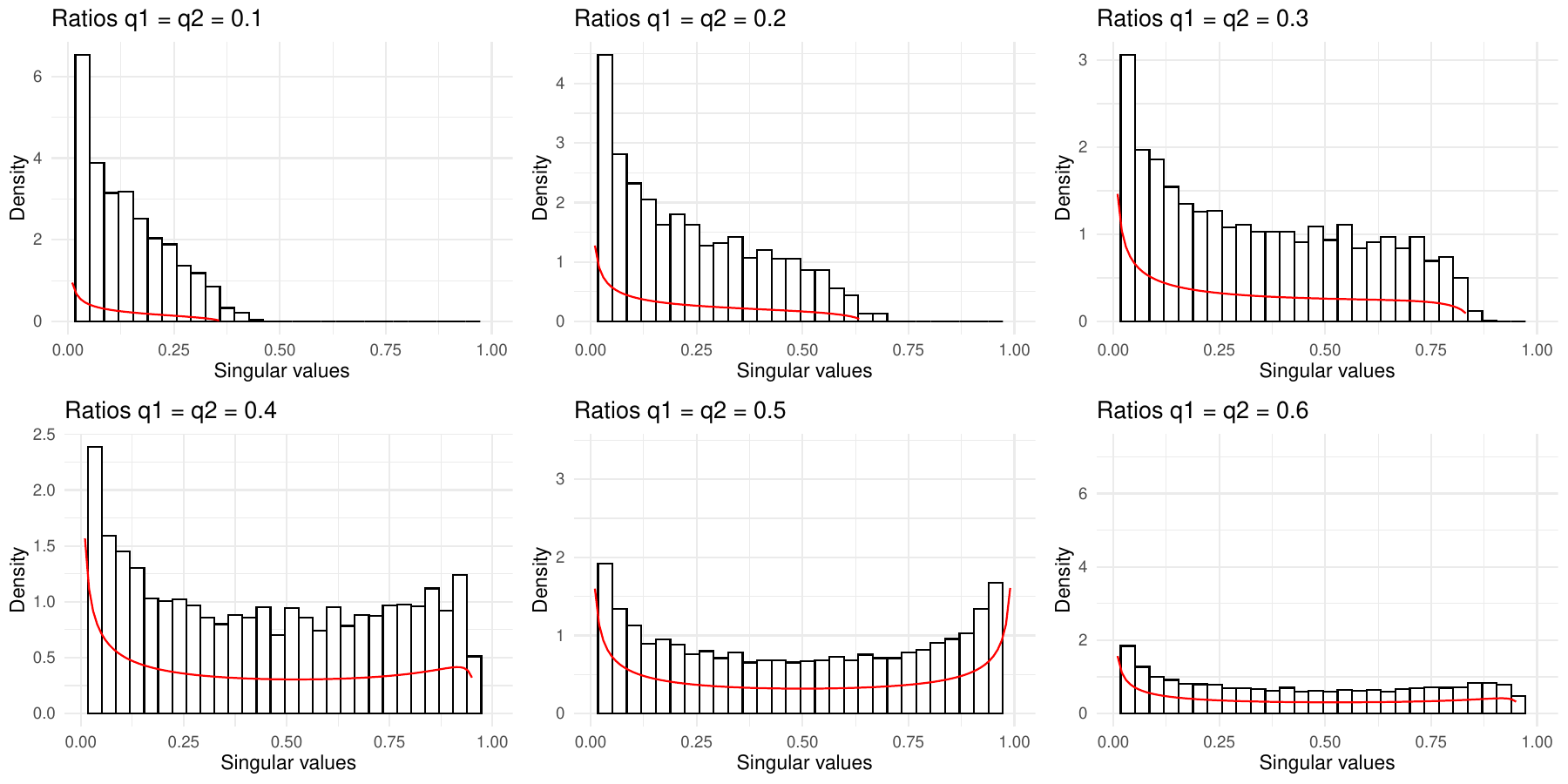}
    \caption{Asymptotic distribution of the spectrum of the product of two random projections, parametrized by the subspace-to-ambient dimension ratios $q_1=r_1/n$ and $q_2=r_2/n$.}
    \label{fig:noisespectrum}
\end{figure}

To simulate the spectrum of two random projection matrices, we sample orthonormal bases \( U_1 \) and \( U_2 \) of ranks $r_1$ and $r_2$ from the Haar measure. We set the ambient dimension to \( n = 100 \). We then compute the squares of the singular values of the matrix \( U_1^\top U_2 \). In Figure~\ref{fig:noisespectrum}, we present the histogram of these squares, $\lambda$, overlaid with the continuous part of the asymptotic density provided in Equation~\eqref{eqn:noise_bound}.

\end{document}